\documentclass[11pt]{article}
\usepackage[small,compact]{titlesec}
\usepackage{amsmath,amssymb,amsfonts,mathabx,setspace,bm} %% bm for using \bm{rho} to get bold symbols; 
\usepackage{graphicx,caption,epsfig,subfigure,epsfig,wrapfig}
\usepackage{url,verbatim,algorithmic}
\usepackage[table]{xcolor}
\DeclareRobustCommand{\best}[1]{{\color{green!40!black}\ifmmode\boldsymbol{#1}\else\bfseries\boldmath #1\fi}}
\usepackage[sort,compress]{cite}
\usepackage{multirow}
\usepackage{enumitem}
\setlist[itemize]{leftmargin=1.5em}
\usepackage{booktabs}
\usepackage{longtable}
\usepackage{pdflscape}
\usepackage[section]{placeins}  % keep floats within their section
\usepackage[ruled,boxed]{algorithm}
\usepackage[scaled]{helvet}
\usepackage[T1]{fontenc}
\usepackage[bookmarks=true, bookmarksnumbered=true, colorlinks=true,   pdfstartview=FitV,
linkcolor=blue, citecolor=blue, urlcolor=blue]{hyperref}

\usepackage{fullpage}
\def\R{\mathbb{R}}

\newcommand{\E}[1]{\mathbb{E}\left[{#1}\right]}

\newcommand{\innerp}[1]{\langle{#1}\rangle}

\def\calE{\mathcal{E}}
\def\bX{\mathbf{X}}        

\def\bW{\mathbf{W}}

\definecolor{dgreen}{RGB}{0,153,76}

\newtheorem{theorem}{Theorem}

\newtheorem{assumption}[theorem]{Assumption}

\newtheorem{corollary}[theorem]{Corollary}

\newtheorem{lemma}[theorem]{Lemma}
\newtheorem{proposition}[theorem]{Proposition}
\newtheorem{remark}[theorem]{Remark}
\newenvironment{proof}[1][Proof]{\noindent\textbf{#1.} }{\ \rule{0.5em}{0.5em}}

\numberwithin{equation}{section} % equation number within section
\numberwithin{theorem}{section}  % theorem number within section

\begin{document}
\title{\Large Learning interacting particle systems from unlabeled data}

\author{
Viska Wei\thanks{Department of Applied Mathematics and Statistics and Department of Physics and Astronomy, Johns Hopkins University, Baltimore, MD 21218, USA. Email: \texttt{swei20@jhu.edu}} \and
Fei Lu\thanks{Department of Mathematics, Johns Hopkins University, Baltimore, MD 21218, USA. Email: \texttt{feilu@math.jhu.edu}}
}

\date{}
\maketitle
\thispagestyle{empty}
\vspace{-8mm}
\begin{abstract}
Learning the potentials of interacting particle systems is a fundamental task across various scientific disciplines. A major challenge is that unlabeled data collected at discrete time points lack trajectory information due to limitations in data collection methods or privacy constraints. We address this challenge by introducing a trajectory-free self-test loss function that leverages the weak-form stochastic evolution equation of the empirical distribution. The loss function is quadratic in potentials, supporting parametric and nonparametric regression algorithms for robust estimation that scale to large, high-dimensional systems with big data. Systematic numerical tests show that our method outperforms baseline methods that regress on trajectories recovered via label matching, tolerating large observation time steps. We establish the convergence of parametric estimators as the sample size increases, providing a theoretical foundation for the proposed approach.
\end{abstract}

%% set depth for table of contents
 \setcounter{tocdepth}{1}
 \tableofcontents

%%%%%%%%% ===========================
\section{Introduction}

Learning the dynamics of interacting particle systems is a central problem across physics, biology, social science, and neuroscience \cite{howard2022learning,acemoglu2011opinion,MT14,sharrock2021_ParameterEstimation,olfati2007consensus,jenkinson2017potential,bjornsson2008intra,razin1991dna,delarue2015particle}. A central inverse task is to recover the interaction and external potentials from data. In many applications, however, the data are unlabeled particle snapshots at discrete times that miss the trajectory information of the particles across time. This loss of particle labels and trajectory information, caused by imaging limitations or privacy constraints, makes the dynamics inference substantially difficult. 

Specifically, consider the estimation of the interaction potential $\Phi$ and external potential $V$ in the interacting particle system of $N$ particles in $\R^d$: 
\begin{equation}\label{eq:opt_model_R} 
	d X_t^i = -\frac{1}{N} \sum_{j=1, j\neq i}^N \nabla \Phi(X_t^i-X_t^j) dt
	%\nolimits_{j\neq i} \phi(|X_i-X_j|) \frac{X_i-X_j}{|X_i-X_j|}dt 
	 -\nabla V(X_t^i) dt + \sigma dW_t^i, \quad i= 1,\ldots\, N\,, \quad t\in [0,T]
\end{equation}
from \emph{unlabeled data} consisting of a sequence of sample ensembles denoted by
\begin{equation}\label{eq:data_ensemble}
	\mathcal{D} = \{\bX_{t_\ell}^{\pi,m}\}_{\ell,m=1}^{L,M} \text{ with } \bX_t^{\pi} = (X_t^{\pi_t(1)},\ldots, X_t^{\pi_t(N)}), 
\end{equation}
Here, the stochastic noise $W_t^i \in \R^d$ is a standard Brownian motion, $\sigma\geq 0$ is the noise intensity, $\{t_\ell\}_{\ell=0}^L\subset [0,T]$ are time points with $t_\ell = \ell \Delta t$, $M$ is the number of samples, and $\pi_t$ is an unknown permutation of particle indices $\{1,\ldots,N\}$ at time $t$. In particular, the data is unlabeled in the sense that the permutation $\pi_t$ is unknown for all $t$. This means that the data misses trajectory information, i.e., the positions $X_{t_\ell}^{i,m}$ and $X_{t_{\ell+1}}^{i,m}$ are not paired in data since the label is unknown. 

% \begin{figure}[ht]
% \centering
% \includegraphics[width=0.85\textwidth]{figures/problem_setup.pdf}
% \caption{Labeled snapshots (left) pair each particle across time via known indices; unlabeled snapshots (right) lose this correspondence. }
% \label{fig:problem_setup}
% \end{figure}

The unlabeled setting breaks the main assumption behind most existing estimators based on particle trajectories, such as velocity matching, likelihood maximization, or Bayesian trajectory inference (see e.g., \cite{Kut04,DS04,Iac09,LMT21_JMLR,LZTM19pnas,LMT21,kasonga1990_MaximumLikelihooda,liu2020_ParameterEstimation,amorino2023parameter,huang2019learning,chen2021_MaximumLikelihooda,wen2016_MaximumLikelihood,FRT21-GP}). Moreover,  the number of particles is often not large enough to use the mean-field equation for the inference \cite{bongini2017inferring,LangLu22,wen2016_MaximumLikelihood,messenger2022learning}. As a result, learning from unlabeled data is a major open problem in the field, and existing methods either resort to label (and hence trajectory) recovery \cite{schiebinger2019optimal,liang2023recurrent,chapel2020partial} or minimization of distributional distance between the data and model predictions \cite{li2024stochastic,li2024stochastic}, both of which are computationally expensive and can be inaccurate when the time gap between observations is large. 

In this work, we overcome the challenge by introducing a \emph{self-test loss function} based on the weak-form partial differential equation (PDE) for the evolution of the empirical distribution of the particles.  This approach is inspired by the self-test framework in \cite{gao2024self}. Specifically, by It\^{o}'s chain rule, the empirical distribution of the particles 
\begin{equation}\label{eq:empirical_measure}
\mu_t^N (x):= \mu^{\bX_t} (x) = \frac{1}{N}\sum_{i=1}^N \delta_{X_t^i}(x)
\end{equation}
satisfies weak-form (in the distributional sense) stochastic evolution equation:
\begin{equation*}
\partial_t \mu_{t}^N =   \nabla \cdot   \left[ \mu_{t}^N \nabla  \big( \Phi*\mu_{t}^N + V\big)  \right]   +\frac{\sigma^2}{2} \Delta \mu_{t}^N + \dot m^{\bX_t},  \quad x\in \R^d, t\in [0,T]
\end{equation*}
with $\dot m^{\bX_t}$ being a martingale noise. The task of learning $\Phi$ and $V$ from the unlabeled data is equivalent to estimating them in the weak-form PDE from the empirical distributions $\{\mu_{t_\ell}^{N,m}\}$ of the data. By using $\{\Phi*\mu_{t_\ell}^{N,m} + V\}$ as test functions, we obtain a trajectory-free loss function for learning $\Phi$ and $V$ from the data. The loss is \emph{quadratic} in the potentials, and each of its minimizer satisfies the weak-form PDE up to a martingale term.

 The resulting objective is quadratic in the potentials, supports both parametric and neural-network estimators, and remains effective at coarse observation intervals where label recovery becomes unreliable. It avoids the computational expense and inaccuracy of label recovery and distributional matching methods, providing a robust and scalable solution for learning from unlabeled data in interacting particle systems.

\paragraph{Contributions.} Our main contributions are:
\begin{itemize}
\item We introduce a trajectory-free loss function for learning the interaction and external potentials from unlabeled ensemble data. The loss is quadratic in the potentials and is based on the weak-form PDE of the empirical particle distribution (Section~\ref{sec:loss_fn}). 
\item We present both parametric and neural network estimators for minimizing the loss function (Section~\ref{sec:computation}). In particular, we prove error bounds for the parametric estimator, showing convergence of the estimator as the sample size increases and the observation interval decreases (Section~\ref{sec:theory}).
\item We systematically examine the performance of the proposed algorithms on synthetic data generated from six models, including a reference model with regularity assumptions and five stress-test radial and non-radial models. We validate the convergence of the parametric estimator on the reference model. We also compare our method with baseline methods that regress on trajectories recovered via label matching, showing that our method outperforms them at large time steps where label recovery fails, scales with sample size and time step size, and can recover non-radial potentials (Section~\ref{sec:num}). 
\end{itemize}

%%%%%%%%%%%%%%%%%%% ================
\subsection{Related work}
There is relatively limited prior work on learning the potentials from unlabeled data for interacting particle systems with finitely many particles. 
% Most existing methods either attempt to recover the missing labels and then apply trajectory-based methods, or minimize the distances between distributions of the data and the model prediction. We review these approaches below, along with related literature on inference for IPS and mean-field limits, and weak form methods for equation learning, which are relevant to our approach. We review the related literature in three parts: inference for IPS and mean-field limits, and weak form methods for equation learning,

\paragraph{Generic methods based on distribution transport.} The generic approaches mainly fall into two categories: regression after label recovery and distributional matching. Since the particles are unlabeled, one may attempt regression after recovery of the labels (equivalently, trajectory reconstruction) by optimal transport methods \cite{genevay2016stochastic,bunne2022proximal,li2025robust}. However, as we show in the numerical section, this approach is computationally expensive and can be inaccurate when the time gap between observations is large. Another approach is to directly minimize a distributional distance (e.g., Wasserstein distance) between the empirical distribution of the data and the distribution predicted by the model \cite{yang2022generative,li2024stochastic,li2025inverse}. However, this approach is also computationally expensive because it requires simulating the full particle system at every training step and solving an optimal transport problem to compute the loss gradient. 

\paragraph{Mean-field based methods.} This study applies the self-testing functions in \cite{gao2024self} to construct a loss function, using the empirical distribution's weak-form PDE that resembles the mean-field equation up to a martingale-induced noise term. The pioneering work \cite{messenger2022learning} uses the weak form of the mean-field PDE to construct a loss function that can be estimated from the unlabeled data. However, the test functions are a library to be pre-selected. Another pioneering work \cite{bongini2017inferring} learns the interaction kernel in the mean-field limit using the derivative of the particle positions, which requires particle labels. Recent work on learning for mean-field equations includes \cite{chen2021_MaximumLikelihooda, LangLu22, yao2022mean,bongini2017inferring}, but none of these methods can be applied to the weak-form PDE of the empirical distribution of finitely many particles.

\paragraph{Energy variational methods.} The energy-dissipation-based loss in \cite{luli2024learningEV} aims to learn stochastic differential equations from density data by enforcing the energy balance of the system, which is a fundamental energy variational principle \cite{wang2021particle}. Applying it directly to our setting, the energy balance will lead to a loss function that is quartic in the potentials, which is more difficult to optimize than our quadratic loss function. The central difference is that the self-test loss function is based on variation over the potentials, while the energy balance is based on variation over the density.

The paper is organized as follows. Section~\ref{sec:loss_fn} introduces the trajectory-free loss function and summarizes its key properties. Section~\ref{sec:computation} presents parametric and neural network algorithms for minimizing the loss function. Section~\ref{sec:theory} establishes error bounds for the parametric estimator, showing convergence as the sample size increases and the observation interval decreases. Section~\ref{sec:num} systematically tests the method on synthetic data from six models, comparing it with baseline methods based on label recovery. Finally, Section~\ref{sec:conclusion} concludes with a discussion of future directions. The code is available at \url{https://github.com/ViskaWei/lips_unlabeled_data}.
% \href{https://github.com/ViskaWei/lips_unlabeled_data}{GitHub}. 

%%%%%%%%%%%%%%%% ============== 
%%%%%%%%%%%%%%%% ============== 
%%%%%%%%%%%%%%%% ============== 

\section{The trajectory-free self-test loss}\label{sec:loss_fn}
This section introduces a trajectory-free loss function for learning the potentials from unlabeled ensemble data. Section~\ref{sec:loss_fn_formulation} presents the loss function and summarizes its key properties; Section~\ref{sec:derivation_loss_fn} derives it from the weak-form PDE of the empirical distribution; Section~\ref{sec:discretization} analyzes the discretization bias when the loss is evaluated on discrete data.
\subsection{Loss formulation and key properties} \label{sec:loss_fn_formulation}
We propose a new trajectory-free loss function for learning the potential functions from the unlabeled ensemble data \eqref{eq:data_ensemble}. It is based on a weak-form stochastic PDE satisfied by the empirical distribution of the system and it is inspired by the self-test loss function in \cite{gao2024self}.

The weak-form PDE is from the It\^{o}  chain rule. Recall that for any test function $f\in C^2_b(\R^d)$, It\^{o}'s formula leads to:
 \begin{equation*}
d \frac{1}{N} \sum_i f(X_t^i)    =  \frac{1}{N} \sum_i  \big[ -\nabla f(X_t^i) \cdot \big(\nabla V(X_t^i) + \frac{1}{N} \sum_j \nabla \Phi(X_t^i-X_t^j) \big)  + \frac{\sigma^2}{2}   \Delta f(X_t^i) \big] dt + \sigma d m^{\bX_t}(f),
 \end{equation*}
with $d m^{\bX_t}(f)$ being the differential of the martingale
\begin{equation}\label{eq:martingale}
m^{\bX_t}(f)  :=\int_0^t \frac{1}{N} \sum_i  \nabla f(X_t^i)  dW_t^i.
\end{equation}
Equivalently, in terms of the empirical distribution $\mu_t^N$ in \eqref{eq:empirical_measure}, we have
\begin{equation*}
\begin{aligned}
d \innerp{\mu_t^N, f}  = & \left(  \innerp{\nabla. [\mu_t^N (\nabla \Phi * \mu_t^N + \nabla V)], f} + \innerp{ \frac{\sigma^2}{2} \Delta \mu_t^N, f} \right) dt  + \sigma d m^{\bX_t}(f).
\end{aligned}
\end{equation*}
That is, $\mu_t^N$ evolves according to the following weak-form stochastic evolution equation (interpreted in the distributional sense):
\begin{equation}\label{eq:dist-PDE}
\partial_t \mu_{t}^N =  \nabla \cdot   \left[ \mu_{t}^N \nabla  \big( \Phi*\mu_{t}^N + V\big)  \right]   +\frac{\sigma^2}{2} \Delta \mu_{t}^N + \sigma \dot{m}^{\bX_t}, \quad x\in \R^d, t\in [0,T],
\end{equation}
where $\dot{m}^{\bX_t}$ is the time-derivative of the martingale. Since this martingale has mean zero and variance $\mathrm{Var}(m^{\bX_t}(f)) = O(1/N)$ by It\^o's isometry, it can be viewed as a small observation noise, and the error caused by it vanishes as the sample size $M$ and the number of particles $N$ increases.

Note that Eq.\eqref{eq:dist-PDE} is not the Liouville or Fokker-Planck equation of the ODE or SDE, which characterizes the evolution of the joint probability density of the particles on $\R^{Nd}$. Instead, it is a PDE for the evolution of the empirical distribution of the particles on $\R^d$. In particular, when $\sigma = 0$ it reduces to a transport equation; when $\sigma > 0$ it is a non-closed stochastic PDE with a state-dependent martingale term $m^{\bX_t}(f)$. 

Importantly, the empirical distribution $\mu_t^N$ is observable from the unlabeled snapshot data, and the weak-form PDE \eqref{eq:dist-PDE} characterizes how the evolution of $\mu_t^N$ depends on the potentials without requiring individual trajectories. Thus, it provides the foundation for constructing a trajectory-free loss function. 

We construct a trajectory-free loss function via the self-testing approach for weak-form PDEs in \cite{gao2024self}. The key idea is to test the PDE against a family of test functions $f = V + \Phi * \mu_t^N$ with $\mu_t^N\in \{\mu_{t_\ell}^{N,m}\}_{\ell=0,m=1}^{L-1,M}$, which depend on the potentials themselves and the data, and construct a loss function whose Fr\'echet derivative at the true potentials is zero so that the weak-form PDE for the observed $\mu_t^N$ is satisfied up to the unobservable zero-mean martingale term. See Section~\ref{sec:derivation_loss_fn} for a detailed derivation. 

The trajectory-free self-test loss function for learning $\Phi$ and $V$ from the data is given by
\begin{equation}\label{eq:loss_trajFree}
\begin{aligned}
\calE_{\mathcal{D}}(\Phi,V) 	 =&  \frac{1}{MT}\sum_{m,\ell=1}^{M,L}  \calE_{\bX_{t_\ell}^{m},\bX_{t_{\ell+1}}^{m}}(\Phi,V), \quad \text{ with } \\
\calE_{\bX_{t_\ell},\bX_{t_{\ell+1}}}(\Phi,V)
 = & \frac{1}{2}\underbrace{\int |\nabla V+ \nabla \Phi *\mu_{t_\ell}^N|^2\mu_{t_\ell}^N dx  }_{J_{diss}: \text{Dissipation}}  \Delta t
  - \frac{\sigma^2}{2}\underbrace{ \int [\Delta V+ \Delta \Phi *\mu_{t_\ell}^N ] \mu_{t_\ell}^N dx }_{J_{diff}: \text{Diffusion}} \Delta t \\
  & + \underbrace{\int [ V+  \frac{1}{2} \Phi *\mu_{t}^N ] \mu_{t}^N dx \Big|_{t_\ell}^{t_{\ell+1}}}_{\delta E_{f}: \text{Energy~change}},
\end{aligned}
\end{equation}
where by the definition of $\mu_t^N$, the integrals can be computed as:
\begin{equation}\label{eq:loss_terms}
\begin{aligned}
J_{diss}(\bX_t)  & :=   \frac{1}{N} \sum_{i = 1}^N\big|\nabla V(X_{t}^i)+ \frac{1}{N}\sum_{j \neq i}^N \nabla \Phi (X_{t}^i- X_{t}^j) \big|^2, \\
J_{diff}(\bX_t)  & := \frac{1}{N} \sum_{i = 1}^N \Delta V(X_t^i) + \frac{1}{N^2} \sum_{i\neq j} \Delta \Phi(X_t^i - X_t^j), \\
E_{f}(\bX_t)   & := \frac{1}{N}\sum_{i=1}^N V(X_{t}^i) + \frac{1}{2N^2}\sum_{i,j= 1, i\neq j}^N \Phi (X_{t}^i- X_{t}^j), \quad \delta E_{f} = E_{f}(\bX_{t_{\ell+1}}) - E_{f}(\bX_{t_\ell}).
\end{aligned}
\end{equation}
Here, the first term $J_{diss}$ represents the energy dissipation due to the drift, the second term $J_{diff}$ represents the contribution from diffusion, and the third term $\delta E_{f}$ represents the change in the free energy of the system between two time points.
For even $\Phi$, $\nabla \Phi(0)=0$, so excluding the self-interaction $j=i$ in $J_{diss}$ is equivalent to including it. We write both pairwise sums with $j\neq i$ to match the finite-particle free energy $E_f$ and the regression vectors used later.

% As we will see in the derivation of the loss function in the next section, the minimizer of the loss function conserves the energy balance of the system, which is a fundamental physical principle.
The self-test loss function has several important properties.
\begin{itemize}
    \item \emph{Trajectory-free and derivative-free}: all three terms $J_{diss}$, $J_{diff}$, and $E_{f}$ depend on the particle positions only through the empirical distribution $\mu_t^N$, so no particle labels or trajectories are needed. Also, it does not require any derivative of $\mu_t^N$, making it suitable for empirical distribution data with relatively large time steps. % In contrast, the mean-square prediction error of the drift or the energy balance requires approximating the time derivative of the particle positions, which is not possible without labels and is inaccurate for large time steps.
   \item \emph{Quadratic}: the loss function is quadratic in the potentials, which is different from the Wasserstein-distance-based loss functions that are typically non-convex and highly nonlinear in the potentials, or the mean-square prediction error of the energy balance that is quartic (see Section~\ref{sec:derivation_loss_fn}). This quadratic structure allows for efficient optimization and robust convergence properties. In particular, it leads to a least squares regression problem when the potentials are linearly parameterized. 
\item \emph{Minimizers well-characterized}: At each minimizer, the Fr\'echet derivative of the loss function is zero, which leads to the weak-form PDE \eqref{eq:dist-PDE} without the martingale term. Thus, each minimizer makes all the data $\{\mu_t^{N,m}\}$ satisfy the weak-form PDE up to a martingale term. 
Moreover, under a coercivity condition, the minimizer is unique and hence the true potentials are identifiable from data; see Theorem~\ref{thm:error_bound} for an error bound.
\end{itemize}

We therefore define the estimator as the minimizer of the loss over a hypothesis space $\mathcal{H} = \mathcal{H}_V \times \mathcal{H}_\Phi$ that enforces the required symmetry and smoothness conditions:
\begin{equation}\label{eq:estimator}
(\widehat{\Phi}, \widehat{V}) = \arg \min_{(\Phi,V)\in \mathcal{H}} \calE_{\mathcal{D}}(\Phi,V).
\end{equation}
The spaces $\mathcal{H}_\Phi$ and $\mathcal{H}_V$ can be chosen as parametric classes (e.g., a linear space with pre-selected basis functions) or nonparametric function spaces (e.g., RKHS or neural networks). In particular, we require that $\Phi$ is even: $\Phi(x) = \Phi(-x)$ for all $x \in \R^d$, which is necessary for the drift to be the gradient of the free energy $E_{f}$. We also assume that the potentials are smooth enough such that the gradients and Laplacians in \eqref{eq:loss_terms} are well defined. Finally, the loss is invariant under additive shifts of $V$ and $\Phi$, so the potentials are identifiable only up to additive constants. In practice we do not need to enforce these constants during optimization; when a canonical representative is needed, one may impose $V(0)=0$ and $\Phi(0)=0$.

%%%%%%%%%
\subsection{Derivation of the loss function}\label{sec:derivation_loss_fn}
We derive the trajectory-free loss function \eqref{eq:loss_trajFree} using the self-test approach for weak-form PDEs in \cite{gao2024self}. The key structural property that enables the self-test approach is that the weak-form PDE depends linearly on the unknown potentials, despite being nonlinear in the empirical distribution. This linearity allows us to construct a quadratic loss function whose minimizer makes the data sastisfy the weak-form PDE. 

To highlight the key ideas, we first reformulate the problem in an abstract weak operator form. Here, we present a formal derivation of the loss function, ignoring technical details such as regularity and integrability conditions. The rigorous justification of the loss function and the convergence of the estimator are given in Section~\ref{sec:theory}. 

Let $\phi = (V, \Phi)$ denote the unknown potentials. Define the \emph{drift operator}
\begin{equation}\label{eq:operator_R}
    R_{\phi}[u] := -\nabla \cdot \big[u \,(\nabla V + \nabla \Phi * u)\big],
\end{equation}
which maps a probability measure $u$ to the negative divergence of the drift flux. The weak-form PDE~\eqref{eq:dist-PDE} can then be rewritten as a weak-form linear equation in $\phi$:
\[
R_{\phi_\star}[\mu_t^{N,m}] - \sigma \frac{d}{dt} m^{\bX_t^m} = g_t^{N,m}, \qquad g_t^{N,m} := -\partial_t \mu_t^{N,m} + \frac{\sigma^2}{2} \Delta \mu_t^{N,m},
\]
where $g_t^{N,m}$ depends only on $\mu_t^{N,m}$ and $\sigma$, Here, the differential of the martingale $\frac{d}{dt} m^{\bX_t^m}$, which is unobservable from data and has mean zero, is interpreted as noise. 

% Averaging over the Brownian paths (the martingale drops out) gives
% \[
% \mathbb{E}[R_{\phi_\star}[\mu_t^{N,m}]] = \mathbb{E}[g_t^{N,m}].
% \]
A key structural property is that $R_\phi[u]$ is \emph{linear} in $\phi$, despite being nonlinear in $u$. This linearity, together with the goal of fitting the data $u$ to the weak-form equation, directs us to construct a quadratic loss function of $\phi$ by testing the equation against a family of test functions that depend \emph{linearly} on $\phi$ and depend on the data $u$. Denote such a family by $\{f_\phi[u]\}_\phi$. The loss function will have quadratic form 
\[
\calE(\phi) = \tfrac{1}{2}B_u(\phi,\phi) - \innerp{g, f_\phi[u]}, \quad B_u(\phi,\psi) \;:=\; \innerp{R_\phi[u],\, f_\psi[u]}, 
\]
where $\innerp{\cdot,\cdot}$ denotes the duality pairing between distributions and test functions. In this variational formulation, to make the true potentials $\phi_\star$ a minimizer of the loss, we require the bilinear form $B_u(\phi,\psi)$ to 
be \emph{symmetric} and \emph{non-negative}, so that 
\[
\calE(\phi) - \calE(\phi_\star) = \tfrac{1}{2}B_u(\phi-\phi_\star, \phi-\phi_\star) \geq 0 \qquad \text{for all } \phi. 
\]

These conditions strongly constrain the test family. Following \cite{gao2024self}, consider the self-testing family
\[
f_\phi[u] = V + \Phi * u,
\]
the total potential felt by a particle in the configuration $u$. This choice satisfies all three conditions:
\begin{itemize}
\item \emph{Linearity}: $f_{\phi+\psi}[u] = f_\phi[u] + f_\psi[u]$, so $B_u(\phi,\psi)$ is bilinear in $(\phi,\psi)$.
\item \emph{Symmetry}: for $\phi = (V,\Phi)$ and $\psi = (\tilde V,\tilde\Phi)$, integration by parts gives
\[
B_u(\phi,\psi) = \int_{\R^d} u \,\nabla(V + \Phi * u) \cdot \nabla(\tilde V + \tilde \Phi * u) \, dx = B_u(\psi,\phi).
\]
\item \emph{Non-negativity}:\, $B_u(\phi,\phi) = \int_{\R^d} u \,|\nabla(V + \Phi * u)|^2 \, dx \geq 0$.
\end{itemize}

Intuitively, this is the infinite-dimensional analogue of minimizing $\tfrac{1}{2}x^\top\! Ax - b^\top\! x$ for a symmetric positive-semidefinite system $Ax = b$. The linearity of $f_\phi[u]$ and the symmetry of $B_u$ ensure that the Fr\'echet derivative of the loss at $\phi_\star$ is zero,   
\[
 \lim_{\epsilon \to 0} \frac{\calE(\phi_\star + \epsilon \phi) - \calE(\phi_\star)}{\epsilon} = \innerp{R_{\phi_\star}[u] - g,\, f_\phi[u]} = 0 \qquad \text{for all } \phi, 
\]
that is, $R_{\phi_\star}[u] = g$ tested against the family $\{f_\phi[u]\}_\phi$,  which is the deterministic part of the weak-form PDE~\eqref{eq:dist-PDE} tested against the self-testing family.

Applying this construction to the empirical distributions $\{\mu_{t_\ell}^{N,m}\}$ with Riemann-sum approximation for the time integral $\int_{t_\ell}^{t_{\ell+1}} \frac{1}{2} \innerp{R_{\phi}[\mu_{t}^{N,m}], f_{\phi}[\mu_{t}^{N,m}]}dt$ and averaging over the dataset $\mathcal{D}$, we obtain the \emph{self-test loss function}:
\begin{equation}\label{eq:loss_self_test}
\calE_{\mathcal{D}}(\Phi,V) = \frac{1}{MT}\sum_{m,\ell=1}^{M,L} \Big[\frac{1}{2}\innerp{R_{\phi}[\mu_{t_\ell}^{N,m}], f_{\phi}[\mu_{t_\ell}^{N,m}]}\Delta t -  \int_{t_\ell}^{t_{\ell+1}}\innerp{g_{t}^{N,m}, f_{\phi}[\mu_{t}^{N,m}]} \Big].
\end{equation}
To compute the term $\innerp{g_t^{N,m}, f_\phi[\mu_t^{N,m}]}$ with $g_t^N= -\partial_t \mu_t^N + \frac{\sigma^2}{2} \Delta \mu_t^N$, note first that  
\begin{equation*}
\innerp{-\partial_t \mu_t^N, V+\Phi * \mu_t^N}
= -\frac{d}{dt}\int_{\R^d}\Big[V+\frac{1}{2}\Phi * \mu_t^N\Big]\mu_t^N\,dx, 
\end{equation*}
where we used the fact that  $\frac{d}{dt}\innerp{\mu_t^N,\, \Phi * \mu_t^N}
= \innerp{\partial_t\mu_t^N,\, \Phi * \mu_t^N}
+ \innerp{\mu_t^N,\, \Phi * \partial_t\mu_t^N} = 2 \innerp{\partial_t\mu_t^N,\, \Phi * \mu_t^N}$ since $\Phi$ is symmetric (i.e., $\Phi(x{-}y) = \Phi(y{-}x)$). Also, by integration by parts,
\begin{equation*}
\innerp{\frac{\sigma^2}{2} \Delta \mu_t^N, V+\Phi * \mu_t^N} = \frac{\sigma^2}{2} \int_{\R^d} \mu_t^N [\Delta V + \Delta \Phi * \mu_t^N] dx.
\end{equation*}

Substituting them into \eqref{eq:loss_self_test} and using the Riemann-sum approximation for the time integral $\int_{t_\ell}^{t_{\ell+1}} \innerp{\frac{\sigma^2}{2} \Delta \mu_t^N, V+\Phi * \mu_t^N} dt $, we obtain
\begin{equation*}
\begin{aligned}
\calE_{\mathcal{D}}(\Phi,V) 
= & \frac{1}{MT}\sum_{m,\ell=1}^{M,L} \bigg[ \frac{1}{2} \int_{\R^d} \mu_{t_\ell}^{N,m} |\nabla V + \nabla \Phi * \mu_{t_\ell}^{N,m}|^2 dx \,\Delta t \\
& - \frac{\sigma^2}{2} \int_{\R^d} \mu_{t_\ell}^{N,m} [\Delta V + \Delta \Phi * \mu_{t_\ell}^{N,m}] dx \,\Delta t 
 + \int_{\R^d} \mu_{t_\ell}^{N,m} \big[ V + \frac{1}{2} \Phi * \mu_{t_\ell}^{N,m} \big] dx \Big|_{t_\ell}^{t_{\ell+1}} \bigg].
\end{aligned}
\end{equation*}
The three terms are, respectively, $\tfrac{1}{2}J_{diss}\,\Delta t$, $-\tfrac{\sigma^2}{2}J_{diff}\,\Delta t$, and $\delta E_{f}$, recovering the decomposition~\eqref{eq:loss_trajFree}--\eqref{eq:loss_terms}.

\begin{remark}[Negative minimum of the loss]\label{rem:negative_minimum}
The minimum of the loss function is negative. To see this, consider the simplest setting with $M=1$, $L=1$, and $g = R_{\phi_\star}[u]$ (i.e., without noise). The loss function becomes
\[
  \calE_{\mathcal{D}}(\phi) = \frac{1}{2}\innerp{R_{\phi}[u],\, f_{\phi}[u]} - \innerp{g,\, f_{\phi}[u]} =
  \frac{1}{2}\innerp{R_{\phi-\phi_{\star}}[u],\, f_{\phi-\phi_\star}[u]} - \frac{1}{2}\innerp{R_{\phi_{\star}}[u],\, f_{\phi_\star}[u]},
\]
where the second equality uses the symmetry and bilinearity of $B_u$. The first term is non-negative and equals zero if $\phi = \phi_\star$. Thus the minimum of the loss is $-\frac{1}{2}\innerp{R_{\phi_{\star}}[u],\, f_{\phi_\star}[u]}$, which is strictly negative. In the presence of small perturbations from discretization or the martingale term, the minimum deviates from this value but remains negative. 
\end{remark}

\begin{remark}[Numerical integration in time]\label{rmk:time_integration}
 The Riemann-sum approximation for the time integral in the loss function has a discretization error of order $O(\Delta t^2)$ per interval, and it leads to an $O(\Delta t)$ bias in the parameter estimation (see Theorem~\ref{thm:error_bound}). One can reduce this bias by using a higher-order quadrature rule, such as the trapezoidal rule, which reduces the per-interval error to $O(\Delta t^3)$ and the parameter estimation bias to $O(\Delta t^2)$ (see Corollary~\ref{cor:trapezoid_bound}). 
\end{remark}

\begin{remark} [Mean-field limit of the loss function] 
    Notably, $N$ is finite in the above loss function. When $N\to \infty$, by the propagation of chaos \cite{sznitman1991topics}, the empirical distribution $\mu_t^N$ converges to $u_t$ almost surely, where $u_t$ is the solution to the mean-field equation
    $$
    \partial_t u_t = \nabla \cdot [u_t (\nabla V + \nabla \Phi * u_t)] + \frac{\sigma^2}{2} \Delta u_t,
    $$
   for $t\in [0,T]$. Correspondingly, the three integrals in the loss function converge almost surely to their mean-field limits, and the loss function $\calE_{\mathcal{D}}(\Phi,V)$ with $M=1$ converges almost surely to
    \begin{equation}\label{eq:loss_limit}
    \begin{aligned}
    \calE(\Phi,V) = & \frac{1}{2}\int_0^T \int_{\R^d} |\nabla V(x) + \nabla \Phi * u_t(x)|^2 u_t(x) dx dt - \frac{\sigma^2}{2} \int_0^T \int_{\R^d} [\Delta V(x) + \Delta \Phi * u_t(x)] u_t(x) dx dt \\
    & + \int_{\R^d} [V(x) + \tfrac{1}{2}\Phi * u_T(x)] u_T(x) dx - \int_{\R^d} [V(x) + \tfrac{1}{2}\Phi * u_0(x)] u_0(x) dx.
    \end{aligned}
    \end{equation}
    This is the probabilistic loss function of the mean-field equation and has been studied in \cite{LangLu22} for the recovery of radial interaction potentials.
\end{remark}

\begin{remark}[Energy balance-based loss function.]
Energy balance of the particle system does not directly lead to the self-test loss function. The square mean residual of the energy balance leads to a quartic loss function that has multiple minima and is difficult to optimize. Specifically, let $E_{f}(\bX_t)$ be the free energy defined in \eqref{eq:loss_terms}. The SDE in \eqref{eq:opt_model_R} can be written as gradient flow with noise:
\[ d \bX_t = - N \nabla_{\bX} E_{f}(\bX_t) dt + \sigma d \bW_t. \]
Applying It\^{o}'s formula to $E_{f}(\bX_t)$, we obtain:
\begin{equation}\label{eq:energy_balance}
\begin{aligned}
d E_{f}(\bX_t) & = \nabla_{\bX} E_{f} \cdot d\bX_t + \frac{\sigma^2}{2} \Delta_{\bX} E_{f} dt \\
& = \nabla_{\bX} E_{f} \cdot (- N \nabla_{\bX} E_{f} dt + \sigma d \bW_t) + \frac{\sigma^2}{2} \Delta_{\bX} E_{f} dt \\
& = - N |\nabla_{\bX} E_{f}|^2 dt + \frac{\sigma^2}{2} \Delta_{\bX} E_{f} dt + \sigma \nabla_{\bX} E_{f} \cdot d \bW_t.
\end{aligned}
\end{equation} 
Integrating over $[t_\ell, t_{\ell+1}]$ and noting that $ N |\nabla_{\bX} E_{f}|^2 = \frac{1}{N} \sum_{i=1}^N \left| \nabla V(X_t^i) + \frac{1}{N} \sum_j \nabla \Phi(X_t^i - X_t^j) \right|^2 = J_{diss}(\bX_t)$ and 
$ \Delta_{\bX} E_{f} = \frac{1}{N} \sum_{i=1}^N \left( \Delta V(X_t^i) + \frac{1}{N} \sum_j \Delta \Phi(X_t^i - X_t^j) \right) = J_{diff}(\bX_t)$, we have 
\[ E_{f}(\bX_{t_{\ell+1}}) - E_{f}(\bX_{t_\ell}) + \int_{t_\ell}^{t_{\ell+1}} J_{diss}(\bX_t) dt - \frac{\sigma^2}{2} \int_{t_\ell}^{t_{\ell+1}} J_{diff}(\bX_t) dt = \int_{t_\ell}^{t_{\ell+1}} \sigma \nabla_{\bX} E_{f} \cdot d \bW_t. \]
Since the martingale term has a zero mean, the square mean error of the energy balance is:
\begin{equation*}
\begin{aligned}
\mathcal{L}_{EB}(V,\Phi):=\left| \mathbb{E}\left[  E_{f}(\bX_{   t_{\ell+1}}) - E_{f}(\bX_{t_\ell}) + \int_{t_\ell}^{t_{\ell+1}} J_{diss}(\bX_t) dt - \frac{\sigma^2}{2} \int_{t_\ell}^{t_{\ell+1}} J_{diff}(\bX_t) dt  \right] \right|^2.
\end{aligned}
\end{equation*}
The function $\mathcal{L}_{EB}$ is quartic in the potentials $V$ and $\Phi$ because $J_{diss}$ is quadratic in the potentials. In particular, it has multiple minimizers: in addition to the true potentials, $(V,\Phi) \equiv (0,0)$ is another minimizer of $\mathcal{L}_{EB}(V,\Phi)$ since $\mathcal{L}_{EB}(0,0)= 0$. Thus, it has multiple minima and is non-convex, which may lead to optimization difficulties. In contrast, the self-test loss function is quadratic in the potentials, which leads to better optimization properties. 
\end{remark}

%%%%%%%%%%%%%%%%%
\subsection{Extension to discrete-time models}\label{sec:discretization}
\label{sec:discreteModel}% backward compat
The self-test loss function also applies to discrete-time models of the system that depends linearly on the unknown potentials, such as the Euler-Maruyama discretization of the SDE in \eqref{eq:opt_model_R}. In this case, the loss function is derived from the discrete-time weak-form PDE of the empirical distribution, and it has the same form as \eqref{eq:loss_trajFree}. This extension is important for practical applications where the data are collected at discrete time points and the underlying dynamics may be better described by a discrete-time model. 

Specifically, consider the time series model 
\begin{equation}\label{eq:discrete_model}
X_{t_{\ell+1}}^i = X_{t_\ell}^i - \nabla V(X_{t_\ell}^i) \Delta t - \frac{1}{N} \sum_{j=1}^N \nabla \Phi(X_{t_\ell}^i - X_{t_\ell}^j) \Delta t + \sigma \sqrt{\Delta t} Z_\ell^i, \quad i=1,\ldots,N,
\end{equation}
where $Z_\ell^i$ are i.i.d.~standard normal random variables and $\Delta t<1$. It is from the Euler-Maruyama discretization of the SDE in \eqref{eq:opt_model_R}. 

We first derive the weak-form PDE for the discrete model by Taylor expansion. Note that $ X_{t_{\ell+1}}^i - X_{t_\ell}^i$ is dominated by the diffusion term $\sigma \sqrt{\Delta t} Z_\ell^i$, particularly when $\Delta t$ is small. For $f\in C^2_b(\R^d)$, Taylor expansion gives, 
\begin{equation*}
\begin{aligned}
f(X_{t_{\ell+1}}^i) & = f(X_{t_\ell}^i) + \nabla f(X_{t_\ell}^i) \cdot (X_{t_{\ell+1}}^i - X_{t_\ell}^i) + \frac{1}{2} (X_{t_{\ell+1}}^i - X_{t_\ell}^i)^\top \nabla^2 f(X_{t_\ell}^i) (X_{t_{\ell+1}}^i - X_{t_\ell}^i) + o(\Delta t)\\
& = f(X_{t_\ell}^i) + \nabla f(X_{t_\ell}^i) \cdot \bigg[- \nabla V(X_{t_\ell}^i) \Delta t - \frac{1}{N} \sum_{j=1}^N \nabla \Phi(X_{t_\ell}^i - X_{t_\ell}^j) \Delta t + \sigma \sqrt{\Delta t} Z_\ell^i\bigg] \\ 
& \quad + \frac{\sigma^2 \Delta t}{2} Z_\ell^{i\top} \nabla^2 f(X_{t_\ell}^i) Z_\ell^i
  + o(\Delta t),  
\end{aligned}
\end{equation*}
where the $o(\Delta t)$ term is of order $(\Delta t)^{3/2}$, and we kept only the $\frac{\sigma^2 \Delta t}{2} Z_\ell^{i\top} \nabla^2 f(X_{t_\ell}^i) Z_\ell^i$ term in the second-order expansion since it is of order $\Delta t$ and dominates the other second-order terms that are of order $(\Delta t)^2$. Summing over $i$ and dividing by $N$, we have
\begin{equation*}
\begin{aligned}
  \frac{1}{N} \sum_{i=1}^N f(X_{t_{\ell+1}}^i) & = \frac{1}{N} \sum_{i=1}^N f(X_{t_\ell}^i) -  \Delta t \frac{1}{N} \sum_{i=1}^N \nabla f(X_{t_\ell}^i) \cdot \big[ \nabla V(X_{t_\ell}^i) + \frac{1}{N} \sum_{j=1}^N \nabla \Phi(X_{t_\ell}^i - X_{t_\ell}^j) \big] \\
     & \quad + \frac{\sigma^2 \Delta t}{2}  \frac{1}{N} \sum_{i=1}^N Z_\ell^{i\top} \nabla^2 f(X_{t_\ell}^i) Z_\ell^i +  \sigma \sqrt{\Delta t} \frac{1}{N} \sum_{i=1}^N \nabla f(X_{t_\ell}^i) \cdot Z_\ell^i +  o(\Delta t). 
\end{aligned}
\end{equation*}
Approximating the second-order term by its expectation since the $Z_\ell^i$ are i.i.d.~standard normal random variables, we have 
\[
\frac{1}{N} \sum_{i=1}^N Z_\ell^{i\top} \nabla^2 f(X_{t_\ell}^i) Z_\ell^i \approx   \frac{1}{N} \sum_{i=1}^N \text{tr}(\nabla^2 f(X_{t_\ell}^i)) = \frac{1}{N} \sum_{i=1}^N \Delta f(X_{t_\ell}^i).
\]
Hence, the above equation can be written as 
\begin{equation*}
\begin{aligned}
  \frac{1}{N} \sum_{i=1}^N f(X_{t_{\ell+1}}^i) & = \frac{1}{N} \sum_{i=1}^N f(X_{t_\ell}^i) - \Delta t \frac{1}{N} \sum_{i=1}^N \nabla f(X_{t_\ell}^i) \cdot \big[ \nabla V(X_{t_\ell}^i)  + \frac{1}{N} \sum_{j=1}^N \nabla \Phi(X_{t_\ell}^i - X_{t_\ell}^j)\big] \\
     & \quad + \frac{\sigma^2 \Delta t}{2} \frac{1}{N} \sum_{i=1}^N \Delta f(X_{t_\ell}^i) + \sigma \sqrt{\Delta t}  \frac{1}{N} \sum_{i=1}^N \nabla f(X_{t_\ell}^i) \cdot Z_\ell^i  +  O(\Delta t)
\end{aligned}
\end{equation*}
for any $f \in C^2(\mathbb{R}^d)$, where the $ O(\Delta t)$ term is dominated by the bias $\frac{\sigma^2 \Delta t}{2} \big[ \frac{1}{N} \sum_{i=1}^N Z_\ell^{i\top} \nabla^2 f(X_{t_\ell}^i) Z_\ell^i - \frac{1}{N} \sum_{i=1}^N \Delta f(X_{t_\ell}^i) \big]$.
 Thus, we can write it as a discrete-time weak-form PDE of the empirical distribution $\mu_t^N$ of the discrete model: 
\begin{equation}\label{eq:discrete_PDE}
\begin{aligned}
    \innerp{\mu_{t_{\ell+1}}^N, f} - \innerp{\mu_{t_\ell}^N, f} 
& =  \innerp{R_\phi[\mu_{t_\ell}^N], f} \Delta t +  \innerp{\frac{\sigma^2}{2} \Delta \mu_{t_\ell}^N, f} \Delta t + \sigma  \Delta  m^{\bX_t}(f) + O(\Delta t),  
\end{aligned}
\end{equation}
where $\Delta  m^{\bX_t}(f) = \frac{1}{N} \sum_{i=1}^N \nabla f(X_{t_\ell}^i) \cdot Z_\ell^i \sqrt{\Delta t}$.

The self-test loss function can be derived from the above discrete-time weak-form PDE in the same way as in Section~\ref{sec:derivation_loss_fn}, and it has the same form as \eqref{eq:loss_trajFree}. 

The main difference from the continuous-time SDE model case is that one uses the Taylor expansion (instead of the It\^o formula) to derive the weak-form PDE, and the error from the Taylor expansion leads to an $O(\Delta t)$ bias in the PDE. Since the model itself is discrete, one cannot reduce this bias by using a higher-order quadrature rule for the time integral in the loss function, as one would do for the continuous-time model. Also, this bias will prevent the estimator from converging to the true parameters as $M\to \infty$, unless $\Delta t$ is sufficiently small, which is observed in the \emph{zero-gap regime} in numerical experiments in Figure~\ref{fig:discrete_bias} in Section~\ref{sec:model_a_validation}.

%%%%%%%%%%%%%%%%========= 
 \section{Algorithms: parametric and nonparametric regression}
\label{sec:computation}
The self-test loss function provides a principled way to learn the potentials from the data. It can be optimized using standard optimization algorithms, and its quadratic structure allows for efficient optimization and robust convergence properties. In particular, when the potentials are linearly parameterized, the loss function reduces to a least squares regression problem, which can be solved efficiently with closed-form solutions or iterative solvers. For high-dimensional or complex potentials where nonparametric flexibility is needed, we can use neural network regression to minimize the loss function.

\subsection{Least squares regression with basis functions}
\label{sec:basis_func}
When the potentials $\Phi$ and $V$ are linearly parametrized using basis functions (including kernel ridge regression), the learning problem reduces to solving a linear system. This is a direct consequence of the quadratic structure of the self-test loss function derived in Section \ref{sec:loss_fn}. Specifically, let $\{\psi_k\}_{k=1}^{K_V}$ and $\{\phi_l^{sym}\}_{l=1}^{K_\Phi}$ be linearly independent basis functions in $C^2(\R^d)$, which can be either a prescribed dictionary of functions or radial basis functions, we expand the potentials with coefficients $\theta = (\alpha, \beta)$:
\begin{equation}\label{eq:basis_expansion}
    V_{\alpha}(x) = \sum_{k=1}^{K_V} \alpha_k \psi_k(x), \quad \Phi_{\beta}(x) = \sum_{l=1}^{K_\Phi} \beta_l \phi_l^{sym}(x). 
\end{equation}
Then, the loss function in \eqref{eq:loss_trajFree} can be written as a function of the coefficients $\theta$:
\begin{equation}
\label{eq:loss_basis}
\mathcal{L}(\theta) = \frac{1}{MT}\sum_{\ell=0}^{L-1}\sum_{m=1}^M \left( \frac{1}{2} Q_{\ell,m}(\theta) \Delta t - \frac{\sigma^2}{2} L_{\ell,m}(\theta) \Delta t +  E_{\ell+1,m}(\theta) - E_{\ell,m}(\theta) \right),
\end{equation}
where the terms are obtained by substituting the basis expansions into the terms defined in \eqref{eq:loss_terms}: the dissipation term $Q_{\ell,m}(\theta) = N^{-1}\sum_i |\nabla V_\alpha(X_{t_\ell}^{i,m}) + \frac{1}{N}\sum_{j \neq i} \nabla \Phi_\beta (X_{t_\ell}^{i,m}-X_{t_\ell}^{j,m})|^2$ is a \emph{quadratic} in $\theta$; the diffusion correction 
$L_{\ell,m}(\theta) = \frac{1}{N}\sum_{i=1}^N \Delta V_\alpha(X_{t_\ell}^{i,m}) + \frac{1}{N^2}\sum_{i\neq j} \Delta \Phi_\beta(X_{t_\ell}^{i,m}-X_{t_\ell}^{j,m})$ and the potential energy $E_{\ell,m}(\theta)= \frac{1}{N}\sum_{i=1}^N V_\alpha(X_{t_\ell}^{i,m}) + \frac{1}{2N^2}\sum_{i\neq j}  \Phi_\beta(X_{t_\ell}^{i,m}-X_{t_\ell}^{j,m}) $ are \emph{linear} in $\theta$.

Consequently, the loss function is a quadratic polynomial in $\theta$ of the form:
\begin{equation}\label{eq:loss_theta}
    \mathcal{L}(\theta) = \frac{1}{2} \theta^\top \mathbf{A}_{M,\Delta t}  \theta - \mathbf{b}_{M,\Delta t} ^\top \theta, 
\end{equation}
where the normal matrix $\mathbf{A}_{M,\Delta t} $ is assembled from the cross-terms of the basis function gradients in $Q_\ell$, and the normal vector $\mathbf{b}_{M,\Delta t} $ aggregates the contributions from the linear energy difference and diffusion terms. They are computed by  
\begin{equation}\label{eq:normal_matrix_vector}
    \mathbf{A}_{M,\Delta t}  = \frac{1}{MLN} \sum_{m=1}^M\sum_{\ell=0}^{L-1}  \sum_i (\mathbf{F}_{i,t_\ell}^{m})^\top \mathbf{F}_{i,t_\ell}^{m}, \quad \mathbf{b}_{M,\Delta t}  = 
    \frac{1}{MT} \sum_{m=1}^M\sum_{\ell=0}^{L-1}\big[ \frac{\sigma^2}{2} \;\boldsymbol{\delta}_{t_\ell}^{m} \Delta t- (\mathbf{h}_{t_{\ell+1}}^{m} - \mathbf{h}_{t_\ell}^{m}) \big].  
\end{equation}
Here, for each sample $\bX_{t_\ell}^m$, $
\mathbf{F}_{i,t_\ell}^m := \mathbf{F}_i(\bX_{t_\ell}^m)$, $\boldsymbol{\delta}_{t_\ell}^m := \boldsymbol{\delta}(\bX_{t_\ell}^m)$, and $\mathbf{h}_{t_\ell}^m := \mathbf{h}(\bX_{t_\ell}^m)$, 
where for any particle configuration
$\bX=(X^1,\ldots,X^N)\in (\R^d)^N$, we introduce vector-valued regression functions for the forces, diffusion correction and potential energy:
\[
\mathbf{F}_i(\bX)\in\R^{d\times K},\quad i=1,\ldots,N; \quad
\boldsymbol{\delta}(\bX)\in\R^K; \quad
\mathbf{h}(\bX)\in\R^K 
\]
with components
\begin{equation}\label{eq:regressionVectors}
\begin{aligned}
\mathbf{F}_i(\bX)_{:,k}
& =
\begin{cases}
\nabla \psi_k(X^i), & 1 \le k \le K_V, \\[3pt]
 \frac{1}{N}\sum_{j\neq i}\nabla \phi^{sym}_{k-K_V}(X^i-X^j), & K_V < k \le K,
\end{cases}  \\ % [6pt]
\boldsymbol{\delta}(\bX)_k
& =
\begin{cases}
\frac{1}{N}\sum_{i=1}^N \Delta \psi_k(X^i),  & 1 \le k \le K_V, 
\\[6pt]
 \frac{1}{N^2}\sum_{i\neq j}\Delta \phi^{sym}_{k-K_V}(X^i-X^j), 
& K_V < k \le K,
\end{cases}  \\[6pt]
\hspace{-3mm}
\mathbf{h}(\bX)_k
& =
\begin{cases}
 \frac{1}{N}\sum_{i=1}^N \psi_k(X^i), & 1 \le k \le K_V, \\[6pt]
 \frac{1}{2N^2}\sum_{i\neq j}\phi^{sym}_{k-K_V}(X^i-X^j), & K_V < k \le K.
\end{cases}
\end{aligned}
\end{equation}

Minimizing $\mathcal{L}(\theta)$ is therefore equivalent to solving the linear system (normal equations) with proper regularization if necessary:
\begin{equation}\label{eq:normalEq}
    \mathbf{A}_{M,\Delta t}  \widehat{\theta}_{M,\Delta t} =  \mathbf{b}_{M,\Delta t} .
\end{equation}
Thus, the estimator $\widehat{\theta}_{M,\Delta t}$ can be obtained by a single linear solve, which is computationally efficient and numerically stable for moderate $K$.

\paragraph{Regularization.} When the number of basis functions $K$ is large (or when they are not carefully chosen) and the data are noisy, the normal equations can be ill-conditioned, leading to unstable solutions. In this case, regularization is necessary to ensure numerical stability and reliable parameter estimation. 

In particular, the estimation of the interaction potential $\Phi$ is more ill-posed than that of the confining potential $V$ because the data provide indirect information about $\Phi$ through the average of $N$ pairwise interactions, while $V$ can be inferred more directly from the individual particle positions. This is reflected in the coercivity condition in Proposition~\ref{prop:joint_coercivity}: the interaction block is weighted by $c_N=(N-1)/N^2=O(1/N)$, whereas the confining-potential block has $O(1)$ coercivity. When the sample size is finite, this leads to the block corresponding to $\Phi$ in the normal matrix $\mathbf{A}_{M,\Delta t}$ having smaller eigenvalues and a larger condition number than the block corresponding to $V$. 

% To see why, consider the block structure of $\mathbf{A}_{M,\Delta t}$ corresponding to the partition $\theta = (\alpha, \beta)$:
% \[
% \mathbf{A}_{M,\Delta t} = \begin{pmatrix} \mathbf{A}_{VV} & \mathbf{A}_{V\Phi} \\ \mathbf{A}_{\Phi V} & \mathbf{A}_{\Phi\Phi} \end{pmatrix}.
% \]
% The diagonal block $\mathbf{A}_{VV}$ is assembled from $\nabla V$ terms evaluated at individual particle positions. Since each of the $N$ particles contributes independently, $\mathbf{A}_{VV}$ accumulates $MN$ effectively independent samples per time step, making it well-conditioned. In contrast, the block $\mathbf{A}_{\Phi\Phi}$ is assembled from $\nabla \Phi$ terms evaluated at pairwise differences $X_i - X_j$. While there are $N(N-1)/2$ distinct pairs, these pairs share particles and are thus correlated. Moreover, the interaction force enters the loss through the ensemble average $\frac{1}{N}\sum_{j} \nabla \Phi(X_i - X_j)$, which introduces a factor of $1/N^2$ in the dissipation term. Consequently, $\mathbf{A}_{\Phi\Phi}$ has smaller eigenvalues and a larger condition number than $\mathbf{A}_{VV}$.

To stabilize the solution, we apply Tikhonov (ridge) regularization:
\begin{equation*}% \label{eq:tikhonov}
(\mathbf{A}_{M,\Delta t} + \lambda \mathbf{I}) \hat{\theta} = \mathbf{b}_{M,\Delta t},
\end{equation*}
and we select the hyperparameter  $\lambda > 0$ via the Hansen L-curve method \cite{hansen1992analysis}. Specifically, we compute the SVD of $\mathbf{A}_{M,\Delta t}$, evaluate the L-curve (log residual norm vs.\ log solution norm) analytically over a grid of candidate $\lambda$ values, and choose the $\lambda$ that maximizes the curvature $\kappa(\lambda)$ of the curve $(\log \|\mathbf{A}_{M,\Delta t} \hat{\theta}_\lambda - \mathbf{b}_{M,\Delta t}\|, \log \|\hat{\theta}_\lambda\|)$. When the curve is flat (curvature $|\kappa| < 0.01$, indicating an absence of a L-shaped corner), we fall back to a minimal regularization $\lambda = 10^{-6}$.

\smallskip
We summarize the procedure in Algorithm \ref{alg:basis_regression}. The computational cost of assembling $\mathbf{A}_{M,\Delta t}$ and $\mathbf{b}_{M,\Delta t}$ is $O(M L N^2 K^2)$ due to the pairwise interactions. The linear solve step has a cost of $O(K^3)$, which is negligible compared to the assembly step for moderate $K$.  

\begin{algorithm}
\caption{Least squares regression with basis functions}
\label{alg:basis_regression}
\begin{algorithmic}[1]
\STATE \textbf{Input:} Unlabeled ensembles $\mathcal{D} = \{\bX_{t_\ell}^{1:M}\}_{\ell=0}^{L}$, basis functions $\{\psi_k\}_{k=1}^{K_V}$ and $\{\phi_l^{sym}\}_{l=1}^{K_\Phi}$.

\STATE \textbf{Assemble:} Assemble the matrices $\mathbf{A}_{M,\Delta t}$ and $\mathbf{b}_{M,\Delta t}$ from the basis functions and the unlabeled data in parallel for each snapshot pair $(\ell, m) \in \{0,\dots,L{-}1\}\times\{1,\dots,M\}$.

\STATE \textbf{Solve:} $\hat{\theta} = (\mathbf{A}_{M,\Delta t} + \lambda\,\mathbf{I})^{-1}\,\mathbf{b}_{M,\Delta t}$ with $\lambda$ selected via the Hansen L-curve method.

\STATE \textbf{Output:} Recovered potentials $V_{\hat{\alpha}}$, $\Phi_{\hat{\beta}}$ with $\hat{\theta} = (\hat{\alpha}, \hat{\beta})$.
\end{algorithmic}
\end{algorithm}

% \noindent\textbf{Algorithm 1: Least squares regression with basis functions} \\
% \noindent\textit{Input:} Unlabeled ensembles $\mathcal{D} = \{\bX_{t_\ell}^{1:M}\}_{\ell=0}^{L}$, basis functions $\{\psi_k\}_{k=1}^{K_V}$ and $\{\phi_l^{sym}\}_{l=1}^{K_\Phi}$.
% \begin{enumerate}
%     \item \textbf{Assemble. } Assemble the matrices $\mathbf{A}_{M,\Delta t}$ and $\mathbf{b}_{M,\Delta t}$ from the basis functions and the unlabeled data in parallel for each snapshot pair $(\ell, m) \in \{0,\dots,L{-}1\}\times\{1,\dots,M\}$. 

%     \item \textbf{Solve:} $\hat{\theta} = (\mathbf{A}_{M,\Delta t} + \lambda\,\mathbf{I})^{-1}\,\mathbf{b}_{M,\Delta t}$ with $\lambda$ selected via the Hansen L-curve method. 

%     \item \textbf{Output:} Recovered potentials $V_{\hat{\alpha}}$, $\Phi_{\hat{\beta}}$ with $\hat{\theta} = (\hat{\alpha}, \hat{\beta})$.
% \end{enumerate}

\subsection{Neural network regression}
\label{sec:NN_regression}
When the dimension is high or the potentials are complex and suitable basis functions are unknown, we employ a deep neural network to approximate the potentials. Let $V_{\alpha}(x)$ and $\Psi_{\beta}(x)$ be parameterized by a neural network, where $\Phi_{\beta}(x) = \frac{1}{2}(\Psi_{\beta}(x) + \Psi_{\beta}(-x))$ to enforce evenness ($\Phi(x) = \Phi(-x)$). Our loss function depends on the gradients and the Laplacians of the potential, which we compute using Automatic Differentiation (AD).

We compute the loss function efficiently using batches of $\ell,m$, which allows us to leverage parallelism and reduce memory usage, since we can compute the terms in \eqref{eq:loss_terms} for each unlabeled sample pair $(\bX_{t_\ell}^m, \bX_{t_{\ell+1}}^{m})$. 

The network parameters $\theta = (\alpha, \beta)$ are optimized using stochastic gradient descent with the Adam optimizer. The computational cost per iteration is $O(B N^2)$ due to the pairwise interactions, where $B$ is the batch size. The number of iterations depends on the optimization dynamics and convergence criteria. In particular, since the minimizer of the loss function is negative (see Remark \ref{rem:negative_minimum}), we use \emph{early stopping} based on the flattening of the loss curve and the stabilization of the validation loss to prevent overfitting.

We summarize the procedure in Algorithm \ref{alg:nn_self_test}. 

\begin{algorithm}
\caption{Neural Network Estimation via Trajectory-Free Loss}
\label{alg:nn_self_test}
\begin{algorithmic}[1]
\STATE \textbf{Input:} Unlabeled ensembles $\mathcal{D} = \{\bX_{t_\ell}^{m}\}_{\ell,m=1}^{L,M}$, batch size $B$, and termination criteria (e.g., max epochs, early stopping).

\STATE \textbf{Initialize:} Network parameters $\theta= (\alpha,\beta)$.

\STATE \textbf{Optimization Loop:} Until termination criteria are met, perform the following steps:
\begin{enumerate}[label=(\alph*), noitemsep, topsep=2pt]
    \item Sample a batch of indices $\mathcal{I}_{batch} \subset \{1, \dots, L-1\}\times\{1, \dots, M\}$ of size $B$. 
    \item For each $(\ell,m) \in \mathcal{I}_{batch}$, using the unlabeled snapshot pair $(\bX_{t_\ell}^{m}, \bX_{t_{\ell+1}}^{m})$:
    \begin{enumerate}[label=(\roman*), noitemsep, topsep=2pt]
        \item \textbf{Compute Gradients/Laplacians:} For each particle $i$, compute $\nabla V_\alpha(X_{t_\ell}^{i,m})$, $\Delta V_\alpha(X_{t_\ell}^{i,m})$, $\nabla \Phi_\beta$ and $\Delta \Phi_\beta$ using automatic differentiation.
        \item \textbf{Evaluate Integral Estimates:} Compute $J_{diss}^{(\ell,m)}$, $J_{diff}^{(\ell,m)}$, and $\delta E_{f}^{(\ell,m)}$ in batches.
    \end{enumerate}
    \item Loss computation by assembling the batch estimates:
    \begin{equation*}
    \mathcal{L}(\theta) = \frac{1}{B} \sum_{\ell,m \in \mathcal{I}_{batch}}  \left[\frac{1}{2} J_{diss}^{(\ell,m)} \Delta t - \frac{\sigma^2}{2} J_{diff}^{(\ell,m)} \Delta t + \delta E_{f}^{(\ell,m)}\right],
    \end{equation*}
    \item Update: $\theta \leftarrow \theta - \gamma \nabla_\theta \mathcal{L}$ (e.g., Adam optimizer).
\end{enumerate}

\STATE \textbf{Output:} Learned potentials $V_{\alpha}, \Phi_{\beta}$.
\end{algorithmic}
\end{algorithm}

A visual overview of the two algorithms is shown in Figure~\ref{fig:algorithm_combined_flowchart}.

\begin{figure}[ht!]
\centering
\includegraphics[width=0.98\textwidth]{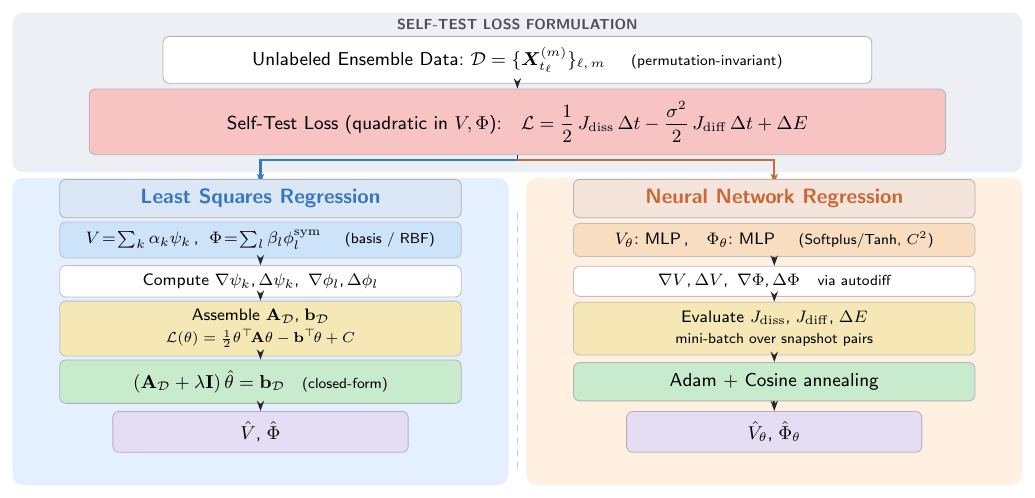}
\caption{Workflow of both estimation algorithms using the self-test loss function.  \textbf{Left:} Least squares regression expands $V$ and $\Phi$ in prescribed basis functions (or RBFs), and solves the minimizer via least squares with regularization. \textbf{Right:} Neural network regression parameterizes $V_\theta$ and $\Phi_\theta$ by neural networks, computes derivatives via automatic differentiation, and minimizes the loss using stochastic gradient descent.}
\label{fig:algorithm_combined_flowchart}
\end{figure}

%%%%%%%%%%%%%
\subsection{Baseline methods and computational complexity}\label{sec:baselines}

To evaluate the performance of the proposed self-test approach, we compare it against several complementary estimation strategies that represent different combinations of assumptions and methodologies. These baselines serve to contextualize the results and highlight the advantages of the self-test approach in various settings.

\textbf{1. MLE with labeled data: ideal upper benchmark.}
This method assumes that the true particle labels are known across time, so trajectories are fully observed. It fits the drift by minimizing the negative log-likelihood using the Euler--Maruyama discretization,
% \[
% v_i^\ell := \frac{X_{t_{\ell+1}}^i - X_{t_\ell}^i}{\Delta t}
% \approx -\nabla V(X_{t_\ell}^i) - \frac{1}{N}\sum_{j\neq i}\nabla \Phi(X_{t_\ell}^i-X_{t_\ell}^j) + \text{noise}.
% \]
$$
\mathcal{L}_{MLE}(\Phi, V) = \sum_{\ell, i} \left\| X_{t_{\ell+1}}^{i} - X_{t_\ell}^i + \big( \frac{1}{N}\sum_{j \neq i} \nabla \Phi(X_{t_\ell}^i - X_{t_\ell}^j) + \nabla V(X_{t_\ell}^i) \big)\Delta t  \right\|^2.
$$
This baseline represents the \emph{best-case trajectory-based estimator}: it uses information that is unavailable in the unlabeled setting, and therefore provides an ideal upper bar for the self-test approach. It is asymptotically optimal when the model is correctly specified and $\Delta t \to 0$. Its limitation is that it cannot be applied to unlabeled data, and its performance degrades rapidly as $\Delta t$ increases due to the bias in the finite-difference velocity estimation. 

\textbf{2. Sinkhorn MLE: practical unlabeled baseline.} 
For unlabeled data, a natural practical approach is to first reconstruct trajectories via label assignment and then apply the MLE regression. To do so, between two consecutive snapshots, we compute an entropically regularized optimal transport coupling with cost
\[
C_{ij} = |X_{t_\ell}^i - X_{t_{\ell+1}}^j|^2,
\]
using the Sinkhorn algorithm \cite{cuturi2013sinkhorn, feydy2019interpolating}, and convert the coupling into matched pairs and hence the particle labels. The resulting pseudo-trajectories are then used in the MLE regression above to obtain potential estimates. This baseline is a practical competitor for \emph{unlabeled} data: it combines a widely used matching procedure with a standard estimator. Its limitation is that matching errors grow quickly with $\Delta t$ and diffusion $\sigma$, leading to biased potential estimates. Importantly, the computational cost of the Sinkhorn algorithm is significant compared to the self-test regression, since it requires solving an optimization problem of label assignment for each snapshot pair; see Table \ref{tab:method_comparison}. 
% $O(N^2ML)$, where $N^2$ arises from the pairwise cost matrix, $M$ is the number of samples, and $L$ is the number of snapshot pairs. This cost is significant compared to the self-test regression, which does not require matching and has a cost of $O(N^2ML)$ for the pairwise interactions in the loss function, but without the additional overhead of the Sinkhorn iterations. 

\smallskip  
To evaluate the performance of the self-test approach relative to these baselines, we consider a linear parametric setting where the potentials are expanded in a known set of basis functions, as described in Section~\ref{sec:basis_func}. In this setting, we can apply the three estimation strategies with the same basis function representation, leading to labeled MLE, self-test regression, and Sinkhorn MLE estimators. This allows for a systematic comparison of the methods under controlled conditions, where the true functional form is known and the only difference is the estimation strategy and the availability of labels. 

In addition, we apply the self-test NN estimator, which does not rely on basis functions and can capture more complex potentials, providing a complementary perspective on the performance of the self-test approach in a nonparametric setting. We do not combine Sinkhorn with NN regression, since inaccurate label recovery at large $\Delta t$ introduces biased potential estimates that a more expressive regressor cannot correct. 

\paragraph{Computational complexity.}\label{sec:method_summary}
Table~\ref{tab:method_comparison} shows the computational complexity of these methods.

\begin{table}[ht]
\centering
\caption{Computational complexity of all estimation methods. Here, $M$ is the number of trajectories, $L$ is the number of time snapshots, $N$ is the number of particles, $d$ is the dimension, $K = K_V {+} K_\Phi$ is the basis size, $P_V, P_\Phi$ are the network parameters, $E$ is the number of epochs, and $T_S$ is the number of Sinkhorn iterations.}
\label{tab:method_comparison}
\small
\begin{tabular}{l c c l}
\toprule
Method &   {Labeled} &  \#Params & Complexity \\
\midrule
Labeled MLE         & \checkmark  & $K$ & \multirow{2}{*}{$O\bigl(MLd\,NK(N{+}K)+K^3\bigr)$} \\
Self-Test LSE  & ---         & $K$ & \\
\cmidrule(lr){1-4}
Sinkhorn MLE    & ---         & $K$ & $O\bigl(MLd[ NK(N{+}K)+ T_Sd N^2] {+}K^3\bigr)$ \\
\midrule
Self-Test NN       & ---         & $P$  & $O\bigl(EMLd(NP_V + N^2P_\Phi)\bigr)$ \\
\bottomrule
\end{tabular}
\end{table}

The labeled MLE is the ideal benchmark with the lowest complexity, since it does not require matching. Its complexity is dominated by the evaluation of pairwise interactions for each particle and each snapshot, leading to $O(N^2)$ per snapshot, and the linear solve step for the basis regression, leading to $O(K^3)$, where $K$ is the number of basis functions. The self-test LSE has the same asymptotic complexity as the labeled MLE, since it also relies on pairwise interactions and a linear solve step, but it does not require labels. 

The Sinkhorn MLE method adds the cost of the Sinkhorn algorithm for label matching, which is $O(T_S d N^2)$ per snapshot, where $T_S$ is the number of Sinkhorn iterations. This significantly increases the computational cost compared to the self-test regression, especially for large $N$ and $ML$. 

The self-test NN method minimizes the loss function via iterative optimization using stochastic gradient descent, and the cost per iteration depends on the network architecture and the number of epochs required for convergence. The cost is dominated by the forward and backward passes through the network for each particle and each snapshot, leading to $O(EMLd(NP_V + N^2P_\Phi))$, where $E$ is the number of epochs, $P_V$ and $P_\Phi$ are the number of parameters in the networks for $V$ and $\Phi$, respectively.

Thus, the computational complexity of the methods increases from the labeled MLE and the self-test LSE (same and lowest), to the Sinkhorn MLE (higher due to matching), and finally to the self-test NN (highest due to iterative optimization).

%%%%%%%%%%%%%%%%=========
\section{Error bounds for the parametric estimator}\label{sec:theory}
This section establishes the convergence and error bounds of the self-test estimator in the linear parametric setting in Section \ref{sec:basis_func}. We show that the coercivity of the dissipation term guarantees the well-posedness of the inverse problem in the large sample limit. The finite-sample error bounds are then derived using matrix perturbation theory, explicitly connecting the sample size $M$, observation time gap $\Delta t$, and particle number $N$. The nonparametric neural network estimator is more challenging to analyze due to the non-convexity of the loss landscape and the implicit regularization effects of the optimization, which we leave for future work.

%%%%%%% 
% \subsection{Parametric setup and main results}
Recall that in the linear parametric setting, the potentials are expanded in a finite set of basis functions (as in \eqref{eq:basis_expansion}): 
\[
    V_\alpha(x) = \alpha^\top \boldsymbol \psi(x),
    \qquad
    \Phi_\beta(x) = \beta^\top \boldsymbol \phi(x),
    \qquad
    \theta=(\alpha,\beta)\in\R^K,
    \qquad
    K=K_V+K_\Phi, 
\]
where the basis functions are collected into the vectors
\[
    \boldsymbol \psi(x) = \bigl(\psi_1(x),\dots,\psi_{K_V}(x)\bigr)^\top,
    \qquad
    \boldsymbol \phi(x) = \bigl(\phi^{sym}_1(x),\dots,\phi^{sym}_{K_\Phi}(x)\bigr)^\top.
\]
The estimator $\widehat\theta_{M,\Delta t}$ solves the normal equation \eqref{eq:normalEq}, i.e., 
$$\widehat\theta_{M,\Delta t} = \mathbf A_{M,\Delta t}^\dag \mathbf b_{M,\Delta t}, $$
where the empirical normal matrix $\mathbf{A}_{M,\Delta t}$ and normal vector $\mathbf{b}_{M,\Delta t}$ are given by 
\begin{equation*}
    \mathbf A_{M,\Delta t}
    := \frac1{ML}\sum_{m=1}^M\sum_{\ell=0}^{L-1}
    \frac1N\sum_{i=1}^N (\mathbf F_{i,t_\ell}^m)^\top \mathbf F_{i,t_\ell}^m, \quad    \mathbf b_{M,\Delta t}
    := \frac1{ML}\sum_{m=1}^M\sum_{\ell=0}^{L-1} \frac{\sigma^2}{2}\boldsymbol\delta_{t_\ell}^m
    - \frac1{MT}\sum_{m=1}^M \bigl(\boldsymbol h_T^m-\boldsymbol h_0^m\bigr).
\end{equation*}
Here, $\mathbf F_{i,t_\ell}^m$, $\boldsymbol\delta_{t_\ell}^m$, and $\boldsymbol h_t^m$ are the regression vectors defined in \eqref{eq:regressionVectors}, which depend on the gradients, Laplacians, and values of the basis functions, evaluated at the sample $\bX_{t_\ell}^m$.

Our goal is to prove that the estimator $\widehat\theta_{M,\Delta t}$ converges to the true parameter $\theta_*= (\alpha_*,\beta_*)$ as $M\to\infty$ and $\Delta t\to 0$.

We make the following assumptions on the data-generating process and the basis functions. 
\begin{assumption}% [Asumptions on the process and the basis functions]
    \label{ass:parametric_theory}
Conditions on the distribution of $(\bX_t)_{t\in[0,T]}$ and the basis functions: 
\begin{itemize}
    \item \textbf{Basis moment bounds.} The basis functions $\psi_k,\phi_\ell^{sym}\in C_b^4(\R^d)$ satisfy 
    \[
     c_{max,p} = \max_{1\le k\le K_V}\|\psi_k\|_{C_b^p(\R^d)}+\max_{1\le \ell\le K_\Phi}\|\phi_\ell^{sym}\|_{C_b^p(\R^d)}<\infty, 2\leq p\leq 4, 
    \]
     each $\phi_\ell^{sym}$ is symmetric, and the time-averaged Gram matrices 
    \begin{equation}\label{eq:GV_GPhi_skeleton}
        \mathbf G_V := \frac1T\int_0^T \mathbb E\big[\nabla\boldsymbol\psi(X_t^1)^\top\nabla\boldsymbol\psi(X_t^1)\big]dt,
        \quad
        \mathbf G_\Phi := \frac1T\int_0^T \mathbb E\big[\nabla\boldsymbol\phi(X_t^{1,2})^\top\nabla\boldsymbol\phi(X_t^{1,2})\big]dt 
    \end{equation} 
    are positive definite, where $\nabla \boldsymbol \psi=(\nabla\psi_1,\dots,\nabla\psi_{K_V})$ and $\nabla\boldsymbol\phi=(\nabla\phi_1^{sym},\dots,\nabla\phi_{K_\Phi}^{sym})$. Also, the true potentials $V_{\alpha_*}$ and $\Phi_{\beta_*}$ are in the span of the basis functions. 
    
    \item \textbf{Exchangeability and conditional independence.} For each $t\in[0,T]$, the law of $\bX_t=(X_t^1,\dots,X_t^N)$ is exchangeable.  For a.e.~$t\in[0,T]$, there exists a sub-$\sigma$-algebra $\mathcal F_t$ such that $X_t^1-X_t^2$ and $X_t^1-X_t^3$ are conditionally independent given $\mathcal F_t$. 

    \item \textbf{Weak cross-correlation.} There exists $c_0\in[0,1)$ s.t.~for every $\theta=(\alpha,\beta)\in\R^{K_V}\times\R^{K_\Phi}$,
    \begin{equation}\label{eq:compatibility_skeleton}
        \bigg|\int_0^T \mathbb E\big[\nabla V_\alpha(X_t^1)\cdot \nabla\Phi_\beta(X_t^1-X_t^2)\big]dt\bigg|
        \le
        \frac{c_0N}{N-1}  \int_0^T   \mathbb E\big[|\nabla V_\alpha(X_t^1)|^2 +c_N |\nabla \Phi_\beta(X_t^1-X_t^2)|^2\big]  dt, 
    \end{equation}
    where $c_N = \frac{N-1}{N^2}$. 
\end{itemize}
\end{assumption}

The first item is a regularity and moment-bounds condition on the basis functions and the data, which are natural assumptions for using It\^o calculus and the Riemann-sum approximation. In particular, when the basis functions are in $C_b^2(\R^d)$, the processes  $\mathbf  h_t= \mathbf h(\bX_t)$, $\boldsymbol\delta_t = \boldsymbol\delta(\bX_t)$, and $\mathbf F_{i,t} = \mathbf F_i(\bX_t)$ in \eqref{eq:regressionVectors} satisfy
    \[
        \mathbb E\big[\sup_{t\in[0,T]} |\boldsymbol h_t|^2\big] < K c_{max,2}^2,
        \quad
        \mathbb E\big[\sup_{t\in[0,T]} |\boldsymbol\delta_t|^2\big] < K c_{max,2}^2,
        \quad
        \mathbb E\big[\sup_{t\in[0,T]} \frac1N\sum_{i=1}^N \|\mathbf F_{i,t}\|_F^2\big] < K^2 c_{max,2}^2. 
    \]
 The $C^4_b$ condition is used to control the time-discretization error in Lemma~\ref{lem:discretization_bound}, and it can be replaced by a weaker condition that the maps $ t\mapsto \mathbb E\left[\frac1N\sum_{i=1}^N \mathbf F_{i,t}^\top \mathbf F_{i,t}\right]$ and $t\mapsto \mathbb E[\boldsymbol\delta_t]$ are Lipschitz continuous on $[0,T]$.
    
The second item on exchangeability and conditional independence is used to establish coercivity, which is essential for well-posedness in estimating the interaction potential. The third item, a weak cross-correlation between the external and interaction terms, ensures the coercivity of the dissipation term, which guarantees that the joint estimation of $V$ and $\Phi$ is well-posed. The weak cross-correlation condition is automatically satisfied with $c_0=0$ if the basis functions for $V$ and $\Phi$ are chosen to be orthogonal in the sense that $\int \nabla\psi_k(x)\cdot\nabla\phi_\ell^{sym}(x-y) d\mu_t(x,y)=0$ for every $k,\ell$ and a.e.~$t\in[0,T]$, where $\mu_t$ is the joint law of $(X_t^1,X_t^2)$. In general, such orthogonality may not hold, but the weak cross-correlation condition allows for some degree of correlation while still ensuring identifiability. Also, the factors $\frac{N}{N-1}$ and $c_N$ in \eqref{eq:compatibility_skeleton} are a technical artifact that arises from the symmetrization of the interaction potential, and their product converges to 0 as $N\to \infty$, agreeing with the ill-posedness in estimating $\Phi$ in the mean-field limit in \cite{LangLu22,LangLu21id,gao2024self}.

\begin{theorem}[Error bound]\label{thm:error_bound}
Under Assumption~{\rm\ref{ass:parametric_theory}}, for every $\eta\in(0,1)$, we have 
\begin{equation}\label{eq:parametric_error}
    |\widehat  \theta_{M,\Delta t}-\theta_*|
    \le
    C\left(\Delta t + \frac{1}{\sqrt{M\eta}}\right)
\end{equation}
with probability at least $1-\eta$, provided $M\ge M_0/\eta$ and $\Delta t\le \Delta_0$ for some constants $C, M_0$ and $\Delta_0$ independent of $M$ and $\Delta t$.
\end{theorem}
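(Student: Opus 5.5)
Theorem~\ref{thm:error_bound} follows the standard perturbation argument for least squares. I would introduce the population quantities
\[
\mathbf A_\infty := \frac1T\int_0^T \mathbb E\Big[\frac1N\sum_{i=1}^N \mathbf F_{i,t}^\top\mathbf F_{i,t}\Big]dt,\qquad
\mathbf b_\infty := \frac1T\int_0^T \mathbb E\Big[\frac{\sigma^2}{2}\boldsymbol\delta_t\Big]dt-\frac1T\mathbb E\big[\boldsymbol h_T-\boldsymbol h_0\big].
\]
The first step is to show that $\mathbf A_\infty\theta_*=\mathbf b_\infty$. Apply It\^o's formula to $E_f(\bX_t)$ with the true potentials, or more precisely to each component $\boldsymbol h(\bX_t)$, as in \eqref{eq:energy_balance}. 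Each component of $\boldsymbol h$ is a free energy of a single basis potential, so its drift is $-\frac1N\sum_i \mathbf F_{i,t}^\top(\nabla V_{\alpha_*}+\nabla\Phi_{\beta_*}*\mu_t^N)(X_t^i)+\frac{\sigma^2}{2}\boldsymbol\delta_t$. This is where the evenness of $\phi^{sym}$ and the $j\neq i$ convention are used. Since $\frac1N\sum_j\nabla\Phi_{\beta_*}=\mathbf F_{i,t}\theta_*$, taking expectations kills the martingale, which is square integrable because the basis functions are in $C_b^2$. This yields $\mathbb E[\boldsymbol h_T-\boldsymbol h_0]=-T\mathbf A_\infty\theta_*+\frac{\sigma^2}{2}\int_0^T\mathbb E\boldsymbol\delta_t\,dt$, which is the claimed identity.

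Next I would establish coercivity: $\lambda_{\min}(\mathbf A_\infty)\ge c_{\mathcal H}>0$. I expect this to be the main obstacle, and I would rely on the coercivity result referenced as Proposition~\ref{prop:joint_coercivity}. If it has to be proved, the plan is as follows. Expand $\theta^\top\mathbf A_\infty\theta=\frac1T\int\mathbb E|\nabla V_\alpha(X^1)+\frac1N\sum_{j\ne1}\nabla\Phi_\beta(X^1-X^j)|^2$ using exchangeability. The diagonal terms give $\mathbb E|\nabla V_\alpha|^2+\frac{N-1}{N^2}\mathbb E|\nabla\Phi_\beta(X^{1,2})|^2$. For the cross-interaction terms $\mathbb E[\nabla\Phi_\beta(X^{1,2})\cdot\nabla\Phi_\beta(X^{1,3})]$, conditional independence given $\mathcal F_t$ turns each into $\mathbb E|\mathbb E[\nabla\Phi_\beta(X^{1,2})|\mathcal F_t]|^2\ge0$, so they can be dropped. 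The $V$–$\Phi$ cross terms total $2\frac{N-1}{N}\mathbb E[\nabla V_\alpha\cdot\nabla\Phi_\beta]$, and \eqref{eq:compatibility_skeleton} bounds them by $2c_0(\ldots)$. Strictly, the factor $2$ requires care, for instance by treating $c_0<1/2$ or by using a weighted Young inequality. This leaves $(1-c_0)\big(\alpha^\top\mathbf G_V\alpha+c_N\beta^\top\mathbf G_\Phi\beta\big)$, which is positive definite since $\mathbf G_V$ and $\mathbf G_\Phi$ are.

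Third, I would decompose the error. Write $\mathbf A_{M,\Delta t}-\mathbf A_\infty=(\mathbf A_{M,\Delta t}-\mathbb E\mathbf A_{M,\Delta t})+(\mathbb E\mathbf A_{M,\Delta t}-\mathbf A_\infty)$, and do the same for $\mathbf b$. The deterministic parts are Riemann-sum errors, which are $O(\Delta t)$ by Lemma~\ref{lem:discretization_bound}, using Lipschitz continuity in $t$ of the expectations. That Lipschitz continuity comes from It\^o applied to $\mathbf F^\top\mathbf F$ and $\boldsymbol\delta$, which involve up to fourth derivatives and hence the $C_b^4$ assumption. The energy term in $\mathbf b$ is exact, with no discretization. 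The random parts are averages of $M$ i.i.d.\ trajectory summaries whose second moments are bounded by the displayed moment bounds. Chebyshev's inequality, combined with a union bound over the two events each at level $\eta/2$, gives $\|\mathbf A_{M,\Delta t}-\mathbb E\mathbf A_{M,\Delta t}\|+|\mathbf b_{M,\Delta t}-\mathbb E\mathbf b_{M,\Delta t}|\le C/\sqrt{M\eta}$ with probability at least $1-\eta$.

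Finally, I would conclude. Choose $M_0$ and $\Delta_0$ so that $\|\mathbf A_{M,\Delta t}-\mathbf A_\infty\|\le c_{\mathcal H}/2$; then $\mathbf A_{M,\Delta t}$ is invertible with $\|\mathbf A_{M,\Delta t}^{-1}\|\le 2/c_{\mathcal H}$. Using $\mathbf A_\infty\theta_*=\mathbf b_\infty$,
\[
\widehat\theta_{M,\Delta t}-\theta_*=\mathbf A_{M,\Delta t}^{-1}\big[(\mathbf b_{M,\Delta t}-\mathbf b_\infty)-(\mathbf A_{M,\Delta t}-\mathbf A_\infty)\theta_*\big],
\]
which is bounded by $\frac{2}{c_{\mathcal H}}(1+|\theta_*|)\,C(\Delta t+1/\sqrt{M\eta})$. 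This gives \eqref{eq:parametric_error}.
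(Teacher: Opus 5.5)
Your proposal is essentially the paper's proof: It\^o's formula on $\boldsymbol h_t$ to get $\mathbf A_*\theta_*=\mathbf b_*$, coercivity from exchangeability, conditional independence and the cross-correlation bound, a Lipschitz-in-time Riemann-sum error, Chebyshev concentration, and matrix perturbation. The only difference is that you perturb $\mathbf A_{M,\Delta t}$ directly around $\mathbf A_*$ rather than passing through $\theta_\Delta=\mathbf A_\Delta^{-1}\mathbf b_\Delta$, which is equally valid and avoids bounding $|\theta_\Delta|$.

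Your worry about the factor $2$ is justified: the cross term yields $(1-2c_0)$, not the $(1-c_0)$ stated in Proposition~\ref{prop:joint_coercivity}. The left side of \eqref{eq:compatibility_skeleton} is unchanged under $(\alpha,\beta)\mapsto(s\alpha,\beta/s)$, so optimizing over $s$ shows the hypothesis controls the cross term by at most $4c_0\sqrt{ab}$, where $a$ and $b$ are the $V$ and $c_N$-weighted $\Phi$ energies, and this bound is sharp. Hence no weighted Young inequality helps: the argument (yours and the paper's) genuinely needs $c_0<1/2$.
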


The constants in Theorem~\ref{thm:error_bound} can be chosen as $\Delta_0=\frac{\mu_*}{4L_A}$, $M_0 = \frac{32 C_A}{\mu_*^2}$ and $C = \max\{C_1, C_2\}$ with $C_1=\frac{4}{3\mu_*}\big(\frac{\sigma^2L_b}{2}+L_A|\theta_*|\big)$ and $C_2=\frac{4\sqrt{2}}{\mu_*}\max\left\{\sqrt{C_b},\; \sqrt{C_A} (|\theta_*|+ C_1\Delta_0)\right\}$.  
Here, $\mu_*=\lambda_{\min}(\mathbf A_*)$ with $\mathbf A_* $ is the continuum expected normal matrix in \eqref{eq:population_Ab}, and the constants $L_A, L_b, C_A, C_b$ arise in Lemma~\ref{lem:discretization_bound} and Lemma~\ref{lem:concentration} for controlling the discretization error and concentration and are given by   
\begin{align*}
L_A & = C_{d,K}\,N\Bigl(2d\,c_{max,1}+\frac{\sigma^2}{2}\Bigr)c_{max,3}^2,
\qquad
L_b = C_{d,K}\,N\Bigl(2d\,c_{max,1}+\frac{\sigma^2}{2}\Bigr)c_{max,4}, \\
C_A & = 4d^2K^2 c_{max,2}^4,
\qquad
C_b = \Bigl(\sigma^4+\frac{16}{T^2}\Bigr)K^2 c_{max,4}^2
\end{align*}
with $C_{d,K}$ denoting a constant depending only on $d$ and $K$.

When the trapezoidal rule (Section~\ref{sec:discretization}) is used in place of the left-endpoint Riemann sum, the discretization bias improves from $O(\Delta t)$ to $O(\Delta t^2)$. 
% The $C_b^6$ regularity ensures that the maps $t\mapsto \mathbf G(t)$ and $t\mapsto \mathbf d(t)$ in the proof of Lemma~\ref{lem:discretization_bound} are $C^2$ in time, so the trapezoidal rule reduces the per-interval quadrature error from $O(\Delta t^2)$ to $O(\Delta t^3)$, yielding an $O(\Delta t^2)$ discretization bias (Lemma~\ref{lem:trap_discretization}). The concentration bounds (Lemma~\ref{lem:concentration}) are unchanged. Figure~\ref{fig:M_scaling}(right) confirms this prediction.

\begin{corollary}[Trapezoidal error bound]\label{cor:trapezoid_bound}
Under Assumption~{\rm\ref{ass:parametric_theory}} with the strengthened regularity $\psi_k, \phi_\ell^{sym} \in C_b^6(\R^d)$, define the trapezoidal estimator $\widehat\theta_{M,\Delta t}^{\,\mathrm{trap}}$ by replacing the left-endpoint sums in~\eqref{eq:normal_matrix_vector} with their trapezoidal counterparts:
\begin{equation*}%\label{eq:trap_normal}
    \begin{aligned}
    \mathbf{A}_{M,\Delta t}^{\mathrm{trap}}  & = \frac{1}{2MLN} \sum_{m,\ell}  \sum_i \Big[(\mathbf{F}_{i,t_\ell}^{m})^\top \mathbf{F}_{i,t_\ell}^{m} + (\mathbf{F}_{i,t_{\ell+1}}^{m})^\top \mathbf{F}_{i,t_{\ell+1}}^{m}\Big], \quad \\
     \mathbf{b}_{M,\Delta t}^{\mathrm{trap}} & =
    \frac{1}{MT} \sum_{m,\ell}\Big[ \frac{\sigma^2}{4} (\boldsymbol{\delta}_{t_\ell}^{m} + \boldsymbol{\delta}_{t_{\ell+1}}^{m}) \Delta t- (\mathbf{h}_{t_{\ell+1}}^{m} - \mathbf{h}_{t_\ell}^{m}) \Big].
    \end{aligned}
\end{equation*}
Then, for every $\eta\in(0,1)$, there exist constants $C$, $M_0$ and $\Delta_0$ such that
\begin{equation}\label{eq:trap_error}
    |\widehat\theta_{M,\Delta t}^{\,\mathrm{trap}}-\theta_*|
    \le
    C\!\left((\Delta t)^2 + \frac{1}{\sqrt{M\eta}}\right)
\end{equation}
with probability at least $1-\eta$ for all $M\ge M_0/\eta$ and $\Delta t\le \Delta_0$.
\end{corollary}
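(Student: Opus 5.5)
The plan is to reuse the perturbation argument behind Theorem~\ref{thm:error_bound} and change only the discretization step. Define the continuum population quantities
\[
\mathbf G(t):=\mathbb E\Big[\tfrac1N\textstyle\sum_i \mathbf F_{i,t}^\top\mathbf F_{i,t}\Big],\qquad \mathbf d(t):=\mathbb E[\boldsymbol\delta_t],
\]
\[
\mathbf A_*=\tfrac1T\int_0^T\mathbf G(t)\,dt,\qquad \mathbf b_*=\tfrac{\sigma^2}{2T}\int_0^T\mathbf d(t)\,dt-\tfrac1T\mathbb E[\mathbf h_T-\mathbf h_0].
\]
By It\^o's formula applied to $E_f$ (the energy balance \eqref{eq:energy_balance}), taking expectations so the martingale vanishes, we get $\mathbf A_*\theta_*=\mathbf b_*$. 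Coercivity under Assumption~\ref{ass:parametric_theory} gives $\mu_*=\lambda_{\min}(\mathbf A_*)>0$, as in the theorem.

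We then split $\mathbf A^{\mathrm{trap}}_{M,\Delta t}-\mathbf A_*$ into a fluctuation part and a bias part:
\[
\mathbf A^{\mathrm{trap}}_{M,\Delta t}-\mathbf A_* = \big(\mathbf A^{\mathrm{trap}}_{M,\Delta t}-\mathbb E\mathbf A^{\mathrm{trap}}_{M,\Delta t}\big) + \big(\mathbb E\mathbf A^{\mathrm{trap}}_{M,\Delta t}-\mathbf A_*\big),
\]
and do the same for $\mathbf b$. For the fluctuation part we invoke Lemma~\ref{lem:concentration} essentially unchanged. The trapezoidal average is a convex combination of the same bounded summands, so the variance bounds $C_A/M$ and $C_b/M$ survive, up to an absolute factor. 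Chebyshev then gives deviations of order $1/\sqrt{M\eta}$ with probability at least $1-\eta$.

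The bias part is exactly the trapezoidal quadrature error for $\frac1T\int_0^T\mathbf G$ and $\frac1T\int_0^T\mathbf d$. The $\mathbf h$ term telescopes, so it carries no bias. This is the main step. We need $t\mapsto\mathbf G(t)$ and $t\mapsto\mathbf d(t)$ to be $C^2$ with bounded second derivatives. Then the per-interval error is $O(\Delta t^3)$, and summing over $L=T/\Delta t$ intervals gives a global error of $O(\Delta t^2)$. To show this, write $\mathbf G(t)=\mathbb E[g(\bX_t)]$ with $g$ built from products of first derivatives of the basis functions. The Kolmogorov backward identity gives $\frac{d}{dt}\mathbb E[g(\bX_t)]=\mathbb E[\mathcal L g(\bX_t)]$ and $\frac{d^2}{dt^2}\mathbb E[g(\bX_t)]=\mathbb E[\mathcal L^2 g(\bX_t)]$, where $\mathcal L$ is the generator of \eqref{eq:opt_model_R} with drift $-\nabla(V_{\alpha_*}+\Phi_{\beta_*}*\mu^N)$.

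Applying $\mathcal L^2$ costs four derivatives, and the drift coefficients themselves need bounded derivatives. With $g$ involving first derivatives of $\psi_k,\phi^{sym}_\ell$, this requires $C_b^5$. The entries of $\mathbf d$ involve Laplacians, so they require $C_b^6$, which is precisely the strengthened hypothesis. The true potentials lie in the span of the basis, so the drift inherits this regularity. Tracking constants as in Lemma~\ref{lem:discretization_bound} gives $\|\mathbb E\mathbf A^{\mathrm{trap}}-\mathbf A_*\|\le L_A'\Delta t^2$ and $|\mathbb E\mathbf b^{\mathrm{trap}}-\mathbf b_*|\le \frac{\sigma^2}{2}L_b'\Delta t^2$. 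The obstacle here is the bookkeeping of the $N$-dependence and of the mixed generator terms, not anything conceptual.

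Finally, combine the two parts with the perturbation identity
\[
\widehat\theta-\theta_*=(\mathbf A^{\mathrm{trap}})^{-1}\big[(\mathbf b^{\mathrm{trap}}-\mathbf b_*)-(\mathbf A^{\mathrm{trap}}-\mathbf A_*)\theta_*\big].
\]
Choose $\Delta_0$ with $L_A'\Delta_0^2\le\mu_*/4$ and $M_0$ so that the concentration deviation is at most $\mu_*/4$ on the good event. On that event $\lambda_{\min}(\mathbf A^{\mathrm{trap}})\ge\mu_*/2$, so $(\mathbf A^{\mathrm{trap}})^{-1}$ has norm at most $2/\mu_*$. This yields $|\widehat\theta^{\,\mathrm{trap}}_{M,\Delta t}-\theta_*|\le C(\Delta t^2+1/\sqrt{M\eta})$, which is \eqref{eq:trap_error}.
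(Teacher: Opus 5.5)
Your argument follows the paper's proof: you get the $O((\Delta t)^2)$ bias from the composite trapezoidal error bound applied to $\mathbf G$ and $\mathbf d$, with $\mathbf G''(t)=\mathbb E[(\mathcal L_\star^2\Gamma_{kk'})(\bX_t)]$ controlled under the $C_b^6$ hypothesis (Lemma~\ref{lem:trap_discretization}). The Chebyshev concentration carries over unchanged, and a matrix perturbation finishes the proof. Perturbing directly around $\theta_*$, instead of routing through $\theta_\Delta^{\mathrm{trap}}$, is only a cosmetic difference; it merely spares you the bound on $|\theta_\Delta|$.

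One step is misjustified, though. The energy balance \eqref{eq:energy_balance} is It\^o's formula for $E_f=\mathbf h^\top\theta_*$, built from the same potentials that drive the dynamics. After taking expectations it yields only the scalar identity $\theta_*^\top\mathbf A_*\theta_*=\theta_*^\top\mathbf b_*$, not the vector equation $\mathbf A_*\theta_*=\mathbf b_*$ on which your perturbation argument rests. To get the vector equation, apply It\^o's formula to each component $h_k(\bX_t)$ (the energy of the $k$-th basis function) under the true dynamics. This is Lemma~\ref{lemma:population_limit}, which you can simply cite.
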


We postpone the proofs of the main results to Appendix~\ref{app:proofs_parametric} and give a sketch of the main ideas here.  To establish the error bound, the first task is to show that the \emph{continuum expectation normal equation} $\mathbf{A}_*\theta = \mathbf{b}_*$, where $\mathbf{A}_*= \lim_{M\to\infty, \Delta t\to 0} \mathbf{A}_{M,\Delta t}$ and $\mathbf{b}_*= \lim_{M\to\infty, \Delta t\to 0} \mathbf{b}_{M,\Delta t}$, has a unique solution at $\theta_* = \mathbf{A}_*^{-1}\mathbf{b}_*$. Here, a key step is to show that $\mathbf{A}_*$ is invertible, for which we introduce a coercivity condition. The second task is to quantify the distances between the empirical normal matrix/vector and the continuum expectation normal matrix/vector, and establish an error bound for the estimator $\widehat \theta_{M,\Delta t}$.

%%%%%%%%%%%%%%
\section{Numerical experiments}\label{sec:num}

This section examines the performance of the proposed self-test approach on synthetic data. We first validate the error bound in Theorem~\ref{thm:error_bound} and the velocity-free advantage of the self-test on a reference model that satisfies all assumptions (Sections~\ref{sec:setup}--\ref{sec:nn_results}). We then probe the boundaries of the theory by testing models that violate or stress different assumptions (Section~\ref{sec:boundary_tests}).

\subsection{Experimental setup}\label{sec:setup}
\label{sec:numerical_design}%
% We test the error bound on synthetic data where ground truth potentials are known.
% All experiments share the following data generation and evaluation protocol.

\paragraph{Data generation.} Ground truth trajectories are generated using the Euler--Maruyama scheme with total time $T = 1.0$, diffusion coefficient $\sigma = 1.0$, and unless otherwise specified, $N = 10$ particles in $d = 2$ dimensions.
We simulate $M = 20{,}000$ independent ensembles and generate observations under two protocols that differ in the relationship between the simulation step~$\delta t$ and the observation interval~$\Delta t_{\rm obs}$. Initial conditions are drawn as $X_0^{i,m} \sim \mathcal{N}(0, 0.25 \cdot I_d)$.
At each observation time, particle indices are randomly permuted to remove trajectory information. 

In the \emph{gap regime} ($\delta t \ll \Delta t_{\rm obs}$), a single set of $M$ fine-step trajectories is simulated with step $\delta t = 10^{-4}$ (random seed fixed across all $\Delta t_{\rm obs}$), and particle positions are recorded at intervals $\Delta t_{\rm obs} \in \{10^{-3}, 10^{-2}, 10^{-1}\}$ by striding the same fine-step trajectory at stride $\Delta t_{\rm obs}/\delta t$; this ensures that all observation frequencies share identical underlying dynamics and differ only in sampling rate. This is the default protocol for the method comparison, boundary tests, and the trapezoid-quadrature $M$-scaling study (Sections~\ref{sec:model_a_validation}--\ref{sec:boundary_tests}, Figure~\ref{fig:M_scaling}).

Additionally, to test the discrete-time model discussed in Section \ref{sec:discreteModel}, we consider a \emph{zero-gap regime} ($\delta t = \Delta t_{\rm obs}$), in which the data is simulated directly at each observation time step $\Delta t_{\rm obs} \in \{10^{-4}, 10^{-3}, 10^{-2}, 10^{-1}\}$. This regime is used for the $M$-scaling analysis in Figure~\ref{fig:discrete_bias}.

\paragraph{Evaluation metrics.} 
We evaluate accuracy using the relative $L^2(\rho)$ error on the gradients:
\[
\text{Err}(\nabla V) = \frac{\| \nabla \hat{V} - \nabla V^* \|_{L^2(\rho_V)}}{\| \nabla V^* \|_{L^2(\rho_V)}}, \quad
\text{Err}(\nabla \Phi) = \frac{\| \nabla \hat{\Phi} - \nabla \Phi^* \|_{L^2(\rho_\Phi)}}{\| \nabla \Phi^* \|_{L^2(\rho_\Phi)}},
\]
where, for radial models, $\rho_V$ is the one-dimensional empirical density of particle radial positions $r_i = |X_i| \in \R^+$ and $\rho_\Phi$ is the density of pairwise distances $r_{ij} = |X_i - X_j| \in \R^+$, both estimated via Gaussian KDE (bandwidth $h = 0.15\,\hat\sigma_{\rm data}$) on the full $M = 20{,}000$ dataset at $\Delta t_{\rm obs} = 10^{-3}$ ($2 \times 10^8$ particle positions for $\rho_V$, $9 \times 10^8$ pairwise distances for $\rho_\Phi$). Since $\rho_V$ and $\rho_\Phi$ are properties of the stationary distribution and do not depend on the observation frequency, this single KDE is used for all $\Delta t_{\rm obs}$ settings. For non-radial models, $\rho_V$ and $\rho_\Phi$ are the corresponding densities on $\R^d$.
The $L^2(\rho)$ integrals are computed via trapezoidal quadrature on a fixed grid of 2000 points over a model-specific range that covers the support of the empirical density.
All reported errors are mean $\pm$ std over 10 non-overlapping blocks of $M = 2{,}000$ from the $M = 20{,}000$ pool, each solved independently against the shared KDE density.

\subsection{Validation of the error bound and scaling laws}\label{sec:model_a_validation}

\paragraph{Reference model.}
We validate the error bound in Theorem~\ref{thm:error_bound} on a reference model that satisfies all assumptions of the theory, including $C^4_b$ regularity, finite basis moment bounds, and a well-conditioned Gram matrix.
The reference model has radial potentials of the form 
\begin{equation}\label{eq:ref_model}
V^*(x) = \tfrac{1}{2}\alpha_1 |x| + \alpha_2 |x|^2, \qquad
\Phi^*(r) = \sum_{k=1}^{2} \beta_k \exp\!\Bigl(-\frac{(r - c_k)^2}{2\sigma_k^2}\Bigr),
\end{equation}
with parameters $\alpha = (-1, 2)$, $\beta = (-3, 2)$. Here, $c = (0.75, 1.5)$, and $(\sigma_1,\sigma_2) = (0.125, 0.25)$.
The confining potential $V^*$ combines linear and quadratic radial terms.
Since $\Phi^* \in C^\infty(\R)$ with rapid Gaussian decay, the basis moment bounds $c_{max,p}$ are finite for all~$p$; its well-separated peaks yield a well-conditioned $\mathbf{G}_\Phi$.
The oracle basis consists of $K_V = 2$ functions ($|x|$ and $|x|^2$) for $V$ and $K_\Phi = 2$ Gaussian bumps for $\Phi$, giving total dimension $K = 4$.
We use this oracle basis to eliminate approximation error; all regularized solvers select $\lambda$ by Hansen's L-curve method.

\paragraph{Convergence rates in sample size.}
Figure~\ref{fig:M_scaling} shows the convergence rates in sample size for the left-endpoint Riemann sum and the trapezoidal rule on the reference model.
\begin{figure}[ht]
\centering
\includegraphics[width=\textwidth]{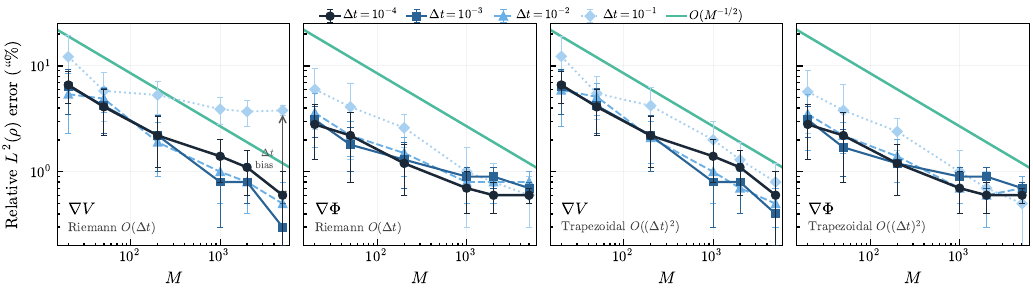}
\caption{$M$-scaling on the reference model under Riemann-sum (left pair) and trapezoidal (right pair) time integrations. Both with data generated with $\delta t = 10^{-4}$ and integrated with the various $\Delta t$ values.
The Riemann-sum has error bound $O(\Delta t + M^{-1/2})$, so the four $\Delta t$ values track the $O(M^{-1/2})$ rate (green line) until they saturate at an $O(\Delta t)$ floor (left pair), while the trapezoidal rule has error bound $O((\Delta t)^2 + M^{-1/2})$, so all four $\Delta t$ values track the $O(M^{-1/2})$ rate without saturation (right pair).
%  At $\Delta t = 10^{-1}$ the $\nabla V$ error saturates at ${\sim}3.8\%$ (arrow), an $O(\Delta t)$ discretization floor consistent with Theorem~\ref{thm:error_bound}.
% Right pair: trapezoidal rule.  All four $\Delta t$ values track the $O(M^{-1/2})$ rate (green line), consistent with the $O((\Delta t)^2 + M^{-1/2})$ bound in Corollary~\ref{cor:trapezoid_bound}.
The mean and standard deviation are computed over 10 trials per point; detailed numbers in Table~\ref{tab:M_scaling}.
}
\label{fig:M_scaling}
\end{figure}
The $O(\Delta t)$ discretization floor visible at $\Delta t = 10^{-1}$ (left pair) arises because the $O(\Delta t^2)$ per-interval quadrature error (Section~\ref{sec:discretization}) accumulates to $O(\Delta t)$ over $L = T/\Delta t$ intervals, producing a parameter bias that persists as $M$ increases. The trapezoidal rule lowers this floor to $O(\Delta t^2)$ (Corollary~\ref{cor:trapezoid_bound}, right pair): coarse observations ($\Delta t = 10^{-1}$) now achieve the same convergence as dense ones since $(\Delta t)^2 $ is smaller than $M^{-1/2}$ for all $M$ values in the plot. The numbers are reported in Table~\ref{tab:M_scaling} in Appendix~\ref{app:detailed_results}. 

\paragraph{Discrete-model bias.}
When $\delta t = \Delta t$ (zero gap), the particle system is a discrete time series rather than an SDE discretization, so the $O(\Delta t)$ bias is intrinsic to the model and cannot be removed by refining the quadrature; see Section~\ref{sec:discretization}. The saturation floors in Figure~\ref{fig:discrete_bias} are predicted by the leading-order $O(\Delta t)$. Since this model bias dominates the quadrature correction, the Riemann-sum and trapezoidal rules yield identical results.

\begin{figure}[ht]
\centering
\includegraphics[width=0.55\textwidth]{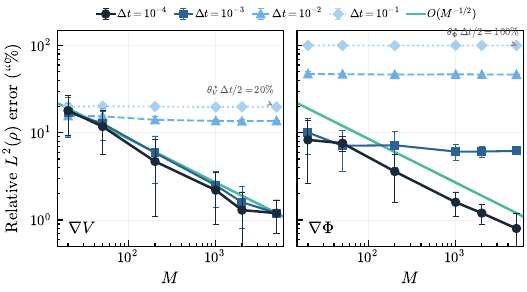}
\caption{Convergence for discrete-time model (i.e., $\delta t = \Delta t$, zero gap). The intrinsic $O(\Delta t)$ bias of the discrete-time model (see Section~\ref{sec:discretization}) dominates the error when $\Delta t$ is large, and the convergence in $M$ is only visible at $\Delta t \le 10^{-3}$ (and $\Delta t= 10^{-4}$ for $\Phi$) where the bias is small enough to allow the $O(M^{-1/2})$ statistical error to dominate. See also in Table~\ref{tab:M_scaling}.
%At $\Delta t \ge 10^{-2}$ the errors saturate at floors predicted by $\theta^*\Delta t / 2$: ${\sim}20\%$ for $\nabla V$ and ${\sim}100\%$ for $\nabla\Phi$ at $\Delta t = 10^{-1}$.
%At $\Delta t \le 10^{-3}$ the errors track the $O(M^{-1/2})$ statistical rate (green line).
% 10 trials per point.
}
\label{fig:discrete_bias}
\end{figure}

\paragraph{Condition number scaling.}
Figure~\ref{fig:cond_numbers} shows the condition numbers $\kappa_*$, $\kappa_{VV}$, $\kappa_{\Phi\Phi}$ of the normal matrix $\mathbf{A}_*$ and its diagonal blocks as $N$ varies at $d = 2, 5, 10$ (with $M=2000$, $\Delta t = 10^{-2}$). Proposition~\ref{prop:joint_coercivity} predicts $\kappa_{VV} = O(1)$, $\kappa_{\Phi\Phi} = O(N)$ (since $c_N = \frac{N-1}{N^2}\sim 1/N$), and $\kappa_* = O(N)$; these rates are confirmed at $d=10$. 
The full-matrix $\kappa_*$ exceeds both block values because the off-diagonal $V$--$\Phi$ coupling shrinks $\lambda_{\min}(\mathbf{A}_*)$; see Table~\ref{tab:cond_number} (Appendix~\ref{app:detailed_results}) for the full $N \times d$ grid.
Further tests with $d=20$ (not presented here) show that $\kappa_{VV}$ remains stable, but $\kappa_{\Phi\Phi}$ and hence $\kappa_*$ exceed $10^6$ at $N=100$ due to the concentration of pairwise distances.
Thus, the flattening of $\kappa_{\Phi\Phi}$ at $d=2$ and $d=5$ is likely due to a lack of proper concentration of the pairwise distances which prevents the $\Phi$-block conditioning from seeing the full $O(N)$ growth,  particularly when the basis functions are smooth and slowly varying relative to the distance distribution. 
% Table~\ref{tab:cond_number} (Appendix~\ref{app:detailed_results}) lists the full $N \times d$ condition-number grid.
\begin{figure}[ht!]
\centering
\includegraphics[width=\textwidth]{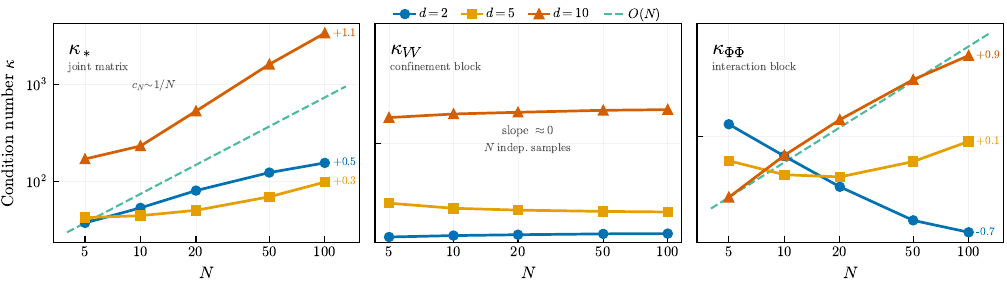}
\caption{Condition numbers scaling with~$N$ for the normal matrix and its diagonal blocks~($\kappa_*$, $\kappa_{VV}$, $\kappa_{\Phi\Phi}$).
Slopes (italic numbers at $N{=}100$) are log-log regression rates from Table~\ref{tab:cond_number}.
Dashed/dotted green lines show the $O(N)$ theoretical rates.
Left: $\kappa_*$ increases in $N$, with a rate near $O(N)$ when $d=10$.
Center:~$\kappa_{VV}$ is $N$-independent.  
Right:~$\kappa_{\Phi\Phi}$ has a rate near $O(N)$ when $d=10$ due to distance concentration.}
\label{fig:cond_numbers}
\end{figure}

\subsection{Method comparison}\label{sec:oracle_comparison}

We compare the four estimation strategies in Section \ref{sec:baselines} (labeled MLE, Sinkhorn MLE, self-test LSE, and self-test NN) as $\Delta t$
varies over $\{10^{-4}, 10^{-3}, 10^{-2}, 10^{-1}\}$ with
$\delta t = 10^{-4}$ fixed.
Labeled MLE and Sinkhorn MLE require the finite-difference approximated velocity $v_i = \Delta X_i / \Delta t$,
whose $O(\Delta t)$ discretization bias persists as $M \to \infty$.
The self-test loss~\eqref{eq:loss_trajFree} depends only on particle positions
and requires no velocity estimation, so this bias does not arise.

\begin{table}[ht]
\centering
\caption{Method comparison across observation intervals~$\Delta t$ for the reference model (relative gradient errors, \%).
Oracle: true basis, mean $\pm$ std over 10 trials.
NN: MLP $[64{,}64{,}64]$, mean $\pm$ std over 10 trials.
{\best{Green}} = best per $\Delta t$.}
\label{tab:dt_obs_sweep}
\vspace{2mm}
\resizebox{\textwidth}{!}{%
\setlength{\tabcolsep}{3pt}
\begin{tabular}{c cc cc cc cc}
\toprule
\multicolumn{9}{c}{\scriptsize $d{=}2$,\; $N{=}10$,\; $M{=}2{,}000$,\; $T{=}1$,\; $\sigma{=}1$,\; $\delta t{=}10^{-4}$} \\
\midrule
& \multicolumn{2}{c}{Labeled MLE}
& \multicolumn{2}{c}{Self-Test LSE}
& \multicolumn{2}{c}{Sinkhorn MLE}
& \multicolumn{2}{c}{Self-Test NN} \\
\cmidrule(lr){2-3}\cmidrule(lr){4-5}\cmidrule(lr){6-7}\cmidrule(lr){8-9}
$\Delta t$
  & $\nabla V$ & $\nabla \Phi$
  & $\nabla V$ & $\nabla \Phi$
  & $\nabla V$ & $\nabla \Phi$
  & $\nabla V$ & $\nabla \Phi$ \\
\midrule
$10^{-4}$ & \best{$0.80{\pm}0.55$} & \best{$0.35{\pm}0.19$}
           & $1.35{\pm}0.79$ & $1.24{\pm}0.29$
           & $1.34{\pm}0.80$ & $0.82{\pm}0.18$
           & --- & --- \\
$10^{-3}$ & $1.41{\pm}0.45$ & $5.18{\pm}0.16$
           & \best{$0.67{\pm}0.37$} & \best{$0.74{\pm}0.15$}
           & $3.13{\pm}0.58$ & $7.17{\pm}0.15$
           & $0.91{\pm}0.17$ & $1.74{\pm}0.13$ \\
$10^{-2}$ & $10.44{\pm}0.41$ & $37.49{\pm}0.28$
           & \best{$0.80{\pm}0.43$} & \best{$1.10{\pm}0.17$}
           & $21.64{\pm}0.34$ & $47.38{\pm}0.25$
           & $1.25{\pm}0.33$ & $2.47{\pm}0.22$ \\
$10^{-1}$ & $11.30{\pm}0.58$ & $89.38{\pm}0.18$
           & $6.84{\pm}0.58$ & $6.93{\pm}0.72$
           & $45.37{\pm}0.56$ & $95.78{\pm}0.23$
           & \best{$3.15{\pm}0.68$} & \best{$5.80{\pm}0.42$} \\
\bottomrule
\end{tabular}}
\end{table}

Table~\ref{tab:dt_obs_sweep} reveals a \emph{crossover} between MLE and self-test as $\Delta t$ increases. When $\Delta t = \delta t$, there is no discretization error and the sampling error caused by the noise $\sigma/\sqrt{\Delta t}$ is averaged out by the $M\times L$ observations, so MLE achieves the lowest errors. As $\Delta t$ grows, the velocity bias degrades MLE, and the self-test overtakes MLE at $\Delta t = 10^{-2}$ and $\Delta t = 10^{-3}$. At $\Delta t = 10^{-2}$, the self-test errors are more than an order of magnitude lower than MLE for both $\nabla V$ and $\nabla \Phi$; at $\Delta t = 10^{-1}$, the same order-of-magnitude advantage remains for $\nabla \Phi$. This improvement comes from avoiding velocity estimation entirely: the self-test loss only incurs quadrature error in the dissipation and diffusion integrals, while the energy change $\delta E_{f}$ is computed exactly from data.

The Sinkhorn MLE method, which also relies on velocity estimation, performs worse than MLE at every $\Delta t$ due to the additional label-matching error, which degrades significantly with $\Delta t$ as nonlinear non-Gaussian dynamics prevents accurate label recovery. 

In summary, as a practical method, the self-test LSE has a significant advantage over Sinkhorn MLE at every $\Delta t>\delta t$. In particular, recall that the self-test approach is much more computationally efficient than Sinkhorn MLE since it does not require the costly label-matching step, so the self-test is more accurate and more efficient than Sinkhorn MLE.  

%%%%%%%%%%% ======== 
\subsection{Neural network estimation}\label{sec:nn_results}

The self-test NN requires neither particle labels nor basis knowledge, so it is applicable to general problems with no prior information about the underlying potentials, particularly when the potentials are non-radial or high-dimensional. 

We test self-test NN on the reference model and a non-radial model, and compare it with the three methods with basis functions and with the radial basis function (RBF) method. Following Algorithm~\ref{alg:nn_self_test}, we use 3-layer MLPs ($[64{,}64{,}64]$, Softplus, ${\sim}17{,}000$ parameters), and train it by minimizing the self-test loss. Training details, including asymmetric learning rates, cosine annealing, gradient clipping, and mini-batch construction, are given in Appendix~\ref{app:training_protocol}.

\paragraph{Radial potentials.} For the radial potentials in the reference model, the self-test NN achieves errors comparable to or better than the parametric methods (Table~\ref{tab:dt_obs_sweep}). In particular, at the large $\Delta t = 10^{-1}$, the self-test NN outperforms all parametric methods because neural networks can learn a more flexible representation than when the multi-step iterations of the nonlinear and non-Gaussian dynamics render the basis functions inadequate.

\phantomsection\label{sec:non_radial}%
\paragraph{Non-radial potentials.} For non-radial potentials, no finite radial basis can represent the target, so parametric and RBF methods do not apply.
The self-test NN extends directly: we replace the radial MLP with a general MLP enforcing evenness $\Phi(z) = \Phi(-z)$, without changing the loss function---only the architecture changes, unlike basis methods which would require entirely new function families.
Because the self-test loss evaluates Laplacian terms $\Delta V$ and $\Delta\Phi$, all activations must be $C^2$-smooth; ReLU and Leaky~ReLU are excluded since their second derivatives vanish almost everywhere (Appendix~\ref{app:architecture}).

We test this on anisotropic potentials: $V(x) = a_1 x_1^2 + a_2 x_2^2$ with $a = (1,4)$, and $\Phi(z) = A\exp\bigl(-\frac{1}{2}(z_1^2/s_1^2 + z_2^2/s_2^2)\bigr)$ with $A = 2$ and $s = (0.5, 1.5)$.
Unlike radial models where errors are measured on the scalar derivative $d\Phi/dr$, non-radial evaluation compares the full vector gradient $\nabla_z \Phi \in \R^d$. Figure~\ref{fig:nn_model_aniso} shows the true and recovered potentials in the case $d = 2$, where the NN is trained with data using $\Delta t_{\mathrm{obs}} = \Delta t_{\mathrm{fine}} = 10^{-3}$ (zero discretization gap) and $M = 2000$. The estimator captures the anisotropic contours of $V$ and $\Phi$, with relative errors of $3.6\%$ for $\nabla V$ and $12.3\%$ for $\nabla \Phi$. This demonstrates that the self-test NN can successfully recover non-radial potentials. 
% see Appendix~\ref{app:dt_obs_sensitivity} for additional non-radial experiments.

\begin{figure}[ht]
\centering
\includegraphics[width=\textwidth]{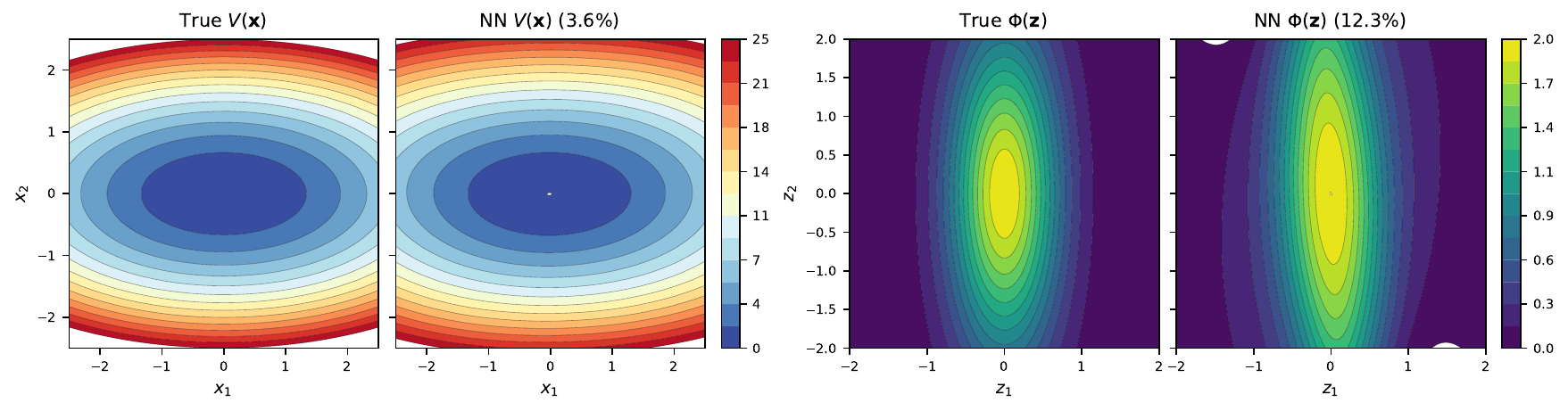}
\caption{Non-radial potential recovery ($d{=}2$). 
% Left pair: anisotropic confinement $V(x)$ with $a=(1,4)$. Right pair: anisotropic Gaussian interaction $\Phi(z)$ with $s=(0.5,1.5)$. True potentials (left of each pair) vs.\ NN recovery (right). 
Percentages are relative $L^2(\rho)$ errors.}
\label{fig:nn_model_aniso}
\end{figure}

%%%%%%%%%%%%%%
\subsection{Boundary stress tests}\label{sec:boundary_tests}
We further test the self-test approach on four additional models at the boundaries of the theory, each designed to violate or stress a condition in Assumption~\ref{ass:parametric_theory}. 
The potentials in these four models are as follows. 
\begin{itemize}
    \item \emph{Smoothness test:}
    $V^*(x) = \frac{1}{2} \alpha_1 |x| + \alpha_2 |x|^2$, \;
    $\Phi^*(r) = \beta_1 \mathbf{1}_{[0.5,1]}^\epsilon (r)+ \beta_2 \mathbf{1}_{[1,2]}^\epsilon (r)$, 
    with $\alpha=(-1,2), \beta=(-3,2)$. Here, $\mathbf{1}_{[0.5,1]}^\epsilon $ indicates the smoothed indicator interaction via tanh smoothing with $\varepsilon{=}0.05$, restoring $C^4$ regularity but with relatively large derivatives.

    \item \emph{Conditioning test:}
    $V^*(x) = \frac{1}{4}(|x|^2 - 1)^2 - \frac{1}{4}$, \;
    $\Phi^*(r) = -\gamma r/(r + 1)$,
    with $\gamma=0.5$. The slow-decay interaction yields ill-conditioned normal matrices with condition number $\kappa$ three orders of magnitude above the reference model; see Table~\ref{tab:cond_all}.  
%  the ill-conditioned Gram matrix amplifies regularization error in the self-test normal equations. Self-test still recovers $\nabla V$ at large~$\Delta t$ because it avoids velocity estimation   (Section~\ref{sec:oracle_comparison}).

    \item \emph{Singularity test:}
    $V^*(x) = \frac{k}{2} |x|^2$, \;
    $\Phi^*(r) = 4\varepsilon\bigl[(\sigma/r)^{12} - (\sigma/r)^{6}\bigr]$,
    with $k=2, \varepsilon=0.5, \sigma=0.5$. We clamp pairwise distances at $r_{\text{safe}}=0.35$ to avoid numerical instability, but the resulting large derivatives still pose a challenge, with condition number $\kappa$ two orders of magnitude above the reference model; see Table~\ref{tab:cond_all}.

    \item \emph{Smooth control:}
    $V^*(x) = \frac{1}{4}(|x|^2 - 1)^2 - \frac{1}{4}$, \;
    $\Phi^*(r) = D\bigl[(1 - e^{-a(r - r_0)})^2 - (1 - e^{ar_0})^2\bigr]$,
    with $D=0.5, a=2, r_0=0.8$. This model has a double-well confining potential and a smooth Morse interaction potential with moderate derivatives, so it satisfies all assumptions and serves as a well-behaved baseline for comparison.
\end{itemize}
Table~\ref{tab:boundary_models} presents an overview of these models and the specific assumptions they violate or stress.

\begin{table}[ht]
\centering
\caption{Boundary test models and the assumptions they violate or stress.}
\label{tab:boundary_models}
% \vspace{2mm}
\small
\begin{tabular}{l l l}
\toprule
Model & $\Phi$ type & Violation / stress point \\
\midrule
Smoothness test & Smoothed indicator $\mathbf{1}_{[a,b]}^\epsilon$ & $C^4_b$ smoothness (Assumption~\ref{ass:parametric_theory}(i)) \\
Conditioning test & $\gamma/(r+1)$ inverse & Coercivity --- near-singular Gram matrix \\
Singularity test & $r^{-12} - r^{-6}$ singularity & $C^4_b$ + gradient blowup \\
Smooth control & $D(1-e^{-a(r-r_0)})^2$ & None (well-behaved baseline) \\
\bottomrule
\end{tabular}
\end{table}

\begin{figure}[ht]
\centering
\includegraphics[width=\textwidth]{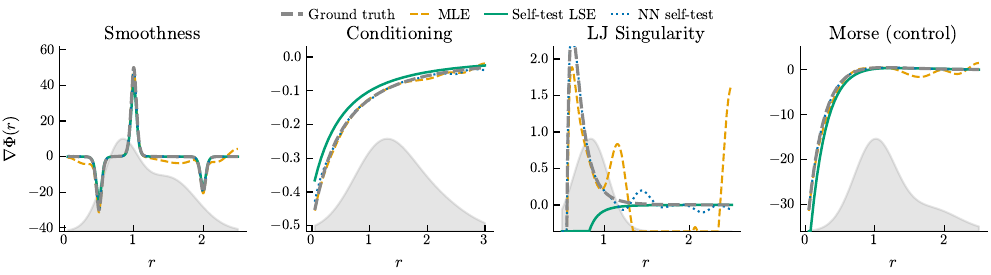}
\caption{Typical estimators of $\Phi'(r)$ for the four boundary test models ($d{=}2$, $M{=}2{,}000$, $\sigma{=}1$, $\delta t{=}10^{-4}$, $\Delta t{=}10^{-2}$).
Gray shading indicates the approximate exploration measure $\rho_{\Phi}$.
\textbf{Smoothness:} Smoothed indicator with sharp transitions at $r = 0.5$ and $r = 1$.
\textbf{Conditioning:} Slow-decaying $1/(r{+}1)$ interaction that challenges conditioning.
\textbf{Singularity:} Lennard-Jones $r^{-12}{-}r^{-6}$ singularity; gradient diverges as $r \to 0$.
\textbf{Smooth control:} Morse potential with equilibrium at $r_0 = 0.8$.
}
\label{fig:boundary_recovery}
\end{figure}

\paragraph{Stability across $\Delta t$.} We examine stability across~$\Delta t$ of the self-test method in these boundary test models. Table~\ref{tab:boundary_results} reports the relative errors. The self-test estimators consistently outperform the Sinkhorn-MLEs across $\Delta t$ for all four models. At $\Delta t = 10^{-3}$, the self-test LSE is competitive with labeled MLE and is substantially better on the singularity and smooth-control models. As $\Delta t$ increases, the self-test estimators degrade much slower than the MLEs. At $\Delta t = 10^{-1}$, the self-test NN is best on the smoothness, singularity, and smooth-control models and yields the best $\nabla \Phi$ error on the conditioning model, demonstrating robustness to coarse observations even in these challenging settings.

\begin{table}[ht]
\centering
\caption{Relative errors (\%) as $\Delta t$ varies in boundary tests. Parametric estimators: mean $\pm$ std in 10 trials ($M{=}2{,}000$ subsampled from $M_{\text{total}}{=}20{,}000$). {\best{Green}} = best among unlabeled methods.}
\label{tab:boundary_results}
\vspace{2mm}
\scriptsize
\setlength{\tabcolsep}{3pt}
\begin{tabular}{cl cc cc cc cc}
\toprule
\multicolumn{10}{c}{\scriptsize $d{=}2$,\; $N{=}10$,\; $M_{\text{total}}{=}20{,}000$,\; $M{=}2{,}000$,\; $T{=}1$,\; $\sigma{=}1$,\; $\delta t{=}10^{-4}$} \\
\midrule
& & \multicolumn{2}{c}{Labeled MLE} & \multicolumn{2}{c}{Self-Test} & \multicolumn{2}{c}{Sinkhorn} & \multicolumn{2}{c}{Self-Test NN} \\
\cmidrule(lr){3-4}\cmidrule(lr){5-6}\cmidrule(lr){7-8}\cmidrule(lr){9-10}
Model & $\Delta t$ & $\nabla V$ & $\nabla \Phi$ & $\nabla V$ & $\nabla \Phi$ & $\nabla V$ & $\nabla \Phi$ & $\nabla V$ & $\nabla \Phi$ \\
\midrule
Smoothness & $10^{-3}$ & $6.70{\pm}0.62$ & $12.98{\pm}0.19$ & $\best{1.07{\pm}0.52}$ & $\best{2.01{\pm}0.20}$ & $6.30{\pm}0.53$ & $14.42{\pm}0.19$ & $1.76{\pm}0.54$ & $3.32{\pm}0.29$ \\
  & $10^{-2}$ & $21.05{\pm}0.55$ & $52.63{\pm}0.18$ & $\best{0.85{\pm}0.47}$ & $\best{2.55{\pm}0.29}$ & $18.23{\pm}0.45$ & $59.31{\pm}0.16$ & $1.41{\pm}0.44$ & $4.60{\pm}0.86$ \\
  & $10^{-1}$ & $3.57{\pm}0.32$ & $92.15{\pm}0.22$ & $9.07{\pm}1.33$ & $10.01{\pm}0.74$ & $31.32{\pm}0.20$ & $95.89{\pm}0.12$ & $\best{3.64{\pm}1.42}$ & $\best{8.77{\pm}0.88}$ \\
\midrule
Conditioning & $10^{-3}$ & $1.15{\pm}0.59$ & $11.97{\pm}9.08$ & $\best{1.46{\pm}0.73}$ & $31.02{\pm}9.63$ & $13.58{\pm}1.06$ & $271{\pm}11$ & $3.83{\pm}0.38$ & $\best{26.7{\pm}8.8}$ \\
  & $10^{-2}$ & $2.67{\pm}0.82$ & $15.85{\pm}11.25$ & $\best{1.58{\pm}0.55}$ & $40.26{\pm}10.29$ & $34.74{\pm}1.22$ & $719{\pm}14$ & $7.84{\pm}3.51$ & $\best{24.6{\pm}11.7}$ \\
  & $10^{-1}$ & $21.58{\pm}0.75$ & $24.75{\pm}12.55$ & $\best{6.19{\pm}0.85}$ & $185{\pm}31$ & $59.21{\pm}1.18$ & $1258{\pm}13$ & $10.8{\pm}7.3$ & $\best{125{\pm}12}$ \\
\midrule
Singularity & $10^{-3}$ & $25.46{\pm}0.53$ & $55.12{\pm}0.09$ & $8.88{\pm}0.46$ & $\best{13.89{\pm}0.25}$ & $25.46{\pm}0.53$ & $55.12{\pm}0.09$ & $\best{1.91{\pm}0.81}$ & $40.4{\pm}18.6$ \\
  & $10^{-2}$ & $4.53{\pm}0.37$ & $99.40{\pm}0.10$ & $13.34{\pm}0.85$ & $21.75{\pm}0.91$ & $4.53{\pm}0.37$ & $99.40{\pm}0.10$ & $\best{2.70{\pm}1.16}$ & $\best{4.96{\pm}0.41}$ \\
  & $10^{-1}$ & $47.51{\pm}0.21$ & $100{\pm}0.08$ & $15.36{\pm}2.19$ & $39.68{\pm}2.13$ & $49.72{\pm}0.20$ & $101{\pm}0.08$ & $\best{1.60{\pm}0.60}$ & $\best{4.80{\pm}0.89}$ \\
\midrule
Smooth ctrl & $10^{-3}$ & $2.77{\pm}1.05$ & $5.61{\pm}0.91$ & $\best{1.72{\pm}0.88}$ & $\best{1.69{\pm}0.73}$ & $4.82{\pm}1.17$ & $8.87{\pm}0.85$ & $2.34{\pm}0.29$ & $3.43{\pm}0.82$ \\
  & $10^{-2}$ & $13.06{\pm}1.02$ & $31.30{\pm}0.71$ & $\best{2.01{\pm}1.25}$ & $3.86{\pm}1.51$ & $4.13{\pm}0.98$ & $8.00{\pm}0.75$ & $2.96{\pm}0.64$ & $\best{2.95{\pm}0.38}$ \\
  & $10^{-1}$ & $32.19{\pm}0.47$ & $78.79{\pm}1.09$ & $8.05{\pm}5.73$ & $33.42{\pm}10.24$ & $33.88{\pm}0.63$ & $38.51{\pm}0.56$ & $\best{5.91{\pm}0.33}$ & $\best{6.78{\pm}1.01}$ \\
\bottomrule
\end{tabular}
\end{table}

\begin{table}[ht]
\centering
\caption{Condition number $\kappa(\mathbf{A}_*)$ across all models.
Rate in $N$: log-log regression $\partial\!\log\kappa / \partial\!\log N$;
Proposition~\ref{prop:joint_coercivity} predicts slope~${\le}\,1$.}
\label{tab:cond_all}
\vspace{2mm}
\small
\setlength{\tabcolsep}{4pt}
\begin{tabular}{l rrrrr r}
\toprule
\multicolumn{7}{c}{\scriptsize $d{=}2$,\; $\Delta t{=}10^{-2}$,\; $M{=}2000$,\; oracle basis} \\
\midrule
Model & $N{=}5$ & $N{=}10$ & $N{=}20$ & $N{=}50$ & $N{=}100$ & Rate in $N$ \\
\midrule
Reference & 37     & 53      & 80     & 123     & 155      & \best{$0.5$} \\
Smoothness & 42     & 46      & 88     & 208     & 396      & $0.8$ \\
Conditioning & 81K    & 146K    & 262K   & 564K    & 962K     & $0.8$ \\
Singularity & 10.8K  & 12.1K   & 11.8K  & 12.1K   & 11.8K    & $0.0$ \emph{(constant)} \\
Smooth ctrl & 7.9K & 14.2K & 25.4K  & 53.5K   & 89.5K    & $0.8$ \\
\bottomrule
\end{tabular}
\end{table}

%%%%%%%%%%%%%%%%=========
We also test the RBF method on the radial models with a dictionary of 20 Gaussian RBFs with centers uniformly spaced in $[0, 3]$ and bandwidth $0.5$; see Appendix~\ref{app:implementation} for details. Figure~\ref{fig:all_methods} there shows that with such limited basis, the RBF method leads to oscillatory estimators, underperforming the self-test estimators.

%%%%%%%%%%%%%%%%=========
\section{Conclusion}\label{sec:conclusion}
We have developed a trajectory-free loss function for learning interaction and confining potentials from unlabeled particle snapshots. Our method constructs a quadratic loss from the weak-form PDE of the empirical distribution, using data-dependent test functions of the form $f = V + \Phi * \mu_t^N$,  where $\mu_t^N$ is the empirical measure from data. This is fundamentally different from existing approaches: trajectory-based methods require label recovery for velocity estimation (which fails for large $\Delta t$), while optimal transport methods are nonlinear and computationally expensive. 

We have provided nonasymptotic error bounds for the parametric estimator, showing that the error scales as $O((\Delta t)^\alpha + M^{-1/2})$ and is tight, with $\alpha=1$ for the Riemann sum and $\alpha=2$ for the trapezoidal rule. Numerical tests confirm these error bounds. Importantly, the self-test approach has a significant advantage over labeled MLE when the observation gap is large, and it is more accurate and more efficient than Sinkhorn-MLE because it does not require costly label-matching. Furthermore, the self-test approach applies to neural networks, which require no basis knowledge, successfully recovers non-radial potentials, and remains effective at coarse observation intervals where fixed basis choices become limiting.

The method has various potential applications to real-world datasets in physics, biology and social science. The self-test approach enables learning interaction potentials from static snapshots of particle systems, addressing a fundamental challenge across multiple scientific domains. The method's efficiency and robustness to coarse observations make it a promising direction for datasets where continuous trajectory information is unavailable or unreliable.

On the theoretical side, an important direction is to establish minimax rates for the unlabeled setting and compare them with the known minimax rates for labeled data, which would clarify the fundamental limits of learning without trajectories. 

\paragraph{Limitations.} The self-test approach relies on particle homogeneity, which underpins the use of the empirical distribution to encode unlabeled data. Extension to heterogeneous systems with multiple particle types % \cite{LMT21_JMLR,lang2026interacting} 
from unlabeled data would require type identification and coupled weak-form PDEs, which is a challenging open problem beyond the scope of the current framework. Another limitation is that the current method assumes the interaction potential is sufficiently smooth and free of singularities; extending it to singular potentials would require careful handling of derivative blowup, such as data-adaptive truncation or regularization.

\appendix

\section{Numerical details}\label{app:implementation}
%% ============================================================
%%  Appendix A: Implementation Details
%% ============================================================
\subsection{Implementation details}\label{app:implementation_details}

This appendix specifies the computational details deferred from the main text: 
the RBF basis construction in the parametric baselines 
and the full neural network training recipe.
For singular potentials, pairwise distances are clamped at $r_{\text{safe}} = 0.35$ for Lennard-Jones (truncated at $r_{\text{cut}} = 2.5$), and smoothness-test indicators are smoothed via $\tanh$ transitions ($\varepsilon = 0.05$).

\paragraph{RBF basis construction.} We use Gaussian RBFs $\psi_k(r) = \exp\left(-\frac{(r - c_k)^2}{2w^2}\right)$, where $c_k$ are the centers and $w$ is the width. The centers are placed equidistantly on $[0.01, r_{\max}]$, where $r_{\max}$ is determined from the largest dataset as the 99th percentile of pairwise distances for $\Phi$ and of particle norms for $V$. The width is set to $w = 1.5 \cdot r_{\max} / K$, ensuring sufficient overlap between adjacent RBFs for smooth interpolation. For simplicity, we take $K_V = K_\Phi = 20$ centers for all models; thus, the performance of the RBF basis approach can be further improved by increasing the number of centers and fine tuning their placement and width. Gradients and Laplacians of the basis functions are computed analytically, with gradients $\psi_k'(r) = -\frac{r - c_k}{w^2} \psi_k(r)$ and the Laplacian evaluated via the radial formula $\Delta_x f(x) = g''(r) + \frac{d-1}{r} g'(r)$ for any radial function $f(x) = g(|x|)$.

\paragraph{NN Training protocol.}\label{app:training_protocol}\label{app:architecture}
For radial models, both networks take scalar input $|x|$ (${\sim}8{,}513$ parameters each, ${\sim}17{,}026$ total); for non-radial models, $d$-dimensional input (${\sim}8{,}577$ parameters each at $d{=}2$, ${\sim}17{,}154$ total).
The potentials are identifiable only up to additive constants; the offset cancels in the loss.
We use Adam with:
\begin{itemize}
    \item \textbf{Activation:} The self-test loss involves Laplacian terms $\Delta V$, $\Delta\Phi$, requiring $C^2$-smooth activations; ReLU and Leaky~ReLU are excluded since their second derivatives vanish almost everywhere. In the current implementation we use Softplus for the reference model and Softplus or Tanh for the boundary test models.
    \item \textbf{Learning rates:} $\gamma_V = 10^{-4}$ for the $V$ network and $\gamma_\Phi = 5 \times 10^{-4}$ for the $\Phi$ network. The higher learning rate for $\Phi$ was found empirically to improve convergence in our experiments.
    \item \textbf{Cosine annealing:} Both learning rates follow a cosine schedule from their initial values down to $\eta_{\min} = 0.01 \cdot \eta_0$, i.e., $\gamma_V$ anneals from $10^{-4}$ to $10^{-6}$ and $\gamma_\Phi$ from $5 \times 10^{-4}$ to $5 \times 10^{-6}$ over $T_{\max} = 200$ epochs.
    \item \textbf{Gradient clipping:} Each network's gradients are clipped independently at $\ell^2$-norm $1.0$. Independent clipping prevents the larger-gradient network from dominating parameter updates.
\end{itemize}
Each epoch iterates over all $M(L{-}1)$ snapshot pairs, jointly shuffled into mini-batches; the full dataset tensor is pre-allocated on GPU.
Unlike explicit Tikhonov regularization in the basis solver, the neural network relies on implicit regularization: the finite-width MLP restricts the hypothesis space to smooth functions, and the Adam optimizer with gradient clipping and cosine annealing biases optimization toward low-complexity solutions~\cite{neyshabur2017implicit}. Since the loss minimum is strictly negative, early stopping monitors loss \emph{flattening} rather than vanishing: gradient errors are evaluated every 5 epochs, and training stops after 100 epochs without improvement (patience = 20 evaluations), restoring the best model.
All experiments use random seed 42.

%% ============================================================
%%  Appendix C: Detailed Numerical Results
%% ============================================================
\subsection{Detailed numerical results}\label{app:detailed_results}

%% ── Tables moved from main text (§4.2 and §4.3) ──────────────

\begin{table}[ht!]
\centering
\caption{$M$-scaling of self-test LSE: relative $L^2(\rho)$ errors (\%) for the reference model ($d{=}2$, $N{=}10$, 10 trials per cell).
\textbf{(a)-(b)}: Riemann sum and Trapezoidal quadrature for time integration. The observation time step $\Delta t \in \{10^{-3}, 10^{-2}, 10^{-1}\} $, and the data is generated with time step fixed at $\delta t = 10^{-4}$. 
\textbf{(c)}: Zero-gap regime ($\delta t = \Delta t$). 
The slope is estimated by log-log regression of error vs. $M$. Those slopes matching the theoretical $O(M^{-1/2})$ rate are highlighted in \best{green}.
See Figure~\ref{fig:M_scaling} for the corresponding plots.
}   
\label{tab:M_scaling}
\vspace{-2mm}
\footnotesize
\textbf{(a) Riemann sum ($\delta t{=}10^{-4}$ fixed):}  \\
\setlength{\tabcolsep}{3pt}
\begin{tabular}{r cc cc cc}
\toprule
\multicolumn{7}{c}{\scriptsize $d{=}2$,\; $N{=}10$,\; $T{=}1$,\; $\sigma{=}1$,\; oracle basis,\; $\delta t{=}10^{-4}$} \\
\midrule
& \multicolumn{2}{c}{$\Delta t{=}10^{-3}$} & \multicolumn{2}{c}{$10^{-2}$} & \multicolumn{2}{c}{$10^{-1}$} \\
\cmidrule(lr){2-3}\cmidrule(lr){4-5}\cmidrule(lr){6-7}
$M$ & $\nabla V$ & $\nabla\Phi$ & $\nabla V$ & $\nabla\Phi$ & $\nabla V$ & $\nabla\Phi$ \\
\midrule
20   & $6.4{\pm}2.9$ & $3.1{\pm}1.0$ & $5.4{\pm}3.1$ & $3.6{\pm}1.9$ & $12.2{\pm}7.2$ & $6.0{\pm}3.5$ \\
50   & $4.2{\pm}1.9$ & $1.8{\pm}0.8$ & $4.9{\pm}1.3$ & $2.2{\pm}0.8$ & $5.8{\pm}2.8$ & $4.1{\pm}2.8$ \\
200  & $2.2{\pm}1.2$ & $1.3{\pm}0.6$ & $1.9{\pm}1.0$ & $1.5{\pm}0.6$ & $5.3{\pm}1.8$ & $2.6{\pm}0.9$ \\
1000 & $0.8{\pm}0.5$ & $0.9{\pm}0.2$ & $1.0{\pm}0.5$ & $0.8{\pm}0.3$ & $3.9{\pm}1.1$ & $1.0{\pm}0.7$ \\
2000 & $0.8{\pm}0.3$ & $0.9{\pm}0.2$ & $0.8{\pm}0.4$ & $0.8{\pm}0.3$ & $3.7{\pm}1.0$ & $0.8{\pm}0.4$ \\
5000 & $0.3{\pm}0.2$ & $0.7{\pm}0.1$ & $0.5{\pm}0.3$ & $0.8{\pm}0.2$ & $3.8{\pm}0.4$ & $0.6{\pm}0.3$ \\
\midrule
slope & \best{$-0.5$} & $-0.2$ & \best{$-0.4$} & \best{$-0.3$} & $-0.2$ & \best{$-0.4$} \\
\bottomrule
\end{tabular}

\vspace{4mm}
\textbf{(b) Trapezoid quadrature ($\delta t{=}10^{-4}$ fixed):}
\vspace{1mm}

\setlength{\tabcolsep}{3pt}
\begin{tabular}{r cc cc cc cc}
\toprule
\multicolumn{9}{c}{\scriptsize $d{=}2$,\; $N{=}10$,\; $T{=}1$,\; $\sigma{=}1$,\; oracle basis,\; $\delta t{=}10^{-4}$} \\
\midrule
& \multicolumn{2}{c}{$\Delta t{=}10^{-4}$} & \multicolumn{2}{c}{$10^{-3}$} & \multicolumn{2}{c}{$10^{-2}$} & \multicolumn{2}{c}{$10^{-1}$} \\
\cmidrule(lr){2-3}\cmidrule(lr){4-5}\cmidrule(lr){6-7}\cmidrule(lr){8-9}
$M$ & $\nabla V$ & $\nabla\Phi$ & $\nabla V$ & $\nabla\Phi$ & $\nabla V$ & $\nabla\Phi$ & $\nabla V$ & $\nabla\Phi$ \\
\midrule
20   & $6.6{\pm}2.2$ & $2.8{\pm}1.5$ & $6.4{\pm}2.9$ & $3.1{\pm}1.0$ & $5.9{\pm}3.2$ & $3.5{\pm}1.9$ & $12.3{\pm}6.7$ & $5.7{\pm}3.4$ \\
50   & $4.1{\pm}1.9$ & $2.2{\pm}1.4$ & $4.2{\pm}1.9$ & $1.7{\pm}0.8$ & $5.1{\pm}1.7$ & $2.2{\pm}0.7$ & $5.5{\pm}2.4$ & $3.8{\pm}2.9$ \\
200  & $2.2{\pm}1.1$ & $1.2{\pm}0.6$ & $2.2{\pm}1.2$ & $1.2{\pm}0.6$ & $2.1{\pm}0.9$ & $1.4{\pm}0.6$ & $4.2{\pm}2.0$ & $2.4{\pm}0.8$ \\
1000 & $1.4{\pm}0.6$ & $0.7{\pm}0.3$ & $0.8{\pm}0.5$ & $0.9{\pm}0.2$ & $1.0{\pm}0.5$ & $0.7{\pm}0.2$ & $2.0{\pm}1.0$ & $1.0{\pm}0.6$ \\
2000 & $1.1{\pm}0.5$ & $0.6{\pm}0.2$ & $0.8{\pm}0.4$ & $0.9{\pm}0.2$ & $0.7{\pm}0.3$ & $0.6{\pm}0.3$ & $1.3{\pm}0.6$ & $0.7{\pm}0.3$ \\
5000 & $0.6{\pm}0.4$ & $0.6{\pm}0.1$ & $0.4{\pm}0.2$ & $0.7{\pm}0.1$ & $0.5{\pm}0.2$ & $0.7{\pm}0.2$ & $0.8{\pm}0.4$ & $0.5{\pm}0.3$ \\
\midrule
slope & \best{$-0.4$} & \best{$-0.3$} & \best{$-0.5$} & $-0.2$ & \best{$-0.5$} & \best{$-0.3$} & \best{$-0.5$} & \best{$-0.4$} \\
\bottomrule
\end{tabular}

\vspace{4mm}
\textbf{(c) Zero-gap ($\delta t{=}\Delta t$):} 

\vspace{1mm}
\setlength{\tabcolsep}{3pt}
\begin{tabular}{r cc cc cc cc}
\toprule
\multicolumn{9}{c}{\scriptsize $d{=}2$,\; $N{=}10$,\; $T{=}1$,\; $\sigma{=}1$,\; oracle basis} \\
\midrule
& \multicolumn{2}{c}{$\delta t= \Delta t = 10^{-4}$} & \multicolumn{2}{c}{$\delta t= \Delta t = 10^{-3}$} & \multicolumn{2}{c}{$\delta t= \Delta t = 10^{-2}$} & \multicolumn{2}{c}{$\delta t= \Delta t = 10^{-1}$} \\
\cmidrule(lr){2-3}\cmidrule(lr){4-5}\cmidrule(lr){6-7}\cmidrule(lr){8-9}
$M$ & $\nabla V$ & $\nabla\Phi$ & $\nabla V$ & $\nabla\Phi$ & $\nabla V$ & $\nabla\Phi$ & $\nabla V$ & $\nabla\Phi$ \\
\midrule
20   & $18.0{\pm}8.7$  & $8.3{\pm}5.7$  & $17.2{\pm}8.3$   & $10.1{\pm}4.4$  & $15.6{\pm}6.4$  & $47.3{\pm}3.0$    & $20.0{\pm}2.0$  & $100.8{\pm}1.7$  \\
50   & $11.8{\pm}6.1$  & $7.6{\pm}1.5$  & $12.8{\pm}4.7$   & $7.1{\pm}3.5$   & $15.5{\pm}3.4$  & $47.0{\pm}1.6$    & $20.2{\pm}0.8$  & $100.5{\pm}1.5$  \\
200  & $4.7{\pm}3.6$   & $3.6{\pm}2.0$  & $5.9{\pm}3.3$    & $7.2{\pm}3.2$   & $14.1{\pm}1.4$  & $46.6{\pm}1.2$    & $20.0{\pm}0.5$  & $100.2{\pm}0.5$  \\
1000 & $2.2{\pm}1.3$   & $1.6{\pm}0.5$  & $2.5{\pm}1.1$    & $6.1{\pm}1.3$   & $13.7{\pm}0.6$  & $46.8{\pm}0.3$    & $19.8{\pm}0.2$  & $100.4{\pm}0.3$  \\
2000 & $1.3{\pm}0.8$   & $1.2{\pm}0.3$  & $1.6{\pm}0.8$    & $6.1{\pm}0.9$   & $13.6{\pm}0.5$  & $46.7{\pm}0.3$    & $19.9{\pm}0.1$  & $100.3{\pm}0.2$  \\
5000 & \best{$1.2{\pm}0.5$} & \best{$0.8{\pm}0.4$} & \best{$1.2{\pm}0.5$} & $6.2{\pm}0.4$ & $13.7{\pm}0.3$ & $46.7{\pm}0.2$ & $19.8{\pm}0.1$ & $100.4{\pm}0.1$ \\
\midrule
slope & \best{$-0.5$} & \best{$-0.5$} & \best{$-0.5$} & $-0.1$ & $0.0$ & $0.0$ & $0.0$ & $0.0$ \\
\bottomrule
\end{tabular}
\end{table}

This section reports the additional numerical results referenced in the main text.
We highlight the main findings from each table below.

% \smallskip\noindent
{Table~\ref{tab:M_scaling} reports the $M$-scaling under two quadrature rules.} 
Panels (a) and (b) reports estimation errors for the Riemann-sum and trapezoidal time integration using the observation time step size $\Delta t \in \{10^{-4}, 10^{-3}, 10^{-2}, 10^{-1}\}$,  while the ground-truth data is generated at a fixed fine step $\delta t = 10^{-4}$. The bias floor for these two integration methods are $O(\Delta t)$ for Riemann sum and $O((\Delta t)^2)$ for trapezoidal rule.  In Panel (a), the $O(\Delta t)$ bias dominates at $\Delta t = 10^{-1}$, producing a flat error floor (slope~$=0$) for $\nabla V$ and $\nabla\Phi$. In contrast, Panel (b) shows that the $O((\Delta t)^2)$ bias is smaller than the sampling error $O(M^{-1/2})$, so the error decay rate can be observed. 

Panel~(c) shows that for the discrete-time model (by setting $\delta t = \Delta t$), the $O(\Delta t)$ bias dominates at $\Delta t \ge 10^{-2}$, producing a flat error floor (slope~$=0$) for error decay in $M$. This bias floor is due to the use of Taylor expansion in the self-test approach for the discrete-time model with unlabeled data, and it is different from the numerical error in the continuous-time model. 

% The error floor is higher for $\nabla\Phi$ than $\nabla V$, which is consistent with the fact that the $\Phi$-block of the normal matrix $\mathbf{A}_*$ has smaller minimum eigenvalue than the $V$-block (Table~\ref{tab:cond_all}), making $\nabla\Phi$ more sensitive to the bias. 

Table~\ref{tab:cond_number} reports the condition numbers of the normal matrix $\mathbf{A}_*$ and its diagonal blocks $\mathbf{A}_{VV}$ and $\mathbf{A}_{\Phi\Phi}$ for the self-test LSE of the reference model. 

%% ============================================================
%%  Condition number table 
%% ============================================================
\begin{table}[ht]
\centering
\caption{Condition numbers of the normal matrix and its diagonal blocks in self-test LSE for the reference model.
Rate in $N$ is estimated via log-log regression $\partial\log\kappa / \partial\log N$; Proposition~\ref{prop:joint_coercivity} predicts that $\kappa_* = O(N)$, i.e.\ a rate ~$\,1$.}
\label{tab:cond_number} % \renewcommand{\arraystretch}{0.85}
\setlength{\tabcolsep}{3.5pt}
\begin{tabular}{r rrr rrr rrr}
\toprule
\multicolumn{10}{c}{\scriptsize $\Delta t{=}10^{-2}$,\; $M{=}2000$,\; $T{=}1$,\; $\sigma{=}1$,\; oracle basis} \\
\midrule
& \multicolumn{3}{c}{$d=2$}
& \multicolumn{3}{c}{$d=5$}
& \multicolumn{3}{c}{$d=10$} \\
\cmidrule(lr){2-4} \cmidrule(lr){5-7} \cmidrule(lr){8-10}
$N$
  & $\kappa_*$ & $\kappa_{VV}$ & $\kappa_{\Phi\Phi}$
  & $\kappa_*$ & $\kappa_{VV}$ & $\kappa_{\Phi\Phi}$
  & $\kappa_*$ & $\kappa_{VV}$ & $\kappa_{\Phi\Phi}$ \\
\midrule
5   & $37$  & $19.7$ & $12.8$ & $42$   & $35.4$ & $6.2$  & $170$  & $156$  & $3.0$ \\
10  & $53$  & $20.2$ & $6.8$  & $44$   & $32.4$ & $4.7$  & $232$  & $166$  & $6.9$ \\
20  & $80$  & $20.5$ & $3.7$  & $50$   & $31.4$ & $4.5$  & $533$  & $171$  & $14$ \\
50  & $123$ & $20.8$ & $1.9$  & $69$   & $30.7$ & $6.1$  & $1629$ & $177$  & $31$ \\
100 & $155$ & $20.9$ & $1.5$  & $98$   & $30.4$ & $9.1$  & $3426$ & $179$  & $50$ \\
\midrule
Rate in $N$
  & \best{$0.5$} & $0.0$ & $-0.7$
  & \best{$0.3$} & $0.0$ & $0.1$
  & \best{$1.1$} & $0.0$ & $0.9$ \\
$\lambda_{\max}$
  & \multicolumn{3}{c}{$\approx 0.23$}
  & \multicolumn{3}{c}{$\approx 0.38$}
  & \multicolumn{3}{c}{$\approx 0.92$} \\
\bottomrule
\end{tabular}
\end{table}

Figure~\ref{fig:all_methods} shows the estimators from all methods for the four stress test radial models when $d=2$. In particular, it shows that the RBF basis, when applied with MLE for the labeled data (either ground truth or recovered by Sinkhorn) or with self-test LSE, produces oscillatory estimators with errors larger than those of the other methods, confirming the importance of basis specification or finetuning for least squares estimation. 
\begin{figure}[htb!]
\centering
\includegraphics[width=\textwidth]{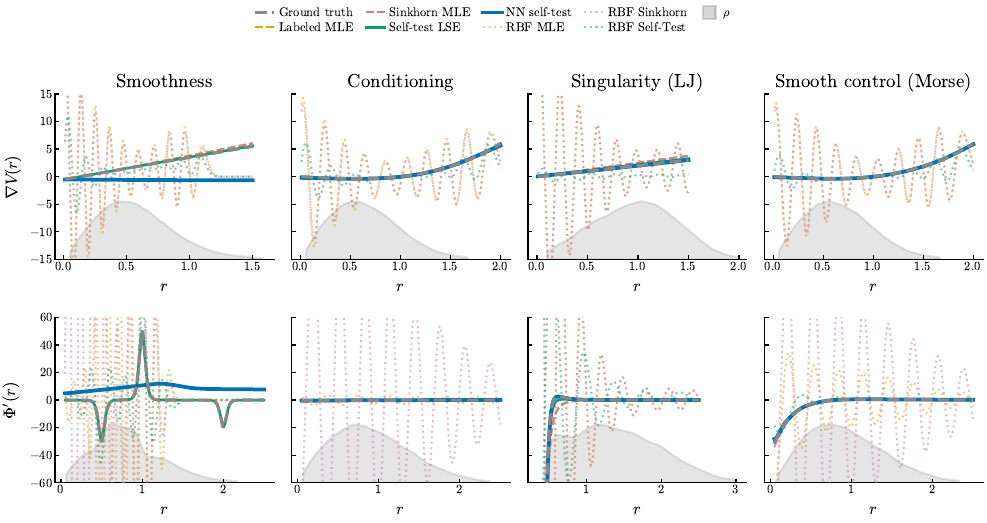}
\caption{Typical estimators of $V'$ and $\Phi'$ for the radial potentials when $d=2$. Top left: Smoothness test. Top right: Conditioning test. Bottom left: Singularity test (LJ). Bottom right: Smooth control (Morse). Gray shading indicates the empirical density of observed distances.}
\label{fig:all_methods}%
\label{fig:nn_individual_a}\label{fig:nn_individual_b}%
\label{fig:nn_individual_lj}\label{fig:nn_individual_morse}%
\end{figure}

%%%%%%%%%%%%%%%%%

\section{Proofs for the error bound of the parametric estimator}\label{app:proofs_parametric}

%%%%%%%%%%%
\subsection{Continuum expectation limit and coercivity}

\begin{lemma}[Continuum expectation normal equation]\label{lemma:population_limit} Under Assumption~\ref{ass:parametric_theory}, denote 
\begin{equation}\label{eq:population_Ab}
    \mathbf A_* = \frac1T\int_0^T \mathbb E\!\left[\frac1N\sum_{i=1}^N \mathbf F_{i,t}^\top \mathbf F_{i,t}\right]dt,
    \qquad
    \mathbf b_* = \frac{\sigma^2}{2T}\int_0^T \mathbb E[\boldsymbol\delta_t]dt - \frac1T\mathbb E[\boldsymbol h_T-\boldsymbol h_0]. 
\end{equation}
Then, the true parameter $\theta_*$ satisfies
\begin{equation}\label{eq:population_normal_skeleton}
    \mathbf A_*\theta_* = \mathbf b_*,
\end{equation}
\end{lemma}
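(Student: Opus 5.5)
The plan is to apply It\^o's formula to the linear-in-$\theta$ free energy $E_f^{\theta}(\bX_t) = \theta^\top \boldsymbol h(\bX_t)$ along the true dynamics, and then take expectations. Write the true drift of particle $i$ as $-\mathbf F_i(\bX_t)\theta_*$. This holds because
\[
\nabla V_{\alpha_*}(X^i) + \frac1N\sum_{j\neq i}\nabla\Phi_{\beta_*}(X^i-X^j) = \mathbf F_i(\bX)\theta_*,
\]
using $\nabla\Phi(0)=0$ for even $\Phi$ so the $j=i$ term in \eqref{eq:opt_model_R} is harmless. Fix an arbitrary coefficient vector $\theta$ and apply It\^o's formula to $\theta^\top\boldsymbol h(\bX_t)$. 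The gradient of $\theta^\top\boldsymbol h$ with respect to $X^i$ is $\frac1N\mathbf F_i(\bX)\theta$. Here the symmetry of each $\phi^{sym}_\ell$ produces the factor $2\cdot\frac{1}{2N^2}$, which becomes $\frac1N\sum_{j\ne i}\nabla\phi_\ell(X^i-X^j)$. The Laplacian in $\bX$ is $\frac1N\theta^\top\boldsymbol\delta(\bX)$, again using evenness: both the $X^i$ and $X^j$ Laplacians of $\Phi(X^i-X^j)$ equal $\Delta\Phi$. This gives
\[
d\,\theta^\top\boldsymbol h_t = -\frac1N\sum_i \theta^\top\mathbf F_{i,t}^\top\mathbf F_{i,t}\theta_*\,dt + \frac{\sigma^2}{2}\,\theta^\top\boldsymbol\delta_t\,dt + \frac{\sigma}{N}\sum_i \theta^\top\mathbf F_{i,t}^\top dW_t^i .
\]

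Next I integrate over $[0,T]$ and take expectations. The stochastic integral is a true martingale with zero mean, because $\mathbf F_{i,t}$ is bounded by $K c_{max,1}$ under the $C_b^p$ basis bounds of Assumption~\ref{ass:parametric_theory}, so its quadratic variation is integrable. Fubini applies to exchange $\mathbb E$ and $\int_0^T$ by the same boundedness. Dividing by $T$ yields
\[
\theta^\top\Big(\frac1T\mathbb E[\boldsymbol h_T-\boldsymbol h_0]\Big) = -\theta^\top\mathbf A_*\theta_* + \frac{\sigma^2}{2T}\theta^\top\int_0^T\mathbb E[\boldsymbol\delta_t]\,dt ,
\]
that is, $\theta^\top(\mathbf A_*\theta_* - \mathbf b_*)=0$. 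Since $\theta$ is arbitrary, \eqref{eq:population_normal_skeleton} follows. Equivalently, I can apply the computation componentwise to each $h_k$ by taking $\theta=e_k$.

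The main obstacle is the careful bookkeeping in the It\^o step: verifying that $\nabla_{X^i}$ of the pairwise energy yields exactly the $\mathbf F_i$ columns, and that the full Laplacian $\Delta_{\bX}$ yields exactly $\boldsymbol\delta$ with the $1/N^2$ normalization. Both rely on the evenness of $\phi^{sym}_\ell$, which makes $\nabla\phi$ odd and $\Delta\phi$ even. I also need to check that the drift in \eqref{eq:opt_model_R} matches $\mathbf F_i\theta_*$, which uses that the true potentials lie in the span of the basis. The factor of $N$ relating $\nabla_{\bX}E_f$ to the force, as in the energy-balance remark, must be tracked so that the $\frac1N\sum_i$ normalization in $\mathbf A_*$ comes out exactly. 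Everything else is routine integrability, guaranteed by the bounded derivatives.
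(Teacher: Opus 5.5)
Your proposal is correct and is essentially the paper's proof: you apply It\^o's formula to $\boldsymbol h_t$ along $dX_t^i=-\mathbf F_{i,t}\theta_*\,dt+\sigma\,dW_t^i$, using that the $X^i$-gradient of $\theta^\top\boldsymbol h$ is $\frac1N\mathbf F_i\theta$, and take expectations so the stochastic integral with bounded integrand drops out; contracting with an arbitrary $\theta$ rather than working componentwise is equivalent. One slip in your prose is that the full Laplacian is $\Delta_{\bX}(\theta^\top\boldsymbol h)=\theta^\top\boldsymbol\delta$ with no extra factor $\frac1N$ (e.g.\ $\Delta_{\bX}\frac1N\sum_i\psi_k(X^i)=\frac1N\sum_i\Delta\psi_k(X^i)=\boldsymbol\delta(\bX)_k$), which is the value your displayed It\^o identity already correctly uses.
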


\begin{proof}
    Recall the definition of the vector-valued regression functions for the forces $\mathbf{F}_{i,t}= \mathbf{F}_i(\bX_t)$, diffusion correction $\boldsymbol\delta_t = \boldsymbol\delta(\bX_t)$, and potential energy $\mathbf{h}_t = \mathbf{h}(\bX_t)$ in \eqref{eq:regressionVectors}.      
    Note that the drift parametrized by $\theta$ at particle $i$ at time $t$ is simply
\begin{equation}\label{eq:drift_F}
    \nabla V_\alpha(X_t^i) + \frac1N\sum_{j\ne i}\nabla \Phi_\beta(X_t^i-X_t^j)
    = \mathbf F_{i,t}\theta,
\end{equation}
and direct differentiation of $\mathbf{h}(\bX)$ gives
\begin{equation*}
    \nabla_{\bX}\boldsymbol h_t = \frac1N(\mathbf F_{1,t},\dots,\mathbf F_{N,t}),
    \qquad
    \Delta_{\bX}\boldsymbol h_t = \boldsymbol\delta_t. 
\end{equation*}

Then, with the true parameter $\theta_*$, the particle system has the form
\[
    dX_t^i = -\mathbf F_{i,t}\theta_*\,dt + \sigma\,dW_t^i,
    \qquad i=1,\dots,N.
\]
Applying It\^o's formula componentwise to $\boldsymbol h_t$ gives
\begin{equation}\label{eq:ito_h}
    d\boldsymbol h_t
    = -\frac1N\sum_{i=1}^N \mathbf F_{i,t}^\top\mathbf F_{i,t}\,\theta_*\,dt
    + \frac{\sigma^2}{2}\boldsymbol\delta_t\,dt
    + \frac{\sigma}{N}\sum_{i=1}^N \mathbf F_{i,t}^\top\,dW_t^i.
\end{equation}
Integrating over $[0,T]$, taking expectations, and using the fact that the stochastic integral has mean zero, we obtain
\[
    \frac1T\int_0^T \mathbb E\!\left[\frac1N\sum_{i=1}^N \mathbf F_{i,t}^\top\mathbf F_{i,t}\right]dt\,\theta_*
    = \frac{\sigma^2}{2T}\int_0^T \mathbb E[\boldsymbol\delta_t]dt - \frac1T\mathbb E[\boldsymbol h_T-\boldsymbol h_0],
\]
which is exactly \eqref{eq:population_normal_skeleton}.
\end{proof}

\begin{lemma}[Interaction coercivity]\label{lem:interaction_only}
Under Assumption~\ref{ass:parametric_theory}{\rm (1)--(2)}, for every $\beta\in\R^{K_\Phi}$,
\begin{equation}\label{eq:interaction_only_skeleton}
    \frac1T\int_0^T \mathbb E\big[\frac1 N \sum_{i=1}^N \big|\frac1N\sum_{j\ne i}\nabla\Phi_\beta(X_t^i-X_t^j)\big|^2\big]dt
    \ge \frac{c_N}{T }\int_0^T \mathbb E\left[ |\nabla\Phi_\beta(X_t^1-X_t^2)|^2 \right] dt
    = c_N \beta^\top \mathbf G_\Phi \beta, 
\end{equation}
where $c_N = \frac{N-1}{N^2}$ and  $\mathbf G_\Phi$ is defined in \eqref{eq:GV_GPhi_skeleton}.
\end{lemma}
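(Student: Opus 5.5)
We fix $t$ and $i$ and expand the square of the sum. This gives diagonal terms and cross terms:
\[
\Big|\frac1N\sum_{j\ne i}\nabla\Phi_\beta(X_t^i-X_t^j)\Big|^2
=\frac1{N^2}\sum_{j\ne i}|\nabla\Phi_\beta(X_t^i-X_t^j)|^2+\frac1{N^2}\sum_{j\ne k,\ j,k\ne i}\nabla\Phi_\beta(X_t^i-X_t^j)\cdot\nabla\Phi_\beta(X_t^i-X_t^k).
\]
We take expectations and use exchangeability (Assumption~\ref{ass:parametric_theory}(2)). Every diagonal term then has the law of $|\nabla\Phi_\beta(X_t^1-X_t^2)|^2$. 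Every cross term has the law of $\nabla\Phi_\beta(X_t^1-X_t^2)\cdot\nabla\Phi_\beta(X_t^1-X_t^3)$; this uses a relabeling permutation sending $(i,j,k)\mapsto(1,2,3)$, which requires $N\ge3$. For $N=2$ there are no cross terms and the claim is an identity. Averaging over $i$ gives
\[
\mathbb E\Big[\frac1N\sum_i\Big|\frac1N\sum_{j\ne i}\nabla\Phi_\beta\Big|^2\Big]
= c_N\,\mathbb E|\nabla\Phi_\beta(X_t^1-X_t^2)|^2+\frac{(N-1)(N-2)}{N^2}\,\mathbb E\big[\nabla\Phi_\beta(X_t^{1}-X_t^2)\cdot\nabla\Phi_\beta(X_t^1-X_t^3)\big],
\]
with $c_N=(N-1)/N^2$.

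The key step is to show that the cross expectation is nonnegative. By the conditional independence in Assumption~\ref{ass:parametric_theory}(2), conditioning on $\mathcal F_t$ factorizes the cross expectation:
\[
\mathbb E\big[\nabla\Phi_\beta(X_t^1-X_t^2)\cdot\nabla\Phi_\beta(X_t^1-X_t^3)\big]
=\mathbb E\Big[\mathbb E[\nabla\Phi_\beta(X_t^1-X_t^2)\mid\mathcal F_t]\cdot \mathbb E[\nabla\Phi_\beta(X_t^1-X_t^3)\mid\mathcal F_t]\Big].
\]
To conclude that this equals $\mathbb E\big|\mathbb E[\nabla\Phi_\beta(X_t^1-X_t^2)\mid\mathcal F_t]\big|^2\ge0$, we need the two conditional expectations to coincide. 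This is the main obstacle. Exchangeability of the unconditional law does not automatically give equality of the conditional laws; we need $\mathcal F_t$ to be symmetric under swapping particles $2$ and $3$.

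To handle this, we will read the assumption as implicitly providing such an $\mathcal F_t$. The natural choice is $\mathcal F_t=\sigma(X_t^1)$ together with the symmetric information, or more generally any $\sigma$-algebra invariant under the permutation $2\leftrightarrow3$. Exchangeability then gives $\mathbb E[\nabla\Phi_\beta(X_t^1-X_t^2)\mid\mathcal F_t]=\mathbb E[\nabla\Phi_\beta(X_t^1-X_t^3)\mid\mathcal F_t]$ a.s. We justify this by noting that for any bounded $\mathcal F_t$-measurable $Z$ invariant under the swap, $\mathbb E[Z\,\nabla\Phi_\beta(X_t^1-X_t^2)]=\mathbb E[Z\,\nabla\Phi_\beta(X_t^1-X_t^3)]$. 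If this invariance is not available, we fall back to an alternative bound via Cauchy–Schwarz, but we expect the symmetric reading is what the paper intends. Boundedness of $\nabla\phi^{sym}_\ell$ ($C_b^4$) ensures all the expectations are finite.

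Dropping the nonnegative cross term, integrating over $t\in[0,T]$, and dividing by $T$ yields the inequality in \eqref{eq:interaction_only_skeleton}. For the final equality, write $\nabla\Phi_\beta=\nabla\boldsymbol\phi\,\beta$, so that $|\nabla\Phi_\beta(X_t^{1,2})|^2=\beta^\top\nabla\boldsymbol\phi(X_t^{1,2})^\top\nabla\boldsymbol\phi(X_t^{1,2})\beta$. Taking the time average, this is $\beta^\top\mathbf G_\Phi\beta$ by \eqref{eq:GV_GPhi_skeleton}.
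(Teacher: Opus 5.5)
Your proposal is the paper's proof: expand the square, use exchangeability to get $c_N\,\mathbb E|\nabla\Phi_\beta(X_t^1-X_t^2)|^2$ plus $\frac{(N-1)(N-2)}{N^2}$ times the cross expectation, rewrite the cross expectation as $\mathbb E\bigl|\mathbb E[\nabla\Phi_\beta(X_t^1-X_t^2)\mid\mathcal F_t]\bigr|^2\ge 0$, drop it, and integrate in time. Your concern about $\mathcal F_t$ is well founded. The paper's phrase ``conditional independence and exchangeability yield'' silently requires that every $\mathcal F_t$-measurable variable be unchanged by swapping particles $2$ and $3$ (as for $\sigma(X_t^1)$). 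Read literally, the conditional-independence clause holds for any law with $\mathcal F_t=\sigma(\bX_t)$. A counterexample: take $\bX_t$ to be a uniformly random permutation of the points $0,1,2\in\R$, and take $\nabla\Phi_\beta$ odd with $\nabla\Phi_\beta(\pm1)=\pm1$ and $\nabla\Phi_\beta(\pm2)=0$. Then the cross expectation is $-1/3$, and the inequality fails ($2/27<4/27$ at $N=3$). So the swap-invariant reading is necessary rather than optional, and your Cauchy--Schwarz fallback could not recover the constant $c_N$.
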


\begin{proof}
Assumption~\ref{ass:parametric_theory}(1) ensures that the second moment of the interaction potential exists and is integrable in time. Expanding the square and applying exchangeability, we have
\[
\begin{aligned}
& \mathbb E\left[\frac{1}{N}\sum_{i=1}^N\big|\frac1N\sum_{j\ne i}\nabla\Phi_\beta(X_t^i-X_t^j)\big|^2\right] \\
= &  \frac1{N^3}\sum_{i=1}^N\sum_{j\ne i} \mathbb E\big[|\nabla\Phi_\beta(X_t^i-X_t^j)|^2\big]  + \frac1{N^3}\sum_{i=1}^N\sum_{\substack{j,k\ne i \\ j\ne k}} \mathbb E\big[\nabla\Phi_\beta(X_t^i-X_t^j)\cdot \nabla\Phi_\beta(X_t^i-X_t^k)\big] \\
 = & \frac{N-1}{N^2}\mathbb E[|\nabla\Phi_\beta(X_t^1-X_t^2)|^2] + \frac{(N-1)(N-2)}{N^2} \mathbb E[\nabla\Phi_\beta(X_t^1-X_t^2)\cdot \nabla\Phi_\beta(X_t^1-X_t^3)] 
\end{aligned}
\]
For the off-diagonal part, conditional independence and exchangeability yield
\[
\mathbb E\big[\nabla\Phi_\beta(X_t^1-X_t^2)\cdot \nabla\Phi_\beta(X_t^1-X_t^3)\big]
= \mathbb E\Big[\big|\mathbb E[\nabla\Phi_\beta(X_t^1-X_t^2)\mid \mathcal F_t]\big|^2\Big]\ge 0.
\]
Therefore,
\[
\mathbb E\big[ \frac1N\sum_{i=1}^N\big|\frac1N\sum_{j\ne i}\nabla\Phi_\beta(X_t^i-X_t^j)\big|^2\big]
\ge \frac{N-1}{N^2}\mathbb E\big[|\nabla\Phi_\beta(X_t^1-X_t^2)|^2\big].
\]
Integrating over $t\in[0,T]$ gives \eqref{eq:interaction_only_skeleton}.
\end{proof}

\begin{proposition}[Coercivity for joint estimation]\label{prop:joint_coercivity}
Under Assumption~\ref{ass:parametric_theory}, for every $\theta=(\alpha,\beta)$, 
\begin{equation}\label{eq:joint_coercivity}
    \theta^\top \mathbf A_*\theta  = \frac1T\int_0^T \mathbb E\big[J_{diss}^\theta(\bX_t)\big]dt
    \ge
    (1-c_0)\left(\alpha^\top \mathbf G_V \alpha + c_N \beta^\top \mathbf G_\Phi \beta\right).
\end{equation}
In particular, $\mathbf A_*$ is positive definite with $\lambda_{min}(\mathbf A_*) \ge (1-c_0)\min\{\lambda_{min}(\mathbf G_V), c_N \lambda_{min}(\mathbf G_\Phi)\}$.
\end{proposition}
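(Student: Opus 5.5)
We expand the quadratic form $\theta^\top \mathbf A_*\theta$ using the drift identity \eqref{eq:drift_F}, which gives $\mathbf F_{i,t}\theta = \nabla V_\alpha(X_t^i) + \frac1N\sum_{j\ne i}\nabla\Phi_\beta(X_t^i-X_t^j)$. Hence $\theta^\top \mathbf A_*\theta = \frac1T\int_0^T \mathbb E[J^\theta_{diss}(\bX_t)]dt$, which is the first claimed equality. We then write $J^\theta_{diss}(\bX_t)$ as a sum of three terms:
\begin{equation*}
\frac1N\sum_i |\nabla V_\alpha(X_t^i)|^2, \qquad \frac1N\sum_i\Big|\frac1N\sum_{j\ne i}\nabla\Phi_\beta(X_t^i-X_t^j)\Big|^2, \qquad \frac2N\sum_i \nabla V_\alpha(X_t^i)\cdot \frac1N\sum_{j\ne i}\nabla\Phi_\beta(X_t^i-X_t^j).
\end{equation*}
We handle each term in expectation and time average.

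\textbf{Diagonal terms.} By exchangeability, the first term has time-averaged expectation $\frac1T\int_0^T\mathbb E|\nabla V_\alpha(X_t^1)|^2dt=\alpha^\top\mathbf G_V\alpha$. For the second term we invoke Lemma~\ref{lem:interaction_only} directly, which bounds it below by $\frac{c_N}{T}\int_0^T\mathbb E|\nabla\Phi_\beta(X_t^1-X_t^2)|^2dt = c_N\beta^\top\mathbf G_\Phi\beta$.

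\textbf{Cross term.} Exchangeability turns it into $\frac{2(N-1)}{N}\,\frac1T\int_0^T\mathbb E[\nabla V_\alpha(X_t^1)\cdot\nabla\Phi_\beta(X_t^1-X_t^2)]dt$. The weak cross-correlation condition \eqref{eq:compatibility_skeleton} was designed for exactly this step: the factor $\frac{N-1}{N}$ cancels the $\frac{N}{N-1}$. The absolute value of the cross term is therefore at most $2c_0\,\frac1T\int_0^T \mathbb E[|\nabla V_\alpha|^2 + c_N|\nabla\Phi_\beta|^2]dt = 2c_0(\alpha^\top\mathbf G_V\alpha + c_N\beta^\top\mathbf G_\Phi\beta)$.

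\textbf{The main obstacle.} Combining the three pieces naively gives the factor $(1-2c_0)$, not the claimed $(1-c_0)$. The difficulty is that the diagonal interaction term is only bounded below by $c_N\beta^\top\mathbf G_\Phi\beta$, so the off-diagonal nonnegative piece has already been discarded. We would try two remedies in turn:
\begin{itemize}
\item \emph{Reread the constant.} Check whether the stated bound on $\frac{2(N-1)}{N}|\cdot|$ is intended to be $c_0(\ldots)$, i.e.\ whether the factor $2$ is absorbed into the definition of $c_0$ in the assumption. If so, $(1-c_0)$ follows immediately.
\item \emph{Retain the discarded term.} Keep the conditional-expectation term $\frac{(N-1)(N-2)}{N^2}\mathbb E|\mathbb E[\nabla\Phi_\beta\mid\mathcal F_t]|^2$ from the proof of Lemma~\ref{lem:interaction_only}, in the hope that it offsets part of the cross term.
\end{itemize}
If neither works, we would state the result with $(1-2c_0)$, valid for $c_0<1/2$, and record that the cross-term accounting is the delicate point of the proof.

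\textbf{Conclusion.} Given the lower bound with factor $(1-c_0)$ (or its corrected variant), positive definiteness follows since $\mathbf G_V,\mathbf G_\Phi\succ 0$. The eigenvalue estimate then follows from
\begin{equation*}
\alpha^\top\mathbf G_V\alpha + c_N\beta^\top\mathbf G_\Phi\beta \ge \min\{\lambda_{min}(\mathbf G_V),c_N\lambda_{min}(\mathbf G_\Phi)\}\,|\theta|^2
\end{equation*}
together with the Rayleigh quotient characterization of $\lambda_{min}(\mathbf A_*)$.
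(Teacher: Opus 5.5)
Your route is the paper's route: expand $J_{diss}^\theta$, use exchangeability, bound the pure-interaction term with Lemma~\ref{lem:interaction_only}, and bound the cross term with Assumption~\ref{ass:parametric_theory}(3). The discrepancy you found is real, and it is an error in the paper, not in your bookkeeping. The paper's proof arrives at the same cross term $\frac{2(N-1)}{N}\mathbb E[\nabla V_\alpha(X_t^1)\cdot\nabla\Phi_\beta(X_t^1-X_t^2)]$ and applies \eqref{eq:compatibility_skeleton}. It then simply writes $(1-c_0)$, losing the factor $2$ that comes from expanding the square. It also drops a $\frac1T$ on the right-hand side of its last display.

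Neither of your remedies can restore $(1-c_0)$. The first is excluded by the assumption as written, whose right-hand side carries $\frac{c_0N}{N-1}$ rather than $\frac{c_0N}{2(N-1)}$. The second fails because nothing in the assumptions bounds the retained conditional-mean term from below.

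In the cleanest case $N=2$ that term is absent altogether: its coefficient $\frac{(N-1)(N-2)}{N^2}$ vanishes, the conditional-independence hypothesis is void, and $c_N=\frac14$. Write $a=\alpha^\top\mathbf G_V\alpha$, $b=\frac14\beta^\top\mathbf G_\Phi\beta$ and $x=\frac1T\int_0^T\mathbb E[\nabla V_\alpha(X_t^1)\cdot\nabla\Phi_\beta(X_t^1-X_t^2)]\,dt$. Then $\theta^\top\mathbf A_*\theta=a+b+x$ exactly, and the assumption reads $|x|\le 2c_0(a+b)$.

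Take $c_0$ to be the sharp constant, i.e.\ the maximum of $|x|/(2(a+b))$ over $\theta\neq 0$. It is attained, since the ratio is continuous and scale-invariant with $a+b>0$. It is at most $\frac12$ by Cauchy--Schwarz, because $|x|\le 2\sqrt{ab}\le a+b$, so it is admissible. At a maximizer, after replacing $\beta$ by $-\beta$ if necessary, $\theta^\top\mathbf A_*\theta=(1-2c_0)(a+b)$. This is strictly below $(1-c_0)(a+b)$ whenever the $V$--$\Phi$ cross-correlation is not identically zero.

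So the proposition as stated is false in general, and your fallback is the correct result. The argument gives $\theta^\top\mathbf A_*\theta\ge(1-2c_0)(\alpha^\top\mathbf G_V\alpha+c_N\beta^\top\mathbf G_\Phi\beta)$. For $c_0<\frac12$ this yields positive definiteness and $\lambda_{min}(\mathbf A_*)\ge(1-2c_0)\min\{\lambda_{min}(\mathbf G_V),c_N\lambda_{min}(\mathbf G_\Phi)\}$. Equivalently, you can keep $(1-c_0)$ by inserting a factor $\frac12$ on the right of \eqref{eq:compatibility_skeleton}. Theorem~\ref{thm:error_bound} only uses $\mu_*=\lambda_{min}(\mathbf A_*)>0$, so it survives under either correction.
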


\begin{proof}
Expand $J_{diss}^\theta$ and apply the exchangeability, we have 
\begin{align*}
 & \mathbb{E}[J_{diss}^\theta(\bX_t)] =   \frac{1}{N} \sum_{i = 1}^N  \mathbb{E} \big|\nabla V_\alpha(X_{t}^i)+ \frac{1}{N}\sum_{j = 1}^N \nabla \Phi_\beta (X_{t}^i- X_{t}^j) \big|^2 \\
 % = &  \frac{1}{N} \sum_{i = 1}^N \mathbb{E} \big|\nabla V_\alpha(X_{t}^i)\big|^2 + \frac{1}{N} \sum_{i = 1}^N \mathbb{E} \big| \frac 1 N \sum_{j = 1}^N  \nabla \Phi_\beta (X_{t}^i- X_{t}^j)\big|^2 + \frac{2}{N^2} \sum_{i = 1}^N \sum_{j = 1, j\neq i}^N \mathbb{E} \big[\nabla V_\alpha(X_{t}^i) \nabla \Phi_\beta (X_{t}^i- X_{t}^j)\big]\\
 = & \mathbb{E} \big|\nabla V_\alpha(X_{t}^1)\big|^2 + \frac{1}{N} \sum_{i = 1}^N \mathbb{E} \big| \frac 1 N \sum_{j = 1}^N  \nabla \Phi_\beta (X_{t}^i- X_{t}^j)\big|^2 +  \frac{2(N-1)}{N}\mathbb{E} \big[\nabla V_\alpha(X_{t}^1) \cdot \nabla \Phi_\beta (X_{t}^1- X_{t}^2)\big].
\end{align*}
Integrating over $t\in[0,T]$ and applying the bound in Lemma~\ref{lem:interaction_only} for the pure interaction term and the bound for the mixed term in Assumption~\ref{ass:parametric_theory}(3), we have
\[
\begin{aligned}
\frac1T\int_0^T \mathbb E\big[J_{diss}^\theta(\bX_t)\big]dt
 \ge  & (1-c_0)\left(\int_0^T \mathbb E\big[|\nabla V_\alpha(X_t^1)|^2\big]dt + c_N \int_0^T \mathbb E\big[|\nabla \Phi_\beta(X_t^1-X_t^2)|^2\big]dt\right) \\
 = &  (1-c_0)\left(\alpha^\top \mathbf G_V \alpha + c_N \beta^\top \mathbf G_\Phi \beta\right). 
\end{aligned}
\]
Together with the fact that $ \theta^\top \mathbf A_*\theta  = \frac1T\int_0^T \mathbb E\big[J_{diss}^\theta(\bX_t)\big]dt$, this gives \eqref{eq:joint_coercivity}.
\end{proof}

\begin{remark}[Negative minimum]
    The joint coercivity in Proposition {\rm \ref{prop:joint_coercivity}} implies that $\theta_*$ is the unique minimizer of the continuum expectation loss function $\bar{\mathcal L}_*$ defined by  
\[
    \bar{\mathcal L}_*(\theta) := \frac12 \theta^\top \mathbf A_*\theta - (\mathbf b_*)^\top\theta, 
\]
and the minimum value is $\bar{\mathcal L}_*(\theta_*) = -\frac12 (\theta_*)^\top \mathbf A_*\theta_* = -\frac12 (\mathbf b_*)^\top (\mathbf A_*)^{-1}\mathbf b_* < 0$ when $\theta_*\neq 0$.   
Thus, when $M$ is sufficiently large and $\Delta t$ is sufficiently small, the empirical loss $\mathcal L$ is close to $\bar{\mathcal L}_*$, and has a strictly negative minimum. Furthermore, the quadratic form of $\bar{\mathcal L}_*$ around $\theta_*$ is given by
\begin{equation*}
    \bar{\mathcal L}_*(\theta) - \bar{\mathcal L}_*(\theta_*)
    = \frac12 (\theta-\theta_*)^\top \mathbf A_*(\theta-\theta_*)
    \ge \frac12 \lambda_{\min}(\mathbf A_*) |\theta-\theta_*|^2, 
\end{equation*}
so the joint coercivity implies that the loss function is convex at the minimum $\theta_*$. This is a key property that enables one to establish error bounds for the estimator $\widehat\theta_{M,\Delta t}$ by quantifying the perturbation of the empirical loss relative to its continuum-expectation counterpart. 
\end{remark}

\subsection{Fixed-grid concentration and perturbation}
For a fixed observation grid, define
\begin{equation}\label{eq:EAb}
    \mathbf A_\Delta := \mathbb E[\mathbf A_{M,\Delta t}],
    \qquad
    \mathbf b_\Delta := \mathbb E[\mathbf b_{M,\Delta t}],
    \qquad
    \theta_\Delta := \mathbf A_\Delta^{-1}\mathbf b_\Delta.
\end{equation}

\begin{lemma}[Discretization error]\label{lem:discretization_bound}
Under Assumption~\ref{ass:parametric_theory}, $\mathbf A_\Delta$ and $\mathbf b_\Delta$ in \eqref{eq:EAb} satisfy
\begin{equation}\label{eq:discretization_input_skeleton}
    \|\mathbf A_\Delta-\mathbf A_*\| \leq L_A \Delta t, \quad |\mathbf b_\Delta-\mathbf b_*| \le  \frac{\sigma^2 L_b}{2} \Delta t, 
\end{equation}
where $L_A = C_A N\Big(2d c_{max,1}+\frac{\sigma^2}{2}\Big)c_{max,3}^2$ and $L_b = C_b N\Big(2d c_{max,1}+\frac{\sigma^2}{2}\Big)c_{max,4}$ with constants $C_A$ and $C_b$ depending only on $d$ and $K$. 
\end{lemma}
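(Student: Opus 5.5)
Since $\mathbf A_{M,\Delta t}$ and $\mathbf b_{M,\Delta t}$ are empirical averages of i.i.d.\ copies, I first write their expectations as Riemann sums of smooth functions of time:
\[
\mathbf A_\Delta = \frac1L\sum_{\ell=0}^{L-1}\mathbf G(t_\ell),\quad \mathbf G(t):=\mathbb E\Big[\frac1N\sum_i \mathbf F_{i,t}^\top\mathbf F_{i,t}\Big],
\qquad
\mathbf b_\Delta = \frac{\sigma^2}{2L}\sum_{\ell=0}^{L-1}\mathbf d(t_\ell) - \frac1T\mathbb E[\boldsymbol h_T-\boldsymbol h_0],\quad \mathbf d(t):=\mathbb E[\boldsymbol\delta_t].
\]
Comparing with \eqref{eq:population_Ab}, the energy term is identical in both, since the telescoping sum over $\ell$ is exact. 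Hence
\[
\mathbf A_\Delta-\mathbf A_* = \frac1T\sum_\ell\int_{t_\ell}^{t_{\ell+1}}\big(\mathbf G(t_\ell)-\mathbf G(t)\big)\,dt,
\]
and similarly for $\mathbf d$. The whole lemma therefore reduces to showing that $t\mapsto\mathbf G(t)$ and $t\mapsto\mathbf d(t)$ are Lipschitz, with constants $L_A$ and $L_b$. Given that, each interval contributes at most $\mathrm{Lip}\cdot\Delta t^2/2$, and summing $L=T/\Delta t$ intervals and dividing by $T$ gives the bound with a factor of order $\Delta t$.

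To get the Lipschitz bounds I apply It\^o's formula on $(\R^d)^N$ to the scalar functions $g(\bX)=\frac1N\sum_i(\mathbf F_i(\bX)^\top\mathbf F_i(\bX))_{kk'}$ and $g(\bX)=\boldsymbol\delta(\bX)_k$. Since $\mathbb E[g(\bX_t)]-\mathbb E[g(\bX_s)] = \int_s^t\mathbb E[\mathcal L g(\bX_r)]\,dr$ (the martingale part has zero mean), with generator
\[
\mathcal L g = -\sum_i (\mathbf F_i\theta_*)\cdot\nabla_{X^i}g+\frac{\sigma^2}{2}\Delta_{\bX}g,
\]
it suffices to bound $\sup|\mathcal L g|$. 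Entries of $\mathbf F_i$ involve first derivatives of the basis, so $g$ for $\mathbf G$ involves products of first derivatives. Its gradient then involves up to second derivatives and its Laplacian up to third, which produces the factor $c_{max,3}^2$. For $\boldsymbol\delta$, which involves second derivatives, the generator involves up to fourth derivatives, which produces $c_{max,4}$.

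The factor $N$ and the drift bound $2dc_{max,1}$ come from counting. The drift $|\mathbf F_i\theta_*|$ is bounded by a multiple of $c_{max,1}|\theta_*|$ after averaging over $j$. The $N$ comes from $\nabla_{X^i}$ of the pairwise sums: differentiating $\frac1N\sum_{j\ne i}\nabla\phi(X^i-X^j)$ with respect to each $X^j$, and summing over all $N$ particles in the Laplacian, yields $O(N)$ terms of size $O(1)$ after the $1/N$ normalizations. A crude bound that does not exploit cancellation gives a linear-in-$N$ constant. Constants depending on $d$ and $K$ (from the Frobenius norms and the number of entries) and on $|\theta_*|$ are absorbed into $C_{d,K}$.

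The main obstacle is the careful index bookkeeping in $\Delta_{\bX}$ of the quadratic pairwise expression $\frac1N\sum_i|\frac1N\sum_{j}\nabla\phi(X^i-X^j)|^2$. There, mixed derivatives in $X^j$ for $j\ne i$ generate many terms, and one has to check that the total is $O(N)$ rather than worse. I would handle this by bounding each entry crudely by $C_{d,K}c_{max,3}^2$ per pair and counting pairs against the $1/N^2$ prefactors. Once the Lipschitz constants are in hand, the operator-norm bound follows from the entrywise bound via a $K$-dependent factor.
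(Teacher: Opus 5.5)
Your proposal is correct and follows essentially the same route as the paper: you write $\mathbf A_\Delta$ and $\mathbf b_\Delta$ as left-endpoint Riemann sums of $\mathbf G(t)$ and $\mathbf d(t)$ (the energy term cancels exactly), reduce the lemma to Lipschitz continuity in time, and get the Lipschitz constants from It\^o's formula with a crude generator bound that yields the factors $N$, $c_{max,3}^2$ and $c_{max,4}$. Your remark that the drift bound carries a factor of $|\theta_*|$ is accurate. The paper's bound $B_\star< 2d\,c_{max,1}$ silently drops this factor, so strictly the constants depend on $|\theta_*|$ as well as on $d$ and $K$.
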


\begin{proof}
Set $ \mathbf G(t) := \mathbb E\left[\frac1N\sum_{i=1}^N \mathbf F_{i,t}^\top\mathbf F_{i,t}\right]$ and $\mathbf d(t) := \mathbb E[\boldsymbol\delta_t]$.  By definition, 
$$
\begin{aligned}
\mathbf A_\Delta & = \frac1L\sum_{\ell=0}^{L-1} \mathbf G(t_\ell),  \quad\, \,   \mathbf b_\Delta  = \frac{\sigma^2}{2L}\sum_{\ell=0}^{L-1} \mathbf d(t_\ell) - \frac{\mathbb E[\boldsymbol h_T-\boldsymbol h_0]}{T} , \\
 \mathbf A_* & =  \frac1T\int_0^T \mathbf G(t)dt, \quad  
 \, \mathbf b_* = \frac{\sigma^2}{2T} \int_0^T \mathbf d(t)dt - \frac{\mathbb E[\boldsymbol h_T-\boldsymbol h_0]}{T}.
\end{aligned}
$$
Since $\Delta t=T/L$, we have 
\[
    \mathbf A_\Delta-\mathbf A_*
    = \frac1T\sum_{\ell=0}^{L-1} \int_{t_\ell}^{t_{\ell+1}} \big(\mathbf G(t_\ell)-\mathbf G(t)\big)dt, \quad \mathbf b_\Delta-\mathbf b_* = \frac{\sigma^2}{2T}\sum_{\ell=0}^{L-1} \int_{t_\ell}^{t_{\ell+1}} \big(\mathbf d(t_\ell)-\mathbf d(t)\big)dt.
\]
If $\mathbf G$ and $\mathbf d$ are Lipschitz continuous with constants $L_A$ and $L_b$, then
\[
    \|\mathbf A_\Delta-\mathbf A_*\|
    \le \frac1T\sum_{\ell=0}^{L-1} \int_{t_\ell}^{t_{\ell+1}} L_A |t-t_\ell|dt
    \le L_A \Delta t, 
    \quad |\mathbf b_\Delta-\mathbf b_*| \le \frac{\sigma^2 L_b}{2}\Delta t.
\]

Next, we show that $\mathbf G$ and $\mathbf d$ are indeed Lipschitz continuous by using the generator of the diffusion process, which requires the $C^4_b$ regularity of the basis functions. 
Denote \[
\Gamma_{kk'}(\bX):=\frac1N\sum_{i=1}^N\big(\mathbf F_i(\bX)^\top \mathbf F_i(\bX)\big)_{kk'},
\quad
D_k(\bX):=\boldsymbol\delta(\bX)_k, \quad k,k'=1,\dots,K.
\]
Then $\mathbf G(t)_{kk'}=\mathbb E[\Gamma_{kk'}(\bX_t)]$ and $\mathbf d(t)_k=\mathbb E[D_k(\bX_t)]$. 

By the It\^o formula for $\bX_t$, for every $s,t\in[0,T]$,
\begin{align*}
\mathbf G(t)_{kk'}-\mathbf G(s)_{kk'} &= \mathbb E[\Gamma_{kk'}(\bX_t)]-\mathbb E[\Gamma_{kk'}(\bX_s)] = \int_s^t \mathbb E[(\mathcal L_\star \Gamma_{kk'})(\bX_r)]dr, \\
\mathbf d(t)_k-\mathbf d(s)_k &= \int_s^t \mathbb E[(\mathcal L_\star D_k)(\bX_r)]dr,
\end{align*}
where $\mathcal L_\star$ is the generator of the true particle system, i.e., for any $H\in C_b^2((\R^d)^N)$,
\[
(\mathcal L_\star H)(\bX) 
=  -\sum_{i=1}^N b_i^\star(\bX)\cdot \nabla_{x_i}H(\bX) +\frac{\sigma^2}{2}\sum_{i=1}^N \Delta_{x_i}H(\bX),
\]
where $b_i^\star(\bX) = \nabla V_\star(X^i)+\frac1N\sum_{j\ne i}\nabla\Phi_\star(X^i-X^j)$. 
Since the true potentials are in the span of the basis functions, the true drift is uniformly bounded, $
B_\star:=\sup_{\bX\in(\R^d)^N}\max_{1\le i\le N}|b_i^\star(\bX)|< 2d c_{max,1}.
$
Then, for every $H\in C_b^2((\R^d)^N)$,
\[
 \Big|\mathbb E[(\mathcal L_\star H)(\bX_t)]\Big|
\le  
N B_\star \|\nabla H\|_\infty
+\frac{\sigma^2}{2}N \|\nabla^2 H\|_\infty . 
\]
Applying to $\Gamma_{kk'}$ and $D_k$, we have
\[
\begin{aligned}
|\mathbf G(t)_{kk'}-\mathbf G(s)_{kk'}| & \le \int_s^t \Big|\mathbb E[(\mathcal L_\star \Gamma_{kk'})(\bX_r)]\Big|dr
\le N\Big(B_\star+\frac{\sigma^2}{2}\Big)\|\Gamma_{kk'}\|_{C_b^2((\R^d)^N)} |t-s|, \\
|\mathbf d(t)_k-\mathbf d(s)_k| & \le \int_s^t \Big|\mathbb E[(\mathcal L_\star D_k)(\bX_r)]\Big|dr
\le N\Big(B_\star+\frac{\sigma^2}{2}\Big)\|D_k\|_{C_b^2((\R^d)^N)} |t-s|.
\end{aligned}
\]
By the definition of $\Gamma_{kk'}$ and $D_k$, each entry of $\mathbf G(t)$ and $\mathbf d(t)$ is a spatial average of the corresponding $\Gamma_{kk'}$ and $D_k$. Thus, the Lipschitz constants for $\mathbf G$ and $\mathbf d$ are controlled by the $C_b^2$ norms of $\Gamma_{kk'}$ and $D_k$, which in turn are controlled by the $C_b^4$ norms of the basis functions: 
\[
\max_{k,k'}\|\Gamma_{kk'}\|_{C_b^2((\R^d)^N)}
\le C_A\,c_{max,3}^2,
\qquad
\max_{1\le k\le K}\|D_k\|_{C_b^2((\R^d)^N)}
\le C_b\,c_{max,4},   
\]
for constants $C_A=C_A(d,K)$ and $C_b=C_b(d,K)$ independent of $t$. Consequently, $\mathbf G$ and $\mathbf d$ are Lipschitz continuous with constants $L_A=C_A N\Big(2d c_{max,1}+\frac{\sigma^2}{2}\Big)c_{max,3}^2$ and $L_b=C_b N\Big(2d c_{max,1}+\frac{\sigma^2}{2}\Big)c_{max,4}$, respectively.
\end{proof}

\begin{lemma}[Trapezoidal discretization error]\label{lem:trap_discretization}
Under Assumption~\ref{ass:parametric_theory} with the strengthened regularity $\psi_k,\phi_\ell^{sym}\in C_b^6(\R^d)$, define
\[
    \mathbf A_\Delta^{\mathrm{trap}} := \frac{1}{2L}\sum_{\ell=0}^{L-1}\big[\mathbf G(t_\ell)+\mathbf G(t_{\ell+1})\big],
    \qquad
    \mathbf b_\Delta^{\mathrm{trap}} := \frac{\sigma^2}{4L}\sum_{\ell=0}^{L-1}\big[\mathbf d(t_\ell)+\mathbf d(t_{\ell+1})\big] - \frac{\mathbb E[\boldsymbol h_T-\boldsymbol h_0]}{T}.
\]
Then,
\begin{equation}\label{eq:trap_discretization}
    \|\mathbf A_\Delta^{\mathrm{trap}}-\mathbf A_*\| \leq L_A' (\Delta t)^2, \qquad |\mathbf b_\Delta^{\mathrm{trap}}-\mathbf b_*| \le  \frac{\sigma^2 L_b'}{2} (\Delta t)^2,
\end{equation}
where $L_A' = \frac{1}{12}C_A'\,N^2\Big(2d\,c_{max,1}+\frac{\sigma^2}{2}\Big)^2c_{max,5}^2$ and $L_b' = \frac{1}{12}C_b'\,N^2\Big(2d\,c_{max,1}+\frac{\sigma^2}{2}\Big)^2c_{max,6}$ with constants $C_A',C_b'$ depending only on $d$ and $K$.
\end{lemma}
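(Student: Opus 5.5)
We follow the proof of Lemma~\ref{lem:discretization_bound}, replacing the first-order (Lipschitz) quadrature estimate by the second-order trapezoidal error formula. The energy term $-\mathbb E[\boldsymbol h_T-\boldsymbol h_0]/T$ is identical in $\mathbf b_\Delta^{\mathrm{trap}}$ and $\mathbf b_*$, so it cancels. We are left with
\[
\mathbf A_\Delta^{\mathrm{trap}}-\mathbf A_* = \frac1T\sum_{\ell=0}^{L-1}\Big[\frac{\Delta t}{2}\big(\mathbf G(t_\ell)+\mathbf G(t_{\ell+1})\big)-\int_{t_\ell}^{t_{\ell+1}}\mathbf G(t)\,dt\Big],
\]
and the analogous expression for $\mathbf b$ with prefactor $\sigma^2/2$ and $\mathbf d$ in place of $\mathbf G$. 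If $\mathbf G,\mathbf d\in C^2([0,T])$, the classical trapezoidal error identity bounds each bracket by $\frac{(\Delta t)^3}{12}\sup|\mathbf G''|$ entrywise. Summing over $L=T/\Delta t$ intervals and dividing by $T$ gives $\frac{(\Delta t)^2}{12}\sup_t\|\mathbf G''(t)\|$, and likewise for $\mathbf d$. This accounts for the factor $\frac1{12}$ in $L_A'$ and $L_b'$.

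The main step is to show $\mathbf G,\mathbf d\in C^2$ with explicit bounds on their second derivatives. Keep the notation $\Gamma_{kk'}$, $D_k$ and the generator $\mathcal L_\star$ from the proof of Lemma~\ref{lem:discretization_bound}. Applying It\^o's formula (Dynkin) twice gives, for $H\in C_b^4((\R^d)^N)$,
\[
\frac{d}{dt}\mathbb E[H(\bX_t)]=\mathbb E[(\mathcal L_\star H)(\bX_t)],\qquad \frac{d^2}{dt^2}\mathbb E[H(\bX_t)]=\mathbb E[(\mathcal L_\star^2 H)(\bX_t)].
\]
The second identity requires $\mathcal L_\star H\in C_b^2$. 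This holds because the true drift $b_i^\star$ is a combination of basis gradients and hence lies in $C_b^{p-1}$ when the basis lies in $C_b^p$. We then bound $\|\mathcal L_\star^2 H\|_\infty$. Each application of $\mathcal L_\star$ costs a factor of at most $N(B_\star+\frac{\sigma^2}{2})$ times one more level of the $C_b$ norm, exactly as in the first-order estimate. Derivatives that fall on the drift coefficients are controlled by $c_{max}$ norms of the basis and can be absorbed into $C_A',C_b'$. This yields
\[
\sup_t|\mathbf G''(t)_{kk'}|\le C\,N^2\Big(2d\,c_{max,1}+\tfrac{\sigma^2}{2}\Big)^2\|\Gamma_{kk'}\|_{C_b^4},\qquad \sup_t|\mathbf d''(t)_k|\le C\,N^2\Big(2d\,c_{max,1}+\tfrac{\sigma^2}{2}\Big)^2\|D_k\|_{C_b^4}.
\]

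It remains to bound $\Gamma_{kk'}$ and $D_k$ in $C_b^4$. $\Gamma_{kk'}$ is an average of products of first derivatives of basis functions, so four further derivatives involve at most fifth derivatives of each factor. This gives $\|\Gamma_{kk'}\|_{C_b^4}\le C_A'(d,K)\,c_{max,5}^2$. $D_k$ involves second derivatives, so four further derivatives reach sixth order, giving $\|D_k\|_{C_b^4}\le C_b'(d,K)\,c_{max,6}$. This is where $C_b^6$ regularity is needed. Because of the pairwise averages, each derivative in a particle variable touches only $O(1)$ terms with weight $1/N$. The constants are therefore $N$-independent apart from the $N^2$ coming from the generator, which is consistent with the form of $L_A'$ and $L_b'$.

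The expected main obstacle is the bookkeeping in $\mathcal L_\star^2$. One must check that differentiating the drift coefficients inside the second application of $\mathcal L_\star$ costs only basis norms of order at most the available regularity. One must also verify that the $N$-dependence of the sums over particles stays at $N^2$ rather than growing further. I would handle both by writing $\mathcal L_\star^2 H$ explicitly as a sum of terms $b\cdot\nabla(b\cdot\nabla H)$, $b\cdot\nabla\Delta H$, $\Delta(b\cdot\nabla H)$ and $\Delta^2H$, and estimating each term separately.
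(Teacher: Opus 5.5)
Your proposal is correct and follows the paper's proof essentially step for step. You use the composite trapezoidal bound $\frac{(\Delta t)^2}{12}\sup_t|f''(t)|$ entrywise, the identity $\mathbf G''(t)_{kk'}=\mathbb E[(\mathcal L_\star^2\Gamma_{kk'})(\bX_t)]$ (and likewise for $D_k$), the generator bound applied twice to produce $N^2(2d\,c_{max,1}+\frac{\sigma^2}{2})^2$, and the $C_b^4$ estimates $\|\Gamma_{kk'}\|_{C_b^4}\lesssim c_{max,5}^2$, $\|D_k\|_{C_b^4}\lesssim c_{max,6}$. One caveat applies equally to your proposal and to the paper, which hides it under ``applied twice'': the derivatives of the drift $b_i^\star$ from the second application of $\mathcal L_\star$, which you explicitly flag, bring in factors like $|\theta_*|\,c_{max,3}$, so ``absorbed into $C_A',C_b'$'' really means constants depending on more than $d$ and $K$.
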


\begin{proof}
For any $f\in C^2([0,T])$, the composite trapezoidal rule with step $\Delta t = T/L$ satisfies
\[
    \bigg|\frac1T\int_0^T f(t)\,dt - \frac{1}{2L}\sum_{\ell=0}^{L-1}\big[f(t_\ell)+f(t_{\ell+1})\big]\bigg|
    \le \frac{(\Delta t)^2}{12}\sup_{t\in[0,T]}|f''(t)|.
\]
Applying this entrywise to $\mathbf G(t)$ and $\mathbf d(t)$, it suffices to bound $\sup_t|\mathbf G''(t)_{kk'}|$ and $\sup_t|\mathbf d''(t)_k|$.

By the proof of Lemma~\ref{lem:discretization_bound}, $\mathbf G'(t)_{kk'} = \mathbb E[(\mathcal L_\star\Gamma_{kk'})(\bX_t)]$. Differentiating once more,
\[
    \mathbf G''(t)_{kk'} = \mathbb E[(\mathcal L_\star^2\Gamma_{kk'})(\bX_t)].
\]
Since $\Gamma_{kk'}$ depends on first derivatives of the basis functions, $\Gamma_{kk'}\in C_b^4((\R^d)^N)$ when the basis functions are in $C_b^5(\R^d)$, and $\mathcal L_\star^2\Gamma_{kk'}$ is bounded when the basis functions are in $C_b^6(\R^d)$. By the generator bound in the proof of Lemma~\ref{lem:discretization_bound} applied twice,
\[
    \sup_{t\in[0,T]}|\mathbf G''(t)_{kk'}| \le N^2\Big(B_\star+\frac{\sigma^2}{2}\Big)^2 \|\Gamma_{kk'}\|_{C_b^4((\R^d)^N)}
    \le C_A' N^2\Big(2dc_{max,1}+\frac{\sigma^2}{2}\Big)^2 c_{max,5}^2.
\]
Consequently, $\|\mathbf A_\Delta^{\mathrm{trap}}-\mathbf A_*\| \le L_A'(\Delta t)^2$. The bound for $\mathbf b_\Delta^{\mathrm{trap}}-\mathbf b_*$ follows identically with $D_k$ in place of $\Gamma_{kk'}$ and $c_{max,6}$ in place of $c_{max,5}$.
\end{proof}

\begin{lemma}[Concentration of normal matrix and vector]\label{lem:concentration}
Under Assumption~\ref{ass:parametric_theory}, for every $\eta\in(0,1)$, with probability at least $1-\eta$,
\begin{equation}\label{eq:concentration_matrix_skeleton}
    \|\mathbf A_{M,\Delta t}-\mathbf A_\Delta\|_F \le \sqrt{\frac{2C_A}{M\eta}}, 
    \qquad 
    |\mathbf b_{M,\Delta t}-\mathbf b_\Delta|
    \le \sqrt{\frac{2C_b}{M\eta}}, 
\end{equation}
where $ C_A := 4d^2 K^2 c_{max,2}^4$, $C_b= (\sigma^4+ \frac{16}{T^2}) K^2 c_{max,4}^2$, and $\mathbf A_\Delta$ and $\mathbf b_\Delta$ are defined in \eqref{eq:EAb}. 
\end{lemma}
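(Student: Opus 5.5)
We will prove the two bounds with Chebyshev's inequality, using that the $M$ ensembles are i.i.d. For the matrix, write $\mathbf A_{M,\Delta t}-\mathbf A_\Delta = \frac1M\sum_{m=1}^M (\mathbf Y^m-\mathbb E\mathbf Y^m)$, where
\[
\mathbf Y^m := \frac1L\sum_{\ell=0}^{L-1}\frac1N\sum_{i=1}^N (\mathbf F_{i,t_\ell}^m)^\top \mathbf F_{i,t_\ell}^m
\]
is a function of the $m$-th ensemble only. For the vector, write $\mathbf b_{M,\Delta t}-\mathbf b_\Delta = \frac1M\sum_{m}(\mathbf z^m-\mathbb E\mathbf z^m)$, where
\[
\mathbf z^m := \frac{\sigma^2}{2L}\sum_{\ell}\boldsymbol\delta_{t_\ell}^m - \frac1T(\boldsymbol h_T^m-\boldsymbol h_0^m).
\]
Independence across $m$ gives
\[
\mathbb E\|\mathbf A_{M,\Delta t}-\mathbf A_\Delta\|_F^2 = \frac1M\,\mathbb E\|\mathbf Y^1-\mathbb E\mathbf Y^1\|_F^2 \le \frac1M\,\mathbb E\|\mathbf Y^1\|_F^2,
\]
and the analogous identity holds for $\mathbf z$.

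Next we bound the second moments deterministically using the $C_b^p$ bounds. Each entry of $\mathbf F_i$ is either $\nabla\psi_k(X^i)$ or $\frac1N\sum_{j\ne i}\nabla\phi^{sym}(X^i-X^j)$, so every column has Euclidean norm at most $\sqrt d\,c_{max,1}\le \sqrt d\, c_{max,2}$. Hence each entry of $\mathbf F_i^\top\mathbf F_i$ is bounded by $d\,c_{max,2}^2$, and $\|\mathbf F_i^\top \mathbf F_i\|_F\le dK c_{max,2}^2$. Averaging over $i$ and $\ell$ preserves this bound, so $\|\mathbf Y^1\|_F^2\le d^2K^2c_{max,2}^4\le C_A/4$.

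For the vector, $|\boldsymbol\delta_t|_\infty\le d\,c_{max,2}$ and $|\boldsymbol h_t|_\infty\le c_{max,0}$, so
\[
|\mathbf z^1|^2\le 2\Big(\tfrac{\sigma^4}{4}d^2K c_{max,2}^2+\tfrac{4}{T^2}Kc_{max,0}^2\Big).
\]
This is dominated by $C_b/2$ once $d$ is absorbed, using $c_{max,4}$ as a uniform bound. This bookkeeping of constants is the only fiddly point. If the dimension factor does not absorb cleanly into $(\sigma^4+16/T^2)K^2c_{max,4}^2$, I would track it as part of $c_{max,4}$, which bounds the Laplacians and hence carries the factor $d$.

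Finally, apply Markov's inequality to $\|\mathbf A_{M,\Delta t}-\mathbf A_\Delta\|_F^2$ and to $|\mathbf b_{M,\Delta t}-\mathbf b_\Delta|^2$, allotting failure probability $\eta/2$ to each. This gives
\[
\mathbb P\Big(\|\mathbf A_{M,\Delta t}-\mathbf A_\Delta\|_F>\sqrt{2C_A/(M\eta)}\Big)\le \frac{\mathbb E\|\cdot\|_F^2\,M\eta}{2C_A}\le \frac{\eta}{2},
\]
since $\mathbb E\|\cdot\|_F^2\le C_A/(4M)$ (with room to spare), and likewise for $\mathbf b$. A union bound then yields both inequalities in \eqref{eq:concentration_matrix_skeleton} simultaneously with probability at least $1-\eta$.

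No exchangeability or conditional independence is needed here: only independence across ensembles and boundedness, which hold uniformly in $t$, so the estimate does not depend on $\Delta t$ or $L$.
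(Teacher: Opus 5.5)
Your proposal is correct and follows the paper's route: independence across the $M$ ensembles gives a $1/M$ second moment, the $C_b^p$ bounds give deterministic bounds on $\|\mathbf A\|_F$ and $|\mathbf b|$, and Chebyshev with failure probability $\eta/2$ per term plus a union bound finishes the argument. The dimension factor you worry about in $\boldsymbol\delta$ is resolved by the norm convention the paper uses implicitly when it writes $|\boldsymbol\delta(\mathbf X)_k|\le c_{max,4}$, namely that the $C_b^2$ norm dominates $\sup|\Delta\psi|$ (for example, by summing all second partials); under that convention your computation gives $\mathbb E|\mathbf z^1|^2\le \tfrac12(\sigma^4+16/T^2)K\,c_{max,4}^2\le C_b/2$.
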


\begin{remark}[Sharper concentration and nonparametric estimation] 
Sharper concentration bounds can be obtained by replacing the Chebyshev bounds in Lemma~{\rm\ref{lem:concentration}} with Bernstein-type bounds under stronger tail assumptions. This is particularly useful in nonparametric estimation settings, where finer control over the concentration of empirical quantities is needed. The main challenge for nonparametric estimation is the coercivity condition in the infinite-dimensional function space. We leave this as a direction for future work.
\end{remark}

\begin{proof} We use Chebyshev's inequality to prove the concentration. 
Denote 
\[
    \mathbf A
    := \frac1L\sum_{\ell=0}^{L-1}\frac1N\sum_{i=1}^N
    \mathbf F_{i,t_\ell}^\top \mathbf F_{i,t_\ell},
    \qquad
    \mathbf b
    := \frac1L\sum_{\ell=0}^{L-1}\frac{\sigma^2}{2}\boldsymbol\delta_{t_\ell}
    - \frac1T(\boldsymbol h_T-\boldsymbol h_0).
\]
Then, $\mathbf A_\Delta = \E{\mathbf A}$, $\mathbf b_\Delta = \E{\mathbf b}$. With $\mathbf A^{(m)}$ and $\mathbf b^{(m)}$ being i.i.d.\ copies of $\mathbf A$ and $\mathbf b$, we have 
\[
    \mathbf A_{M,\Delta t}-\mathbf A_\Delta
    = \frac1M\sum_{m=1}^M \bigl(\mathbf A^{(m)}-\mathbf A_\Delta\bigr),
    \qquad
    \mathbf b_{M,\Delta t}-\mathbf b_\Delta
    = \frac1M\sum_{m=1}^M \bigl(\mathbf b^{(m)}-\mathbf b_\Delta\bigr). 
\]

For the matrix term, each column of $\mathbf F_i(\bX)$ is either $\nabla\psi_k(X^i)$ or
$\frac1N\sum_{j\ne i}\nabla\phi^{sym}_{k-K_V}(X^i-X^j)$, so Assumption~\ref{ass:parametric_theory}(1) gives $|\mathbf F_i(\bX)_{:,k}| \le c_{max,2}$. Hence, 
\[
    \|\mathbf A\|_F
    \le \frac1L\sum_{\ell=0}^{L-1}\frac1N\sum_{i=1}^N
    \bigl\|\mathbf F_{i,t_\ell}^\top \mathbf F_{i,t_\ell}\bigr\|_F
    \le d K c_{max,2}^2 .
\]
Consequently, with $\|\mathbf A_\Delta\|_F \le \E{ \|\mathbf A\|_F}$, we have 
\[
    \mathbb E\|\mathbf A-\mathbf A_\Delta\|_F^2
    \le 4 \mathbb E\|\mathbf A\|_F^2
    \le 4 d^2 K^2 c_{max,2}^4 = C_A. 
\]
By the independence of the trajectories,
\[
    \mathbb E\|\mathbf A_{M,\Delta t}-\mathbf A_\Delta\|_F^2
    = \frac1M \mathbb E\|\mathbf A-\mathbf A_\Delta\|_F^2
    \le \frac{C_A}{M}.
\]
Then, Chebyshev's inequality yields
$    \mathbb P\!\left(
        \|\mathbf A_{M,\Delta t}-\mathbf A_\Delta\|_F
        > \sqrt{\frac{2C_A}{M\eta}}
    \right)
    % \le  \E{\|\mathbf A_{M,\Delta t}-\mathbf A_\Delta\|_F^2} / (\frac{2C_A}{M\eta})
    \le \frac{\eta}{2}$. 

For the vector term, note that 
\[
 \mathbf b - \mathbf b_\Delta = \frac1L\sum_{\ell=0}^{L-1}\frac{\sigma^2}{2}\bigl(\boldsymbol\delta_{t_\ell}-\mathbb E[\boldsymbol\delta_{t_\ell}]\bigr) - \frac1T\bigl((\boldsymbol h_T-\boldsymbol h_0) - \mathbb E[\boldsymbol h_T-\boldsymbol h_0]\bigr).
\]
By Assumption \ref{ass:parametric_theory}(1), $|\boldsymbol\delta(\bX)_k| \le c_{max,4}$ for every $k$, so $ \|\boldsymbol\delta_{t_\ell}-\mathbb E[\boldsymbol\delta_{t_\ell}]\| \le 2 K c_{max,4}$. By the definition of $\boldsymbol h(\bX)$ and the boundedness of the basis functions, $|\boldsymbol h(\bX)_k| \le c_{max,4}$ for every $k$, so $\|(\boldsymbol h_T-\boldsymbol h_0) - \mathbb E[\boldsymbol h_T-\boldsymbol h_0]\| \le 4K c_{max,4}$. Hence,
\[
\E{\|\mathbf b - \mathbf b_\Delta\|^2} \le \sigma^4 K^2 c_{max,4}^2 + \frac{16K^2}{T^2}  c_{max,4}^2 = (\sigma^4+ \frac{16}{T^2}) K^2 c_{max,4}^2 = C_b.
\]
Then, Chebyshev's inequality gives
$    \mathbb P\!\left(
        |\mathbf b_{M,\Delta t}-\mathbf b_\Delta|
        > \sqrt{\frac{2C_b}{M\eta}}
    \right)
    \le \frac{\eta}{2}$. 
\end{proof}

\subsection{Proof of the main error bound}

\begin{proof}[Proof of Theorem~\ref{thm:error_bound}]
We decompose the error into sampling error and discretization error:
\begin{equation}\label{eq:error_split_theta}
 |\widehat  \theta_{M,\Delta t}-\theta_*|
 \le |\widehat  \theta_{M,\Delta t}-\theta_\Delta|
    + |\theta_\Delta-\theta_*|.
\end{equation}
We first bound the discretization error $|\theta_\Delta-\theta_*|$, and then the sampling error $|\widehat  \theta_{M,\Delta t}-\theta_\Delta|$.

\noindent\textit{Step 1: bound on the discretization error.}
By Proposition~\ref{prop:joint_coercivity}, $\mathbf A_*$ is positive definite, hence $\mu_*:=\lambda_{\min}(\mathbf A_*)>0$. By Lemma~\ref{lem:discretization_bound}, $\|\mathbf A_\Delta-\mathbf A_*\| \le L_A \Delta t$ and $|\mathbf b_\Delta-\mathbf b_*| \le \frac{\sigma^2 L_b}{2}\Delta t$. 
Choose $\Delta_0 = \frac{\mu_*}{4L_A}$ so that $L_A\Delta_0\le \mu_*/4$. Then, for every $\Delta t\le \Delta_0$,
\[
     \lambda_{\min}(\mathbf A_\Delta)
     \ge \lambda_{\min}(\mathbf A_*)-\|\mathbf A_\Delta-\mathbf A_*\|
     \ge \frac{3\mu_*}{4},
\]
so $\mathbf A_\Delta$ is invertible and $\|\mathbf A_\Delta^{-1}\|\le \frac{4}{3\mu_*}$.
Using $\mathbf A_\Delta\theta_\Delta=\mathbf b_\Delta$ and $\mathbf A_*\theta_*=\mathbf b_*$,
\[
     \theta_\Delta-\theta_*
     = \mathbf A_\Delta^{-1}\Big((\mathbf b_\Delta-\mathbf b_*)-(\mathbf A_\Delta-\mathbf A_*)\theta_*\Big),
\]
and therefore
\[
     |\theta_\Delta-\theta_*|
     \le \|\mathbf A_\Delta^{-1}\|
     \Big(
          |\mathbf b_\Delta-\mathbf b_*|
          + \|\mathbf A_\Delta-\mathbf A_*\|\,|\theta_*|
     \Big)
     \le C_1 \Delta t, 
\]
where $C_1:=\frac{4}{3\mu_*}\left(\frac{\sigma^2L_b}{2}+L_A|\theta_*|\right)$.

\smallskip
\noindent\textit{Step 2: bound on the sampling error.}
Let $\Omega_{M,\eta}$ be the event on which the concentration bounds in
\eqref{eq:concentration_matrix_skeleton} in Lemma~\ref{lem:concentration} hold, and we have $ \mathbb P(\Omega_{M,\eta})\ge 1-\eta$.

Choose $M_0 = \frac{32 C_A}{\mu_*^2}$, then for every $M\ge M_0/\eta$, we have $ \sqrt{\frac{2C_A}{M\eta}} \le \frac{\mu_*}{4}$. Then, on $\Omega_{M,\eta}$,
\[
     \|\mathbf A_{M,\Delta t}-\mathbf A_\Delta\|
     \le \|\mathbf A_{M,\Delta t}-\mathbf A_\Delta\|_F
     \le \frac{\mu_*}{4},
\]
and, since $\lambda_{\min}(\mathbf A_\Delta)\ge 3\mu_*/4$, we have 
\[
     \lambda_{\min}(\mathbf A_{M,\Delta t})
     \ge \lambda_{\min}(\mathbf A_\Delta)-\|\mathbf A_{M,\Delta t}-\mathbf A_\Delta\|
     \ge \frac{\mu_*}{2}.
\]
Thus $\mathbf A_{M,\Delta t}$ is invertible and $ \|\mathbf A_{M,\Delta t}^{-1}\|\le \frac{2}{\mu_*}$. 

Using the equations $\mathbf A_{M,\Delta t}\widehat\theta_{M,\Delta t}=\mathbf b_{M,\Delta t}$ and $\mathbf A_\Delta\theta_\Delta=\mathbf b_\Delta$, we have 
\[
     \widehat\theta_{M,\Delta t}-\theta_\Delta
     = \mathbf A_{M,\Delta t}^{-1}
     \Big(
          (\mathbf b_{M,\Delta t}-\mathbf b_\Delta)
          -(\mathbf A_{M,\Delta t}-\mathbf A_\Delta)\theta_\Delta
     \Big).
\]
Also, Step 1 gives $|\theta_\Delta|\leq |\theta_*|+C_1\Delta_0 =: B_\theta$. Hence, on $\Omega_{M,\eta}$,
\[
\begin{aligned}
     |\widehat\theta_{M,\Delta t}-\theta_\Delta|
     &\le \|\mathbf A_{M,\Delta t}^{-1}\|
     \Big(
          |\mathbf b_{M,\Delta t}-\mathbf b_\Delta|
          + \|\mathbf A_{M,\Delta t}-\mathbf A_\Delta\|\,|\theta_\Delta|
     \Big) \\ 
     &\le \frac{2}{\mu_*} \left( \sqrt{\frac{2C_b}{M\eta}} + B_\theta\sqrt{\frac{2C_A}{M\eta}} \right) \le C_2 \frac{1}{\sqrt{M\eta}},
\end{aligned}
\]
with constant $C_2: =\frac{4\sqrt{2}}{\mu_*}\max\left\{\sqrt{C_b},\; \sqrt{C_A} (|\theta_*|+C_1\Delta_0)\right\}$.

\smallskip
Combining the two bounds with \eqref{eq:error_split_theta}, we obtain on $\Omega_{M,\eta}$
\[
     |\widehat  \theta_{M,\Delta t}-\theta_*|
     \le C_1\Delta t + C_2\frac{1}{\sqrt{M\eta}}. 
\]
Since $\mathbb P(\Omega_{M,\eta})\ge 1-\eta$, this proves \eqref{eq:parametric_error}.
\end{proof}

\begin{proof}[Proof of Corollary~\ref{cor:trapezoid_bound}]
We decompose the error as in~\eqref{eq:error_split_theta}. For the discretization error, Lemma~\ref{lem:trap_discretization} gives $\|\mathbf A_\Delta^{\mathrm{trap}}-\mathbf A_*\| \le L_A'(\Delta t)^2$ and $|\mathbf b_\Delta^{\mathrm{trap}}-\mathbf b_*| \le \frac{\sigma^2 L_b'}{2}(\Delta t)^2$. By the same perturbation argument as in Step~1 of Theorem~\ref{thm:error_bound},
\[
    |\theta_\Delta^{\mathrm{trap}}-\theta_*| \le C_1'(\Delta t)^2, \qquad C_1' = \frac{4}{3\mu_*}\Big(\frac{\sigma^2 L_b'}{2}+L_A'|\theta_*|\Big).
\]
The sampling error bound is identical to Step~2 of Theorem~\ref{thm:error_bound}, since the trapezoidal averages are still sample means $\frac1M\sum_m(\cdot)$ and the concentration bounds in Lemma~\ref{lem:concentration} apply. Then, Eq.~\eqref{eq:trap_error} follows by combining the two bounds. 
\end{proof}

\subsection*{Declaration of generative AI and AI-assisted technologies in the manuscript preparation process}
During the preparation of this work, the authors used AI agents for numerical computations and writing assistance. After using these tools, the authors reviewed and edited the content as needed and take full responsibility for the content of the published article.

% acknowledgments: thanks and funding
\section*{Acknowledgments}
FL was in part supported in part by the National Science Foundation under grants DMS-2238486 and DMS-2511283. FL would like to thank Quanjun Lang and Yuan Gao for helpful discussions on formulating the self-test loss function.

% Compact bibliography spacing
\let\OLDthebibliography\thebibliography
\renewcommand\thebibliography[1]{%
  \OLDthebibliography{#1}%
  \setlength{\parskip}{0pt}%
  \setlength{\itemsep}{2pt plus 0.5ex}%
}
\bibliographystyle{plain}
\bibliography{ref_weak_IPS_learning.bib}

\end{document}